\newtheorem{theorem}{Theorem}
\newtheorem{corollary}{Corollary}
\newtheorem{definition}{Definition}
\newtheorem{lemma}{Lemma}
\newtheorem{proposition}{Proposition}
\newtheorem{remark}{Remark}
\newcommand{\vtheta}{\boldsymbol{\theta}}
\newcommand{\vb}{\boldsymbol{b}}
\newcommand{\vi}{\boldsymbol{i}}
\newcommand{\vl}{\boldsymbol{l}}
\newcommand{\vx}{\boldsymbol{x}}
\newcommand{\vy}{\boldsymbol{y}}
\newcommand{\vz}{\boldsymbol{z}}
\newcommand{\vA}{\boldsymbol{A}}
\newcommand{\vW}{\boldsymbol{W}}
\newcommand{\vZ}{\boldsymbol{Z}}
\newcommand{\sF}{\mathbb{F}}
\newcommand{\sH}{\mathbb{H}}
\newcommand{\sN}{\mathbb{N}}
\newcommand{\sR}{\mathbb{R}}
\newcommand{\sV}{\mathbb{V}}
\newcommand{\sX}{\mathbb{X}}
\newcommand{\fE}{\mathcal{E}}
\newcommand{\fF}{\mathcal{F}}
\newcommand{\fG}{\mathcal{G}}
\newcommand{\fN}{\mathcal{N}}
\newcommand{\fO}{\mathcal{O}}
\newcommand{\fS}{\mathcal{S}}
\newcommand{\fX}{\mathcal{X}}
\newcommand{\fY}{\mathcal{Y}}
\journal{Journal of Complexity}
\begin{document}

\begin{frontmatter}



\title{On the Dimension-Free Approximation of Deep Neural Networks for Symmetric Korobov Functions}

\author[label1]{Yulong Lu}
\ead{yulonglu@umn.edu}

\affiliation[label1]{organization={School of Mathematics, University of Minnesota},
            city={Minneapolis},
            postcode={55455},
            state={MN},
            country={USA}}

\author[label2]{Tong Mao}
\ead{tong.mao@kaust.edu.sa}

\affiliation[label2]{organization={Computer, Electrical and Mathematical Science and Engineering Division,\\ King Abdullah University of Science and Technology, Thuwal 23955, Saudi Arabia}}

\author[label2]{Jinchao Xu}
\ead{jinchao.xu@kaust.edu.sa}

\author[label3]{Yahong Yang}
\ead{yyang3194@gatech.edu}

\affiliation[label3]{organization={School of Mathematics, Georgia Institute of Technology},
            city={Atlanta},
            postcode={30332},
            state={GA},
            country={USA}}
\begin{abstract}
Deep neural networks have been widely used as universal approximators for functions with inherent physical structures, including  permutation symmetry. In this paper,  
we construct symmetric deep neural networks to approximate symmetric Korobov functions  and prove that both the convergence rate and the constant prefactor scale at most polynomially with respect to the ambient dimension.  
This represents a substantial improvement over prior approximation guarantees that suffer from the curse of dimensionality. Building on these approximation bounds, we further derive a generalization-error rate for learning symmetric Korobov functions whose leading factors likewise avoid the curse of dimensionality.
\end{abstract}




\begin{keyword}
 Korobov spaces \sep symmetric neural networks \sep permutation invariant \sep curse of dimensionality



\end{keyword}

\end{frontmatter}



\section{Introduction}
In this paper, we study quantitative approximation of symmetric functions using deep neural networks (DNNs). A symmetric  $f: \boldsymbol{x}= (x_1,\cdots, x_d) \mapsto f(\boldsymbol{x})$ is said {\em symmetric} if $f(x_{\sigma(1)},\cdots, x_{\sigma(d)}) = f(x_1,\cdots, x_d)$ for any permutation $\sigma \in S(d)$. Symmetric functions are fundamental in science and engineering.  For example, in quantum mechanics,  wave functions are permutation-symmetric of the locations of the identical particles in boson systems. In quantum chemistry and  materials science \cite{behler2007generalized,zhang2018deep,dusson2022atomic}, the interatomic potentials are symmetric with respect to the  permutation of the atoms of the
same chemical species. In many applications like aforementioned cases, the symmetric functions involve large number of variables.  Approximation of these high dimensional functions often suffers from the fundamental issue of curse of dimensionality: the number of parameters of the approximator to achieve an $\epsilon$-accuracy grows at the order $O(e^{-c/\epsilon})$. 
Deep neural networks offer various inherent advantages over traditional approximation approaches with great flexibility of encoding symmetry and other invariant structures. A number of techniques have been proposed recently to construct neural network architectures with the symmetry constraint, including Deep Sets \cite{zaheer2017deep,qi2017pointnet,yang2025statistical}, self-attention networks \cite{vaswani2017attention,lee2019set}, architectures with Janossy pooling \cite{murphy2019janossy} and flow-based models \cite{li2024deep}. 

Despite the empirical success of  various DNNs approximating symmetric functions,  theoretical underpinnings of the symmetric neural networks  are far from complete. The recent approximation results \cite{zaheer2017deep,sannai2019universal,yarotsky2022universal,li2024deep,hutter2020representing}  established the universal approximation of symmetric DNNs for symmetric functions, but these results are qualitative in nature. The  paper \cite{han2022universal} obtained a quantitative approximation bound for  symmetric differentiable functions, but their bound is pessimistic and requires $O((\frac{C}{\epsilon})^d)$ to achieve an $\epsilon$ approximation error when $d\gg 1$. More recently, \cite{takeshita2025approximation} proves quantitative bounds for approximating column-symmetric polynomials with transformers; in principle, these results could be combined with quantitative approximations of symmetric polynomials to extend to broader symmetric classes, although a complete end-to-end argument still appears to be missing. In parallel, \cite{bachmayr2024polynomial} derives quantitative bounds for approximating an alternative notion of a symmetric Korobov class via symmetric polynomials, building on the atomic cluster expansion (ACE) framework \cite{drautz2019atomic,dusson2022atomic}. Our work differs from \cite{bachmayr2024polynomial} in several key aspects. First, their symmetric Korobov class is smooth and forms a strictly smaller subset than the (generally rougher) Korobov class considered here.  Second, we study approximation by symmetric neural networks, whereas \cite{bachmayr2024polynomial} focuses on symmetric polynomial bases. Third, at a technical level, instead of employing ACE to construct symmetric bases, we directly symmetrize (piecewise-linear) sparse-grid bases and then approximate these with ReLU or squared-ReLU networks. Crucially, this symmetrization reduces the number of symmetric basis functions required within a given sparse-grid expansion relative to their unsymmetrized counterparts, a saving that is essential for mitigating the curse of dimensionality in our error estimates.

\subsection{Notations}
		Let us summarize all basic notations used in the DNNs as follows:
		
		
		
		
		
		\textbf{1}. Assume $\boldsymbol{n}\in\sN_+^n$, then $f(\boldsymbol{n})=\fO(g(\boldsymbol{n}))$ means that there exists positive $C$ independent of $\boldsymbol{n},f,g$ such that $f(\boldsymbol{n})\le Cg(\boldsymbol{n})$ when all entries of $\boldsymbol{n}$ go to $+\infty$.
		
		\textbf{2}. We call the neural networks with activation function $\sigma$ as $\sigma$ neural networks. By slightly  abusing  notations, we define $\sigma:\sR^d\to\sR^d$ as $$\sigma(\boldsymbol{x})=\left[\begin{array}{c}
			\sigma(x_1) \\
			\vdots \\
			 \sigma(x_d)
		\end{array}\right]$$
        for any $\boldsymbol{x}=\left[x_1, \cdots, x_d\right]^T \in\sR^d$.
		
		\textbf{3}. Define $L,N\in\sN_+$, $N_0=d$ and $N_{L+1}=1$, $N_i\in\sN_+$ for $i=1,2,\ldots,L$, then a $\sigma$ neural network $\phi$ with the width $N$ and depth $L$ can be described as follows:\[\boldsymbol{x}=\tilde{\boldsymbol{h}}_0 \stackrel{\boldsymbol{W}_1, \boldsymbol{b}_1}{\longrightarrow} \boldsymbol{h}_1 \stackrel{\sigma}{\longrightarrow} \tilde{\boldsymbol{h}}_1 \ldots \stackrel{\boldsymbol{W}_L, \boldsymbol{b}_L}{\longrightarrow} \boldsymbol{h}_L \stackrel{\sigma}{\longrightarrow} \tilde{\boldsymbol{h}}_L \stackrel{\boldsymbol{W}_{L+1}, \boldsymbol{b}_{L+1}}{\longrightarrow} \phi(\boldsymbol{x})=\boldsymbol{h}_{L+1},\] where $\boldsymbol{W}_i\in\sR^{N_i\times N_{i-1}}$ and $\boldsymbol{b}_i\in\sR^{N_i}$ are the weight matrix and the bias vector in the $i$-th linear transform in $\phi$, respectively, i.e., $\boldsymbol{h}_i:=\boldsymbol{W}_i \tilde{\boldsymbol{h}}_{i-1}+\boldsymbol{b}_i, ~\text { for } i=1, \ldots, L+1$ and $\tilde{\boldsymbol{h}}_i=\sigma\left(\boldsymbol{h}_i\right),\text{ for }i=1, \ldots, L.$ In this paper, an DNN with the width $N$ and depth $L$, means
		(a) The maximum width of this DNN for all hidden layers less than or equal to $N$.
		(b) The number of hidden layers of this DNN less than or equal to $L$.

\subsection{Structure of the paper}
The paper is organized as follows. In Section~\ref{main results}, we present the main results. Theorem~\ref{thm:main-sym-korobov neural network} establishes an $H^1$ approximation bound for symmetric neural networks on symmetric Korobov spaces. Since both the target function and the symmetric neural network satisfy homogeneous Dirichlet boundary conditions, this bound immediately extends from the $H^1$ semi-norm to the full $H^1$ norm. The estimate completely overcomes the curse of dimensionality, including in the coefficient and logarithmic terms. Theorem~\ref{general thm} then provides generalization error bounds corresponding to this approximation result; these bounds are free from the curse of dimensionality and are nearly optimal with respect to the number of sample points.

Section~\ref{sec:nonsymm-korobov} studies approximation of functions in Korobov spaces without symmetry constraints and shows that, in this general setting, the rates retain terms with exponential dependence on the dimension. This motivates the use of symmetric structures. In Sections~\ref{sec:symm-korobov} and \ref{sec: thm1}, we consider symmetric Korobov spaces together with symmetric neural network architectures and show that this symmetry removes the detrimental dimensional dependence, and complete the proof of Theorem~\ref{thm:main-sym-korobov neural network}. In Section~\ref{sec:thm2}, we establish the generalization bounds and prove Theorem~\ref{general thm}.

\section{Set-up and main results}\label{main results}

We first recall the definition of Korobov spaces as follows. 

\begin{definition}[Korobov Space \citep{bungartz2004sparse,korobov2000coulomb}]
   For $2\le p\le +\infty$, denote $\Omega=[0,1]^d$, the Korobov spaces $X^{2,p}(\Omega)$ is defined as  \[X^{2,p}(\Omega)=\left\{f\in L^p(\Omega)\mid f|_{\partial\Omega}=0,D^{\boldsymbol{k}}f\in L^p(\Omega),|\boldsymbol{k}|_\infty\le 2\right\}\] with $|\boldsymbol{k}|_\infty=\max_{1\le j\le d}k_j$ and the norm  
   \[\|f\|_{X^{2, p}(\Omega)} :=\left(\sum_{0 \leq|\boldsymbol{k}|_\infty \leq 2}\left\|D^{\boldsymbol{k}} f\right\|_{L^p(\Omega)}^p\right)^{1 / p}.\]
   We also define the semi-norm
\begin{align}|f|_{2,p}&:=\left\|\frac{\partial^{2d} f}{\partial x_1^2\cdots\partial x_d^2}\right\|_{L^p(\Omega)},\quad \|f\|_E:=\left(\int_{\Omega} \sum_{j=1}^d\left(\frac{\partial f(\vx)}{\partial x_j}\right)^2 \mathrm{~d} \vx\right)^{1 / 2}.
\end{align}
\end{definition}
Note the fundamental difference between the Korobov space $X^{2,p}$ and the classical Sobolev space $W^{2,p}$: functions in the Korobov space $X^{2,p}$ admit $L^p$-weak mixed derivatives up to order $2d$, whereas Sobolev spaces $W^{2,p}$ only consider $L^p$-weak derivatives up to second order. On the other hand, functions in $X^{2,p}$ admit significantly lower regularity compared to those in $W^{2d,p}$ for dimensions $d > 1$. This difference arises because functions in $X^{2,p}$ are only required to have derivatives up to second order in each individual coordinate direction. Alternative definitions of Korobov spaces based on Fourier features can also be found in \cite{shen2010sparse, griebel2007sparse}. Although formulated differently, these definitions share the same fundamental idea: they construct the Korobov spaces such that functions in these spaces admit derivatives up to high order in each coordinate direction. These alternative definitions are equivalent to the one presented above. 

Recall that a function $f$ is said to be symmetric if
\begin{equation}
    f\left(x_{\tau(1)}, \ldots, x_{\tau(d)}\right) = f\left(x_1, \ldots, x_d\right) \quad \forall \boldsymbol{x} \in \Omega, \quad \tau \in S_d,\notag
\end{equation}
where $S_d$ denotes the  group of permutations on $d$ elements. For later reference, we define the symmetric Korobov spaces $X_{\text{sym}}^{2,p}$ as
\begin{equation}\notag
X_{\text{sym}}^{2,p}\left(\Omega\right) := \left\{ f \in X^{2,p}\left(\Omega\right) : f \text{ is symmetric} \right\}.
\end{equation}

\paragraph{\textbf{Approximation}} The first main result of the paper is the following approximation theorem. \begin{theorem}[Neural network approximation of symmetric Korobov functions]
\label{thm:main-sym-korobov neural network}
Let \(f\in X_{\mathrm{sym}}^{2,2}(\Omega)\) and \(n\in\mathbb N_+\). Then there
exist a permutation-invariant squared ReLU neural network
\(\Phi_{\mathrm{ReLU}_2}:\Omega\to\mathbb R\) and a permutation-invariant ReLU
neural network \(\Phi_{\mathrm{ReLU}}:\Omega\to\mathbb R\) such that, for
\(\kappa=1,2\),
\begin{equation}
\|f-\Phi_{\mathrm{ReLU}_\kappa}\|_E
\le
C d
\left(\frac{5}{12}\right)^d
2^{-n}|f|_{2,2}.
\label{eq:main-nn-error}
\end{equation}
Moreover, both networks vanish on \(\partial\Omega\).

\medskip
\noindent
\emph{(i) Squared ReLU network.}
The squared ReLU network \(\Phi_{\mathrm{ReLU}_2}\) can be chosen with
\begin{equation}
N_2 \le C_s d^3 2^{n+d-1},\qquad
L_2 \le \lfloor\log_2 d\rfloor+2,\qquad
\mathcal M_2 \le C_s d^3 2^{n+d-1}\log_2 d,
\label{eq:relu2-complexity}
\end{equation}
where \(N_2\), \(L_2\), and \(\mathcal M_2\) denote its width, depth, and total
number of nonzero parameters, respectively. Here \(C_s>0\) is an absolute
constant independent of \(d\) and \(n\). Equivalently, in terms of
\(\mathcal M_2\),
\begin{equation}
\|f-\Phi_{\mathrm{ReLU}_2}\|_E
\le
C d^4
\left(\frac{5}{6}\right)^d
(\log_2 d)\,
\mathcal M_2^{-1}|f|_{2,2}.
\label{eq:main-square-relu-error-M}
\end{equation}

\medskip
\noindent
\emph{(ii) ReLU network.}
The ReLU network \(\Phi_{\mathrm{ReLU}}\) can be chosen with
\begin{equation}
N_1
\le
C_s d^2 2^{n+d},\qquad
L_1
\le
C_s d^2(1+n),\qquad
\mathcal M_1
\le
C_s d^2 2^{n+d}
\left(
1+\frac{n}{d\log(2d+1)}
\right).
\label{eq:relu-complexity}
\end{equation}
Equivalently, in terms of \(\mathcal M_1\),
\begin{equation}
\|f-\Phi_{\mathrm{ReLU}}\|_E
\le
C d^3
\left(\frac56\right)^d
\frac{\log(2\mathcal M_1)}{\mathcal M_1}
|f|_{2,2}.
\label{eq:main-relu-error-M-simplified}
\end{equation}

\medskip
\noindent
The dimension-dependent prefactors in
\eqref{eq:main-square-relu-error-M} and
\eqref{eq:main-relu-error-M-simplified} do not grow exponentially in \(d\).
\end{theorem}

The large parameters of above neural networks are localized. In the squared ReLU construction,
only the innermost one-dimensional feature approximation blocks depend on the
basis-level accuracy. In the ReLU construction, all parameters are bounded by
\(
C d^d2^{n+d-1},
\)
and the factor \(d^d\) appears only in the final output layers of the product
approximation blocks. The detail can be found in Propositions~\ref{prop:square-relu-sym-basis} and \ref{prop:sym-basis-relu}.

\begin{remark}
Since both the symmetric neural network approximation and the target function satisfy 
the zero boundary condition, the $H^1$ semi-norm is equivalent to the full 
$H^1$ norm on $H^1_0(\Omega)$. Specifically, for $\Omega = [0,1]^d$, the 
Poincar\'{e} inequality
\[
\|u\|_{L^2(\Omega)} \leq \frac{1}{\pi\sqrt{d}}\|\nabla u\|_{L^2(\Omega)}
\]
holds for all $u \in H^1_0([0,1]^d)$, where the constant $\frac{1}{\pi\sqrt{d}}$ 
is sharp and follows from separation of variables. Consequently,
\[
\|u\|_{H^1(\Omega)} \leq \sqrt{1 + \frac{1}{\pi^2 d}}\, |u|_{H^1(\Omega)},
\]
and the estimate in Theorem~\ref{thm:main-sym-korobov neural network} 
therefore holds with respect to the full $H^1$ norm as well.
\end{remark}

Theorem~\ref{thm:main-sym-korobov neural network} shows that a symmetric
squared ReLU neural network can approximate a symmetric Korobov function on
$\Omega=[0,1]^d$ with rate $\mathcal{O}(m^{-1})$. This rate is
dimension-free in the sense that it contains no factor of the form
$(\log m)^d$, and the leading constant does not grow exponentially with
$d$, which is better than \cite{yang2024near,mao2022approximation,montanelli2019new,suzuki2018adaptivity,yang2026approximation}. We note, however, that this does not mean that all dimension-dependent
effects are completely removed. In the present construction, the network size
$m$ is required to exceed a threshold of order $2^{d-1}$, which comes from
Glynn's formula \cite{glynn2010permanent} used to reduce the symmetrized basis complexity from $d!$ to
$2^{d-1}$. In addition, the bounds on the network parameters may still depend
on $d$, due to the use of ReLU or squared ReLU activations for implementing
multiplication and local basis functions. Thus, the term ``dimension-free''
refers to the approximation error bound itself, namely the rate in $m$ and
the leading prefactor, rather than to the minimal admissible network size or
the parameter magnitudes.

The proof of Theorem \ref{thm:main-sym-korobov neural network} builds upon sparse grid approximation to Korobov functions, which has been well-studied in the literature and will be recalled in the next section. At a high level, our proof follows by first showing that  symmetric Korobov functions achieve an approximation rate of $\mathcal{O}\big(d\big(\frac{5}{12}\big)^{d-1} m^{-1}\big)$ using $m$ symmetric sparse-grid basis functions. In addition, we show that  each symmetric sparse-grid  basis  can be accurately approximated using a squared ReLU or ReLU neural network with \(\mathcal{O}(2^{d-1})\) parameters. These two facts jointly lead to the desired estimate.


\paragraph{\textbf{Generalization}} Next, we state a generalization result on the proposed neural networks for learning the gradient of a symmetric Korobov function from a finite number of samples. To this end, let $\rho$ be a probability measure on the product space
\(
  \mathcal Z := \Omega \times \mathcal Y
\)
with $\mathcal Y\subset\mathbb R^d$.  
Assume for simplicity that the $\vx$–marginal of $\rho$ is the uniform distribution on $\Omega$ and that the conditional mean associated to the conditional distribution $\rho(\mathrm d \vy \,|\,\vx)$ is given by a gradient field, namely there exists a function $f_{\rho}:\Omega \mapsto \mathbb{R}$ such that
\[
  \nabla f_{\rho}(\vx)
  \;=\;
  \int_{\mathcal Y} \vy \,\rho(\mathrm d \vy \,|\,\vx),
  \qquad
  f_{\rho}\in X^{2,2}_{\mathrm{sym}}(\Omega).
\]
Assume that 
\(
  \mathcal S
  =\bigl\{(\vx_{j},\vy_{j})\bigr\}_{j=1}^{M}\subset\mathcal Z^{M}
\)
are i.i.d. samples distributed according to $\rho$.  
Define the empirical  loss of a hypothesis gradient field $\nabla f$ associated to a symmetric function $f$ by 
\begin{equation}
  \mathcal E_{\mathcal S}(f)
  :=\frac1M\sum_{j=1}^{M}\bigl(\nabla f(\vx_{j})-\vy_{j}\bigr)^{2}.\label{equ:loss}
\end{equation}
Moreover, we define the symmetric neural network class
\begin{align*}
\mathcal F_{m,L,C_*}^1:=
    \Bigl\{
    \phi &= \sum_{i=1}^{m} s_i \,\Bigm|\,
    \|\nabla \phi(\vx)\|_{L^\infty(\Omega)} \le L, \ s_i \text{ is a ReLU network } \text{of width } C_*d^{2}
    \text{ and depth } C_*d^2(\log m+d\log d)
  \Bigr\},\\\mathcal F_{m,L,C_*}^2
:=\Bigl\{
    \phi &= \sum_{i=1}^{m} s_i \,\Bigm|\,
    \|\nabla \phi(\vx)\|_{L^\infty(\Omega)} \le L, \ s_i \text{ is a squared ReLU network } \text{of width } C_*d^{2}
    \text{ and depth } \lfloor \log_{2} d \rfloor + 2
  \Bigr\}
\end{align*}
and
\[
\partial_k \fF_{m,L,C_*}^\kappa
:= \{\partial_k \phi \mid \phi \in \fF_{m,L,C_*}^\kappa\}, \qquad k=1,\dots,d,\quad\kappa=1,2.
\]
The classes $\fF_{m,L,C_*}^\kappa$ denote the neural-network hypothesis space
that attains the approximation rate stated in
Theorem~\ref{thm:main-sym-korobov neural network} with an appropriate choice
of $L$; see Corollary~\ref{cor:gen-error-form} for details.

Let $f_{\mathcal S,\fF_{m,L,C_*}^\kappa}$ be an empirical risk minimizer over $\fF_{m,L,C_*}^\kappa$, i.e.  
$$
  f_{\mathcal S,\fF_{m,L,C_*}^\kappa}
  \in \arg\min_{f\in\fF_{m,L,C_*}^\kappa}\mathcal E_{\mathcal S}(f)
$$
Our second main theorem stated below bounds the generalization error
\[
  \mathbb E\bigl\|f_{\mathcal S,\fF_{m,L,C_*}^\kappa}-f_{\rho}\bigr\|_{E},
\]
where the expectation is taken with respect to the joint law  of the samples $S$. 
\begin{theorem}\label{general thm}
Let \(\kappa\in\{1,2\}\), \(f_\rho\in X^{2,2}_{\operatorname{sym}}(\Omega)\), and assume that
\(\|\nabla f_\rho\|_{L^\infty(\Omega)}\le B\). Suppose also that
\(\mathcal Y\in[-B,B]^d\) almost surely. Assume that the approximation error
over \(\fF_{m,L,C_*}^\kappa\) achieves the rate from
Theorem~\ref{thm:main-sym-korobov neural network}, namely
\(\inf_{f\in\fF_{m,L,C_*}^\kappa}\|f-f_\rho\|_E
\le C m^{-1}|f_\rho|_{2,2}\). Then, for \(M=\lceil m^3(\log m)^4\rceil\),
\[
\mathbb E
\left\|
f_{\mathcal S,\fF_{m,L,C_*}^\kappa}-f_\rho
\right\|_E^2
\le
C
\left(
\frac{(\log M)^4}{M}
\right)^{2/3}.
\]
Here \(C>0\) depends at most polynomially on \(d\), \(B\), \(L\), \(C_*\), and
\(|f_\rho|_{2,2}\).
\end{theorem}
\begin{remark}
Note that the bounded gradient assumption \(
  \|\nabla f_\rho\|_{L^\infty(\Omega)} \le L
\) in Theorem~\ref{general thm} is reasonable thanks to the Sobolev embedding
\(X^{2,2}(\Omega) \hookrightarrow C^{1}(\Omega)\)  
(see \cite{schmeisser1987topics}).  
It is therefore natural to assume that the observed derivative data are uniformly bounded, and to enforce the same bound on the derivatives of the neural network that approximates \(f_\rho\).
\end{remark}

Compared with the bound in \cite{suzuki2018adaptivity}, our estimate improves the logarithmic factor while preserving the same polynomial rate in $M$. More precisely, \cite[Theorem 3]{suzuki2018adaptivity} yields the generalization error
\[
M^{-\frac{2s}{2s+1}}(\log M)^{d+1}
\]
when choosing $u = 1 - 1/q = 1/2$.
In our setting it is natural to take $s=1$, since the target function has Sobolev regularity $2$ and the norm in which the error is measured has order $1$, so the difference of smoothness indices is exactly one. Thus our rate is nearly optimal in the polynomial term, while the logarithmic factor is significantly sharper: our bound has no explicit dependence on $d$ in the logarithmic term.

This improvement comes from two key ingredients. First, the approximation error in Theorem~\ref{thm:main-sym-korobov neural network} does not contain any extra logarithmic factor, due to the use of an energy based sparse grid construction, which is defined in \eqref{eq:X}. Second, the squared ReLU activation allows us to keep the network depth independent of the target accuracy, which avoids additional logarithmic contributions from depth scaling.
Lastly, our leading constant depends only polynomially on the dimension $d$—a consequence of the symmetric structure built into the approximation—which eliminates the usual exponential blow‑up and thus avoids the curse of dimensionality.

\begin{remark}
The learning setup in \eqref{equ:loss} differs from standard neural network regression problems. Here, the training data encode information about the gradient of the target function rather than the function itself. This formulation underpins several important applications. One prominent example  is the score-based diffusion model \cite{song2020score}, where the training proceeds by minimizing the denoising score-matching loss:
\begin{equation}
\min_{\mathbf{s}} \sum_{i=1}^N \sigma_i^2
\mathbb{E}_{p_{\text{data}}(\vx)}
\mathbb{E}_{p_{\sigma_i}(\tilde{\vx}\mid \vx)}
\Big[ \big\| \mathbf{s}(\tilde{\vx},\sigma_i)
- \nabla_{\tilde{\vx}} \log p_{\sigma_i}(\tilde{\vx}\mid \vx)
\big\|_2^2 \Big],
\end{equation}
Note that the target score function $\mathbf{s}$ is exactly given by a gradient field $\nabla f$. In applications such as material design \cite{zeni2025generative} and drug discovery \cite{schneuing2024structure}, the relevant score function is again a gradient field whose underlying potential energy is invariant under permutations of atomic or molecular configurations.
Similar gradient-based losses also arise in learning dynamical systems and interaction laws from trajectory data \cite{lu2019nonparametric,li2021identifiability,feng2022learning}, where one seeks interaction forces derived from permutation-invariant interaction potentials.
Our results provide statistical theoretical guarantees for learning the gradient fields arising from those applied problems with structure-preserving neural networks. 
\end{remark}

\section{Sparse grid approximation}\label{sec:nonsymm-korobov}

In this section, we recall some useful facts on sparse grid approximation to Korobov functions, which serve as important ingredients in the proof of our neural network approximation result. Majority of the results presented here can be found in \cite{bungartz2004sparse}.  While this approach has been explored in several prior works  \cite{barthelmann2000high,montanelli2019new,mao2022approximation,suzuki2018adaptivity,yang2024near}, our analysis requires a symmetrized sparse-grid basis and a quantitative evaluation of the benefits of symmetry. We begin by recalling the definition of the sparse grid.

For any $f\in X^{2,p}(\Omega)$, it takes the following representation:  \[f(\boldsymbol{x})=\sum_{\boldsymbol{l}\in\sN^d_+}\sum_{\boldsymbol{i}\in\boldsymbol{i}_{\boldsymbol{l}}}v_{\boldsymbol{l},\boldsymbol{i}}\phi_{\boldsymbol{l},\boldsymbol{i}}(\boldsymbol{x}),\] where \begin{equation}\label{eq:i}
\boldsymbol{i}_{\boldsymbol{l}}:=\left\{\boldsymbol{i} \in\sN^d: \boldsymbol{1} \leq \boldsymbol{i} \leq 2^{\boldsymbol{l}}-\boldsymbol{1}, i_j \text{ odd for all } 1 \leq j \leq d\right\},
\end{equation} and $v_{\boldsymbol{l},\boldsymbol{i}}=\int_{\Omega}\phi_{\boldsymbol{l},\boldsymbol{i}}(\boldsymbol{x})\cdot D^{\boldsymbol{2}} f(\vx)\,\mathrm{d} \vx$ and $\boldsymbol{2}=(2,2,\ldots,2)$. Each sparse grid basis function \(\phi_{\boldsymbol{l},\boldsymbol{i}}\)is a tensor–product of one-dimensional hat functions and  is centered at the dyadic grid point  
\[
  \boldsymbol{x}_{\boldsymbol{l},\boldsymbol{i}}
  \;=\;
  \bigl(x_{l_{1},i_{1}},\dots,x_{l_{d},i_{d}}\bigr)
  \;=\;
  \boldsymbol{i}\odot 2^{-\boldsymbol{l}},
  \qquad 
  \boldsymbol{x}\odot\boldsymbol{y}
  :=(x_{1}y_{1},\dots,x_{d}y_{d}).
\]
We abbreviate the mesh widths by  
\[
  (h_{l_{1}},\dots,h_{l_{d}})
  \;:=\;
  2^{-\boldsymbol{l}}.
\]
 In a piecewise linear setting, the fundamental choice for a 1D basis function is the standard hat function $\phi(x)$, defined as:
\[
\phi(x):= \begin{cases}1-|x|, & \text { if } x \in[-1,1] \\ 0, & \text { otherwise }\end{cases}
\]
The standard hat function $\phi(x)$ can be utilized to generate any $\phi_{l_j, i_j}\left(x_j\right)$ with support $\left[x_{l_j, i_j}-h_{l_j}, x_{l_j, i_j}+h_{l_j}\right]$ through dilation and translation:
\[
\phi_{l_j, i_j}\left(x_j\right):=\phi\left(\frac{x_j-i_j \cdot h_{l_j}}{h_{l_j}}\right) .
\]
The resulting 1D basis functions serve as inputs for the tensor product construction, yielding a suitable piecewise $d$-linear basis function at each grid point $\boldsymbol{x}_{\boldsymbol{l}, \boldsymbol{i}}$
\[
\phi_{\boldsymbol{l}, \boldsymbol{i}}(\boldsymbol{x}):=\prod_{j=1}^d \phi_{l_j, i_j}\left(x_j\right).
\]



  In the approximation results of \cite{montanelli2019new,mao2022approximation,suzuki2018adaptivity,yang2024near}, the term $(\log_2 m)^{d-1}$ consistently appears in the error bounds, where $m$ denotes the number of network parameters and $d$ the input dimension. Although the dominant polynomial term in $m$ does not exhibit the curse of dimensionality, the logarithmic factor does. Consequently, these results do not fully overcome the curse of dimensionality. This limitation arises because their approximation results builds upon the following lemma:

\begin{lemma}[\cite{bungartz2004sparse}, Lemma 3.13]\label{err-first}
For any $f \in X^{2, 2}(\Omega)$, define
\[
f_n^{(1)}(\boldsymbol{x}) = \sum_{|\boldsymbol{l}|_1 \leq n+d-1} \sum_{\boldsymbol{i} \in \boldsymbol{i}_{\boldsymbol{l}}} v_{\boldsymbol{l}, \boldsymbol{i}} \phi_{\boldsymbol{l}, \boldsymbol{i}}(\boldsymbol{x}),
\]with $v_{\boldsymbol{l},\boldsymbol{i}}=\int_{\Omega}\phi_{\boldsymbol{l},\boldsymbol{i}}(\boldsymbol{x})\cdot D^{\boldsymbol{2}} f(\vx)\,\mathrm{d} \vx$ and $\boldsymbol{2}=(2,2,\ldots,2)$, the following approximation error estimates hold:
\begin{align}
\|f - f_n^{(1)}\|_{E} &= \mathcal{O}\left(M^{-1}(\log_2M)^{d-1}\right),
\end{align}
where $M$ is the number of the basis in $\sV_n:=\{\boldsymbol{l} \in \mathbb{N}^d \mid|\boldsymbol{l}|_1 \leq n+d-1\}$, and $\|\cdot\|_E$ is the energy norm defined in Theorem 1. 
\end{lemma}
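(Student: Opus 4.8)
The plan is to prove Lemma~\ref{err-first} by working in the hierarchical-subspace decomposition associated to the sparse grid and estimating the energy norm of the tail that is discarded when we restrict from all levels $\boldsymbol l\in\mathbb N_+^d$ to the sparse index set $\sV_n=\{\boldsymbol l:|\boldsymbol l|_1\le n+d-1\}$. Write $W_{\boldsymbol l}=\operatorname{span}\{\phi_{\boldsymbol l,\boldsymbol i}:\boldsymbol i\in\boldsymbol i_{\boldsymbol l}\}$ for the hierarchical increment at level $\boldsymbol l$, so that $f=\sum_{\boldsymbol l} f_{\boldsymbol l}$ with $f_{\boldsymbol l}=\sum_{\boldsymbol i\in\boldsymbol i_{\boldsymbol l}} v_{\boldsymbol l,\boldsymbol i}\phi_{\boldsymbol l,\boldsymbol i}\in W_{\boldsymbol l}$, and $f-f_n^{(1)}=\sum_{|\boldsymbol l|_1>n+d-1} f_{\boldsymbol l}$. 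The first step is to record the two standard per-level estimates for functions in the Korobov class: the coefficient/$L^2$-type bound $\|f_{\boldsymbol l}\|_{L^2}\lesssim 2^{-2|\boldsymbol l|_1}|f|_{2,2}$ (coming from the integral representation $v_{\boldsymbol l,\boldsymbol i}=\int \phi_{\boldsymbol l,\boldsymbol i}D^{\boldsymbol 2}f$ together with the disjoint-support structure of the $\phi_{\boldsymbol l,\boldsymbol i}$ within a level and $\|\phi_{\boldsymbol l,\boldsymbol i}\|_{L^1}\sim 2^{-|\boldsymbol l|_1}$), and the corresponding energy bound $\|f_{\boldsymbol l}\|_E^2\lesssim \sum_{j} h_{l_j}^{-2}\cdot 2^{-4|\boldsymbol l|_1}|f|_{2,2}^2$, i.e.\ roughly $\|f_{\boldsymbol l}\|_E\lesssim 2^{-|\boldsymbol l|_1}\bigl(\sum_j 2^{2l_j}\bigr)^{1/2}|f|_{2,2}$ after keeping track of the constants; these are exactly the estimates assembled in \cite{bungartz2004sparse} and I would cite them rather than re-derive them.

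The second step is the summation over the discarded levels. Using near-orthogonality of the hierarchical increments in the energy inner product (the key structural fact behind sparse grids; rigorously one gets a strengthened Cauchy–Schwarz / Bernstein-type bound so that $\|\sum_{\boldsymbol l\in T} f_{\boldsymbol l}\|_E^2\lesssim \sum_{\boldsymbol l\in T}\|f_{\boldsymbol l}\|_E^2$ up to an absolute constant), we reduce to estimating $\sum_{|\boldsymbol l|_1> n+d-1}\|f_{\boldsymbol l}\|_E^2$. Plugging in the per-level energy bound, this becomes a sum of the form $|f|_{2,2}^2\sum_{|\boldsymbol l|_1> n+d-1} 4^{-|\boldsymbol l|_1}\sum_{j=1}^d 4^{l_j}$; splitting off the $j$-sum and re-indexing by $s=|\boldsymbol l|_1$, the number of multi-indices with $|\boldsymbol l|_1=s$ is $\binom{s-1}{d-1}$, and the weighted inner sum contributes a bounded factor, so the whole thing is $\lesssim |f|_{2,2}^2\sum_{s> n+d-1}\binom{s-1}{d-1}4^{-(s-n-d+1)}\cdot(\text{poly in }s)$, which for fixed $d$ sums to $\mathcal O\bigl(n^{d-1}4^{-n}\bigr)$; hence $\|f-f_n^{(1)}\|_E=\mathcal O\bigl(2^{-n} n^{(d-1)/2}\bigr)$.

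The third step is the bookkeeping that converts $n$ into the number of basis functions $M=\#\{(\boldsymbol l,\boldsymbol i):|\boldsymbol l|_1\le n+d-1,\ \boldsymbol i\in\boldsymbol i_{\boldsymbol l}\}$. Since $|\boldsymbol i_{\boldsymbol l}|=\prod_j 2^{l_j-1}=2^{|\boldsymbol l|_1-d}$, one has $M=\sum_{s=d}^{n+d-1}\binom{s-1}{d-1}2^{s-d}$, and the dominant term is the top one, giving $M\sim 2^{n}\cdot n^{d-1}/(d-1)!$ up to constants — more precisely $M=\Theta(2^n n^{d-1})$ for fixed $d$. Inverting, $2^{-n}=\mathcal O\bigl(M^{-1}(\log_2 M)^{d-1}\bigr)$ and the extra $n^{(d-1)/2}$ from the energy estimate is absorbed (indeed improves) into this, after being re-expressed via $n\sim\log_2 M$. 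Combining the three steps yields $\|f-f_n^{(1)}\|_E=\mathcal O\bigl(M^{-1}(\log_2 M)^{d-1}\bigr)$, as claimed. The main obstacle, and the one place where one must be careful rather than hand-wavy, is the near-orthogonality / strengthened Cauchy–Schwarz step that legitimizes replacing $\|\sum f_{\boldsymbol l}\|_E^2$ by $\sum\|f_{\boldsymbol l}\|_E^2$ uniformly in $d$; this is precisely the content of the relevant lemmas in \cite{bungartz2004sparse}, so in the write-up I would invoke them explicitly and then only need to execute the elementary multi-index summations in steps two and three.
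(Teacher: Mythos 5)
Your overall strategy (hierarchical decomposition $f=\sum_{\boldsymbol{l}}f_{\boldsymbol{l}}$, per-level Korobov estimates, tail summation over $|\boldsymbol{l}|_1>n+d-1$, then conversion from $n$ to the number of basis functions $M$) is exactly the route of \cite{bungartz2004sparse}, which the paper simply cites rather than proves; but as written your argument does not reach the stated bound, and the final bookkeeping step is incorrect. First, a consistency slip: your squared per-level bound $\|f_{\boldsymbol{l}}\|_E^2\lesssim\bigl(\sum_j 4^{l_j}\bigr)4^{-2|\boldsymbol{l}|_1}|f|_{2,2}^2$ is the right one, but you restate it as $\|f_{\boldsymbol{l}}\|_E\lesssim 2^{-|\boldsymbol{l}|_1}\bigl(\sum_j 4^{l_j}\bigr)^{1/2}|f|_{2,2}$ and then sum the weight $4^{-|\boldsymbol{l}|_1}\sum_j 4^{l_j}$; with that weight the level sums do not decay at all (for $|\boldsymbol{l}|_1=s$ the quantity $\sum_j 4^{l_j}$ can be as large as $4^{s-d+1}$), so the tail series in your second step does not converge as displayed, and the decaying factor $4^{-(s-n-d+1)}$ you insert has no justification. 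Second, and more seriously: even after correcting the weight to $16^{-s}\sum_j 4^{l_j}$, your method of bounding each level-set sum by the cardinality $\binom{s-1}{d-1}$ times the maximum of $\sum_j 4^{l_j}$ is lossy and only yields $\|f-f_n^{(1)}\|_E=\mathcal{O}\bigl(2^{-n}n^{(d-1)/2}\bigr)$. That extra factor cannot be ``absorbed'': since $M=\Theta(2^n n^{d-1})$, one has $M^{-1}(\log_2 M)^{d-1}=\Theta(2^{-n})$ with no slack, so your chain of estimates proves only $\mathcal{O}\bigl(M^{-1}(\log_2 M)^{3(d-1)/2}\bigr)$, which is strictly weaker than the lemma for every $d\ge 2$.

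The repair is the sharper level-set summation that is the actual content of the reference proof: by symmetry and a geometric-series computation, $\sum_{|\boldsymbol{l}|_1=s}\bigl(\sum_j 4^{l_j}\bigr)^{1/2}\le\sum_{|\boldsymbol{l}|_1=s}\sum_j 2^{l_j}=d\sum_{l_1=1}^{s-d+1}2^{l_1}\binom{s-l_1-1}{d-2}\le d\,2^{s}$, i.e.\ the sum is dominated by the multi-indices with one nearly maximal coordinate rather than by the cardinality of the level set. Hence the plain triangle inequality already gives $\sum_{|\boldsymbol{l}|_1>n+d-1}\|f_{\boldsymbol{l}}\|_E\lesssim |f|_{2,2}\sum_{s\ge n+d}4^{-s}\,d\,2^{s}=\mathcal{O}(2^{-n})$ with no polynomial factor in $n$; no near-orthogonality is needed (exact energy-orthogonality of the increments in fact fails for $d>1$, though a strengthened Cauchy--Schwarz inequality holds and would also suffice for fixed $d$). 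The logarithm in the lemma then comes solely from your third step, the conversion $2^{-n}=\Theta\bigl(M^{-1}(\log_2 M)^{d-1}\bigr)$, which is fine as you stated it. With the level-set summation fixed in this way, your outline coincides with the proof of Lemma 3.13 in \cite{bungartz2004sparse}.
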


Notice that by construction, our symmetric neural network approximator $f_n^{(1)}$ vanishes on the boundary of $\Omega$ as the target function $f$. We chose to use the energy norm  \(\|\cdot\|_E\) as it is equivalent to the $H^1$-norm in $H_0^1(\bar{\Omega})$ owing to the Poincar\'e's inequality.  To eliminate the curse of dimensionality, our first objective is to remove the $(\log_2 M)^{\,d-1}$ factor from error. We achieve this by replacing the conventional total-degree index set \(\sV_n\)
with the more economical {energy-based sparse-grid} index set described in~\cite[Theorem~3.10]{bungartz2004sparse}. This refined choice of multi-indices yields a markedly tighter complexity bound and enables us to obtain an improved approximation rate.

\begin{lemma}[\cite{bungartz2004sparse}, Theorem 3.10]\label{err-second}
Define
\[
f_n^{(2)}(\boldsymbol{x}) = \sum_{|\vl|_1-\frac{1}{5} \cdot \log _2\left(\sum_{j=1}^d 4^l_j\right) \leq(n+d-1)-\frac{1}{5} \cdot \log _2\left(4^n+4 d-4\right)} \sum_{\boldsymbol{i} \in \boldsymbol{i}_{\boldsymbol{l}}} v_{\boldsymbol{l}, \boldsymbol{i}} \phi_{\boldsymbol{l}, \boldsymbol{i}}(\boldsymbol{x}).
\]
Then for any $f \in X^{2, 2}(\Omega)$, the following approximation error estimate holds:
\begin{align}
\|f - f_n^{(2)}\|_{E} \le \frac{2 \cdot d \cdot|f|_{2, 2}}{\sqrt{3} \cdot 6^{d-1}} \cdot\left(\frac{1}{2}+\left(\frac{5}{2}\right)^{d-1}\right)2^{-n}.\notag
\end{align}
\end{lemma}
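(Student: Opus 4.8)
The result is essentially \cite[Theorem~3.10]{bungartz2004sparse}, so the plan is to run the hierarchical-surplus argument of sparse-grid theory and keep track of the constants. First I would use the hierarchical decomposition $f=\sum_{\boldsymbol{l}\in\sN_+^d}f_{\boldsymbol{l}}$ with $f_{\boldsymbol{l}}:=\sum_{\boldsymbol{i}\in\boldsymbol{i}_{\boldsymbol{l}}}v_{\boldsymbol{l},\boldsymbol{i}}\phi_{\boldsymbol{l},\boldsymbol{i}}$, so that $f-f_n^{(2)}=\sum_{\boldsymbol{l}\in\mathcal I_n^{\,c}}f_{\boldsymbol{l}}$, where $\mathcal I_n$ is the energy-based index set from the statement (i.e.\ the set of $\boldsymbol{l}$ with $|\boldsymbol{l}|_1-\tfrac15\log_2(\sum_{j=1}^d 4^{l_j})\le (n+d-1)-\tfrac15\log_2(4^n+4d-4)$). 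Everything then reduces to (i) an energy-norm estimate for each hierarchical component $f_{\boldsymbol{l}}$, and (ii) summing these estimates over the discarded set $\mathcal I_n^{\,c}$.

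For step (i), I would bound $\|f_{\boldsymbol{l}}\|_E$ by combining three facts: (a) the integral representation of the surpluses $v_{\boldsymbol{l},\boldsymbol{i}}$ in terms of $D^{\boldsymbol 2}f$, the Cauchy--Schwarz inequality, and the pairwise-disjoint interiors of $\{\mathrm{supp}\,\phi_{\boldsymbol{l},\boldsymbol{i}}:\boldsymbol{i}\in\boldsymbol{i}_{\boldsymbol{l}}\}$, which give $\sum_{\boldsymbol{i}\in\boldsymbol{i}_{\boldsymbol{l}}}v_{\boldsymbol{l},\boldsymbol{i}}^2\le C^{d}\,2^{-3|\boldsymbol{l}|_1}\,|f|_{2,2}^2$ for an absolute constant $C$; (b) the exact one-dimensional norms $\|\phi_{l,i}\|_{L^2}^2=\tfrac23 h_l$ and $\|\phi_{l,i}'\|_{L^2}^2=2h_l^{-1}$, which together with the tensor-product structure $\partial_j\phi_{\boldsymbol{l},\boldsymbol{i}}=\phi_{l_j,i_j}'(x_j)\prod_{k\ne j}\phi_{l_k,i_k}(x_k)$ yield $\|\partial_j f_{\boldsymbol{l}}\|_{L^2}^2=3\cdot 4^{l_j}\|f_{\boldsymbol{l}}\|_{L^2}^2$, hence $\|f_{\boldsymbol{l}}\|_E^2=3\big(\sum_{j=1}^d 4^{l_j}\big)\|f_{\boldsymbol{l}}\|_{L^2}^2$; and (c) $\|f_{\boldsymbol{l}}\|_{L^2}^2=\big(\prod_j\tfrac23 h_{l_j}\big)\sum_{\boldsymbol{i}}v_{\boldsymbol{l},\boldsymbol{i}}^2$. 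Combining these yields a per-component bound of the shape $\|f_{\boldsymbol{l}}\|_E^2\le C^{d}\big(\sum_{j=1}^d 4^{l_j}\big)2^{-4|\boldsymbol{l}|_1}|f|_{2,2}^2$, which I would track with explicit constants to produce the prefactor $\tfrac{2d}{\sqrt3\,6^{d-1}}$.

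For step (ii), I would start from $\|f-f_n^{(2)}\|_E\le\sum_{\boldsymbol{l}\in\mathcal I_n^{\,c}}\|f_{\boldsymbol{l}}\|_E$ (the triangle inequality already converges here, because the per-component bound decays geometrically; constants can be sharpened using that one-dimensional hierarchical subspaces are $H^1$-orthogonal, so $\partial_j f_{\boldsymbol{l}}\perp_{L^2}\partial_j f_{\boldsymbol{l}'}$ whenever $l_j\ne l_j'$). Plugging in step (i), it remains to bound $\sum_{\boldsymbol{l}\in\mathcal I_n^{\,c}}\big(\sum_j 4^{l_j}\big)^{1/2}2^{-2|\boldsymbol{l}|_1}$. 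Writing $s(\boldsymbol{l}):=|\boldsymbol{l}|_1-\tfrac15\log_2(\sum_j 4^{l_j})$ and $T_n:=(n+d-1)-\tfrac15\log_2(4^n+4d-4)$, one has $\boldsymbol{l}\in\mathcal I_n^{\,c}\iff s(\boldsymbol{l})>T_n$, and the identity $\big(\sum_j 4^{l_j}\big)^{1/2}2^{-2|\boldsymbol{l}|_1}=2^{-2s(\boldsymbol{l})}\big(\sum_j 4^{l_j}\big)^{1/10}$ shows each discarded term is bounded by $2^{-2T_n}$ times a slowly growing factor. I would then organise the sum by the value of $s(\boldsymbol{l})$ — each level set is finite, since $|\boldsymbol{l}|_1$ is bounded once $s(\boldsymbol{l})$ is fixed — and estimate, on each level set, the weighted count $\sum_{s(\boldsymbol{l})=\mathrm{const}}\big(\sum_j 4^{l_j}\big)^{1/10}$. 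The (near) product structure of $\sum_j 4^{l_j}$ makes this weighted count purely exponential in $s(\boldsymbol{l})$, with no polynomial (hence no logarithmic) factor — which is exactly what the calibration constant $\tfrac15$ buys — so the sum over $s(\boldsymbol{l})>T_n$ becomes an honest geometric series. Splitting that series into its two natural regimes produces the two summands $\tfrac12$ and $(\tfrac52)^{d-1}$, while $2^{-2T_n}=2^{-2(n+d-1)}(4^n+4d-4)^{2/5}$ collapses to the $2^{-n}$ decay.

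I expect the main obstacle to be precisely the combinatorial estimate in step (ii): bounding the weighted count of discarded multi-indices on each energy-cost level set sharply enough that no polynomial-in-$n$ factor survives. This is the whole point of replacing the total-degree index set $\sV_n$ of Lemma~\ref{err-first} by the energy-optimised set, and it is what removes the $(\log M)^{d-1}$ term; by contrast, the per-component energy bound of step (i), the orthogonality across scales, and steering the $C^{d}$-type constants down to the claimed $6^{-(d-1)}\big(\tfrac12+(\tfrac52)^{d-1}\big)$ dependence are all routine bookkeeping.
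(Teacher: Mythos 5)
The paper does not prove this lemma at all—it is quoted verbatim from \cite{bungartz2004sparse} (Theorem 3.10)—and your sketch reconstructs essentially that reference's own argument: hierarchical splitting $f=\sum_{\boldsymbol{l}}f_{\boldsymbol{l}}$, the per-level bounds $\|f_{\boldsymbol{l}}\|_{L^2}\lesssim 3^{-d}2^{-2|\boldsymbol{l}|_1}|f|_{2,2}$ and $\|f_{\boldsymbol{l}}\|_E^2=3\bigl(\sum_j 4^{l_j}\bigr)\|f_{\boldsymbol{l}}\|_{L^2}^2$, followed by summing the tail over the complement of the energy-based index set, so the approach is correct and matches the cited proof. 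The only soft spot is calling the final constant-tracking ``routine bookkeeping'': the tail summation that yields exactly $\tfrac12+(\tfrac52)^{d-1}$ and the prefactor $\tfrac{2d}{\sqrt3\,6^{d-1}}$ is the genuinely intricate part of Bungartz--Griebel's computation, but nothing in your outline is wrong in principle.
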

For notational convenience, we define the energy-based index set $\sX_n$ by setting
\begin{equation}
\sX_n := \left\{\boldsymbol{l} \in \mathbb{N}^d \mid |\boldsymbol{l}|_1 - \tfrac{1}{5} \log_2\left(\sum_{j=1}^d 4^{l_j}\right) \leq (n+d-1) - \tfrac{1}{5} \log_2\left(4^n + 4d - 4\right) \right\}.\label{eq:X}
\end{equation}
\begin{lemma}[\cite{bungartz2004sparse}, Lemma 3.9]\label{XN}
The energy-based sparse grid index set $\sX_n$ is a subset of $\{ \boldsymbol{l} \in \mathbb{N}^d : |\boldsymbol{l}|_1 \leq n+d-1 \}$, and its cardinality satisfies
\[
\left|\bigcup_{\vl\in\sX_n}\vi_{\vl}\right| \leq 2^n \cdot \frac{d}{2} \cdot \mathrm{e}^d.
\]
\end{lemma}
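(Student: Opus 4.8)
I would follow the argument of \cite[Lemma~3.9]{bungartz2004sparse}, handling the inclusion and the cardinality bound separately. \emph{For the inclusion}, write $g(\boldsymbol l):=|\boldsymbol l|_1-\tfrac15\log_2\bigl(\sum_{j=1}^d 4^{l_j}\bigr)$ and set $\boldsymbol l^{\ast}:=(n,1,\dots,1)$, which satisfies $|\boldsymbol l^{\ast}|_1=n+d-1$ and $\sum_j 4^{l^{\ast}_j}=4^n+4d-4$, so that $\sX_n=\{\boldsymbol l:g(\boldsymbol l)\le g(\boldsymbol l^{\ast})\}$. The key step is that, among all $\boldsymbol l\ge\boldsymbol1$ with a fixed value $|\boldsymbol l|_1=\sigma$, the function $g$ is minimized at the corner $(\sigma-d+1,1,\dots,1)$: since $\boldsymbol l\mapsto\sum_j4^{l_j}$ is a sum of convex functions, it is maximized at an extreme point of the discrete simplex $\{\boldsymbol l\ge\boldsymbol1:|\boldsymbol l|_1=\sigma\}$, hence $\min_{|\boldsymbol l|_1=\sigma}g(\boldsymbol l)=h(\sigma):=\sigma-\tfrac15\log_2(4^{\sigma-d+1}+4d-4)$. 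A direct computation gives $h'(\sigma)=1-\tfrac25\cdot\frac{4^{\sigma-d+1}}{4^{\sigma-d+1}+4d-4}\ge\tfrac35>0$, so $h$ is strictly increasing; since $h(n+d-1)=g(\boldsymbol l^{\ast})$, any $\boldsymbol l$ with $|\boldsymbol l|_1>n+d-1$ has $g(\boldsymbol l)\ge h(|\boldsymbol l|_1)>g(\boldsymbol l^{\ast})$, i.e.\ $\boldsymbol l\notin\sX_n$, which yields $\sX_n\subseteq\{\boldsymbol l:|\boldsymbol l|_1\le n+d-1\}$.

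\emph{For the cardinality}, I would first bound the union by the sum, $\bigl|\bigcup_{\boldsymbol l\in\sX_n}\boldsymbol i_{\boldsymbol l}\bigr|\le\sum_{\boldsymbol l\in\sX_n}|\boldsymbol i_{\boldsymbol l}|=\sum_{\boldsymbol l\in\sX_n}2^{|\boldsymbol l|_1-d}$, using $|\boldsymbol i_{\boldsymbol l}|=\prod_j 2^{l_j-1}$. Exponentiating the defining inequality of $\sX_n$ shows that every $\boldsymbol l\in\sX_n$ obeys $\sum_j4^{l_j}\ge 2^{2n-5k}$ with $k:=(n+d-1)-|\boldsymbol l|_1\ge0$; together with $\sum_j4^{l_j}\le d\,4^{\max_j l_j}$ this forces $\max_j l_j\ge n-\tfrac{5k}{2}-\tfrac12\log_2 d$, so the $d-1$ non‑maximal coordinates carry an excess $\sum_{j\neq j_0}(l_j-1)=n-k-\max_j l_j\le\tfrac{3k}{2}+\tfrac12\log_2 d$. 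Counting how this excess can be split among $d-1$ coordinates shows that at most $\mathrm{poly}(d)\cdot d\binom{\lfloor 3k/2\rfloor+d-1}{d-1}$ multi-indices occur at deficit $k$, each contributing $2^{|\boldsymbol l|_1-d}=2^{n-1-k}$; summing over $k$ reduces the whole estimate to $d\,2^{n-1}\cdot\mathrm{poly}(d)\cdot\sum_{j\ge0}2^{-j}\binom{3j+d-1}{d-1}$. By the multisection identity, $\sum_{j\ge0}y^{j}\binom{3j+d-1}{d-1}=\tfrac13\sum_{\omega^{3}=1}(1-\omega y^{1/3})^{-d}$, and at $y=\tfrac14$ the term $\omega=1$ dominates in modulus, so the series is at most $(1-2^{-2/3})^{-d}$; since $(1-2^{-2/3})^{-1}\approx 2.702<\mathrm e$, the polynomial prefactors are absorbed and $\sum_{\boldsymbol l\in\sX_n}2^{|\boldsymbol l|_1-d}\le 2^n\cdot\tfrac d2\cdot\mathrm e^{d}$.

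The one genuinely delicate point is obtaining the sharp exponential constant. A naive estimate — applying subadditivity of $t\mapsto t^{1/5}$ to $(\sum_j4^{l_j})^{1/5}$ and then summing $2^{|\boldsymbol l|_1-d}$ over the entire simplex $\{|\boldsymbol l|_1\le n+d-1\}$ via $\sum_{b\ge0}\binom{b+d-1}{d-1}x^{b}=(1-x)^{-d}$ — already gives the right shape $2^n\cdot\mathrm{poly}(d)\cdot C^{d}$, but only with $C=(1-2^{-2/5})^{-1}\approx4.13>\mathrm e$, which is too weak. Pushing $C$ below $\mathrm e$ is exactly what the "deficit versus excess" bookkeeping above accomplishes, by exploiting that the high-level hierarchical subspaces retained in the energy-based sparse grid must be strongly anisotropic; carrying this out carefully (and checking the small-$d$ cases, where the claimed bound is generous) is the crux, with everything else being routine estimation.
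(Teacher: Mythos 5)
The paper never proves this lemma—it is quoted directly from Bungartz--Griebel—so there is no in-paper proof to compare against; your counting scheme is essentially the template the paper later adapts for the symmetric index set in Proposition~\ref{XNsym}, and your first half is fine: minimizing $g(\boldsymbol l)=|\boldsymbol l|_1-\tfrac15\log_2\sum_j4^{l_j}$ on each slice $|\boldsymbol l|_1=\sigma$ at a corner of the simplex and then using $h'(\sigma)\ge\tfrac35$ correctly yields $\sX_n\subseteq\{|\boldsymbol l|_1\le n+d-1\}$.

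The cardinality half, however, has a genuine gap, and it sits exactly where you yourself locate ``the crux.'' Your deficit-$k$ count carries the excess bound $\tfrac{3k}{2}+\tfrac12\log_2 d$, so the binomial you actually control is $\binom{\lfloor 3k/2+\frac12\log_2 d\rfloor+d-1}{d-1}$; its ratio to $\binom{\lfloor 3k/2\rfloor+d-1}{d-1}$ is not $\mathrm{poly}(d)$ but of order $d^{\Theta(\log d)}$ (take $k=0$). More importantly, the absorption step fails quantitatively: the only margin you have is $\bigl(\mathrm e\,(1-2^{-2/3})\bigr)^{d}\approx(1.0059)^{d}$, which does not dominate even a single factor of $d$ until $d$ is in the thousands, let alone the $d^{\Theta(\log d)}$ factor, the extra parity/constant factors from odd $k$ and from the multisection (note also the internal inconsistency: your displayed weights are $2^{-j}$ but you evaluate the multisection at $y=\tfrac14$). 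Concretely, for $d=16$ the quantity your chain of inequalities actually bounds, $\sum_{k\ge0}2^{-k}\binom{\lfloor 3k/2+\frac12\log_2 d\rfloor+d-1}{d-1}$, already exceeds $\mathrm e^{16}\approx 8.9\times10^{6}$ (the partial sums pass it by $k\approx20$), so your argument cannot terminate at the stated bound $2^{n}\cdot\frac d2\cdot\mathrm e^{d}$; the relaxation $\sum_j4^{l_j}\le d\,4^{\max_j l_j}$ is too lossy given how thin the gap between $(1-2^{-2/3})^{-1}\approx2.702$ and $\mathrm e$ is. To recover the sharp constant you must avoid generating these prefactors in the first place—for instance by keeping the exact constraint $\sum_j4^{l_j}\ge 2^{-5k}(4^n+4d-4)$ inside the count rather than passing through $\max_j l_j$ with a $\tfrac12\log_2 d$ loss—and then genuinely verify the small- and moderate-$d$ regime; as written, that verification is deferred rather than done, so the bound $\bigl|\bigcup_{\vl\in\sX_n}\vi_{\vl}\bigr|\le 2^n\cdot\frac d2\cdot\mathrm e^{d}$ is not established.
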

Based on Lemmas~\ref{err-second} and~\ref{XN}, this refined index set further removes the residual logarithmic dependence on the dimension.  
In comparison with the classical sparse–grid set $\sV_n$, our new set $\sX_n$ contains markedly fewer multi-indices.  
The guiding principle is a {cost–benefit} analysis: for every candidate index in $\sV_n$ we evaluate the ratio between its contribution to the approximation accuracy and the extra computational cost it entails; indices with an unfavorable ratio are discarded.  
This pruning strategy keeps the cardinality of the index set small while leaving the overall approximation error essentially unchanged.  Figure~\ref{indexdiff} contrasts $\sX_n$ and $\sV_n$ for the illustrative case $d=2$ and $n=5$.
 \begin{figure}[!th]
\centering
\begin{subfigure}[t]{0.50\textwidth}
    \centering
    \includegraphics[width=0.99\textwidth]{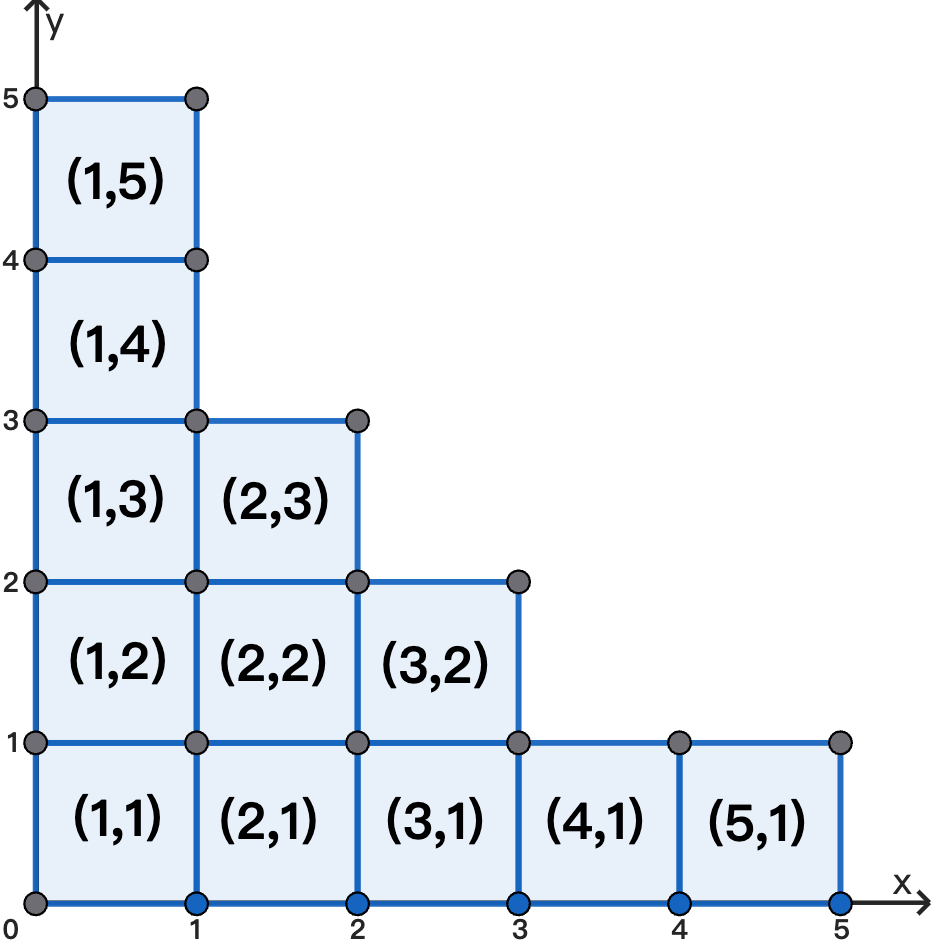}
    \caption{$\sX_5$ for $d=2$}
    \label{fig:compare-a}
\end{subfigure}%
\hfill
\begin{subfigure}[t]{0.50\textwidth}
    \centering
    \includegraphics[width=0.99\textwidth]{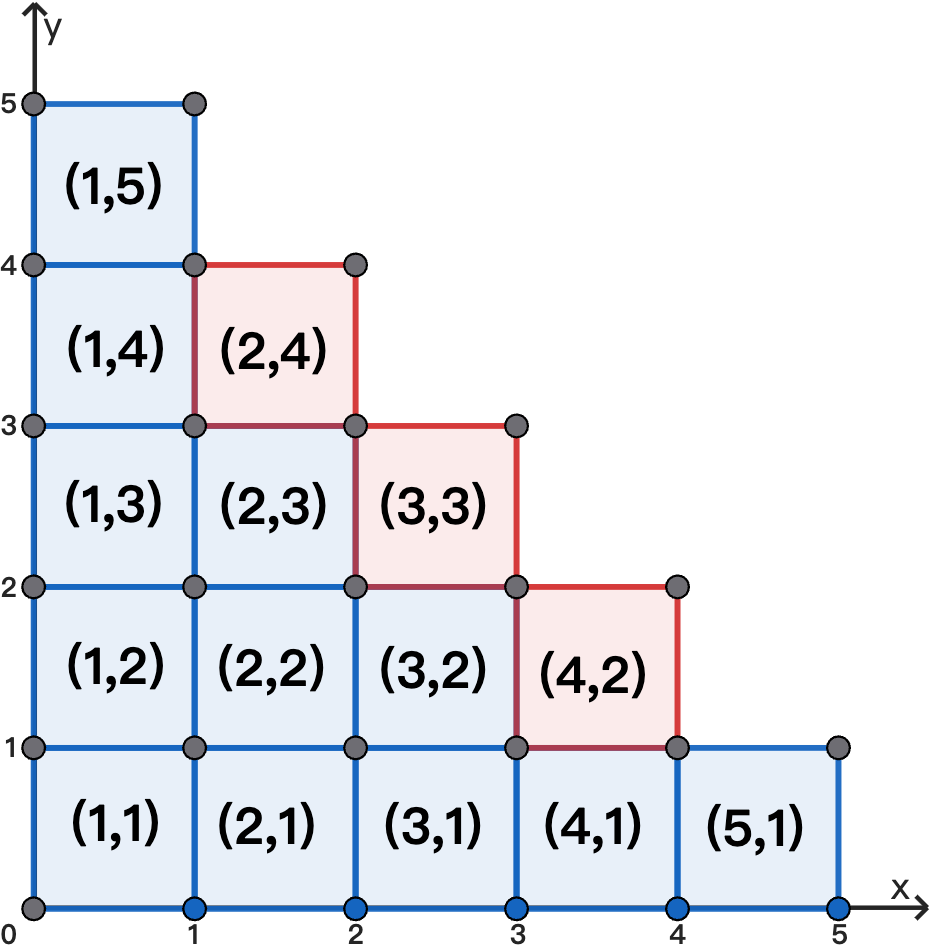}
    \caption{$\sV_5$ for $d=2$}
    \label{fig:compare-b}
\end{subfigure}
\caption{The difference between $\sX_n$ and $\sV_n$ for $n=5$ and $d=2$.}
\label{indexdiff}
\end{figure}The cost–benefit analysis underlying this choice of indices is detailed in~\cite{bungartz2004sparse,griebel2000optimized}.  
A similar construction is available for Korobov spaces with global Fourier bases; see~\cite{griebel2007sparse,shen2010sparse}.  
However, Corollary~\ref{nonbetter} below reveals that—even with this refinement—the cardinality of the index set, and thus the leading constant in the approximation error, still grows exponentially with the dimension~$d$.

Combining Lemmas~\ref{err-second} and~\ref{XN}, we obtain the following result:

\begin{corollary}\label{nonbetter}
For any $f \in X^{2, 2}(\Omega)$, the approximation $f_n^{(2)}(\boldsymbol{x})$ defined in Lemma~\ref{err-second} uses fewer than $m := 2^n \cdot \frac{d}{2} \cdot \mathrm{e}^d$ basis functions and achieves the approximation rate
\begin{align}
\|f - f_n^{(2)}\|_{E} \leq \frac{2e \cdot d^2 \cdot |f|_{2, 2}}{\sqrt{3}} \cdot \left(\frac{5\mathrm{e}}{12}\right)^{d-1} m^{-1}.\notag
\end{align}
\end{corollary}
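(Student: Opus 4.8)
The plan is to combine Lemma~\ref{err-second} (the error bound) with Lemma~\ref{XN} (the cardinality bound) by a direct substitution, and then simplify the resulting constant so that its exponential factor is displayed in the cleanest possible form.

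First I would record the two inputs. From Lemma~\ref{XN}, the number of basis functions used by $f_n^{(2)}$ is $|\bigcup_{\vl\in\sX_n}\vi_{\vl}|\le 2^n\cdot\frac{d}{2}\cdot\mathrm e^d=:m$. From Lemma~\ref{err-second}, $\|f-f_n^{(2)}\|_E\le \frac{2d|f|_{2,2}}{\sqrt 3\,6^{d-1}}\bigl(\frac12+(\frac52)^{d-1}\bigr)2^{-n}$. The strategy is then to rewrite $2^{-n}$ in terms of $m$ via $2^{-n}=\frac{d\,\mathrm e^d}{2m}$, and to bound the bracketed term crudely by $\frac12+(\frac52)^{d-1}\le 2\cdot(\frac52)^{d-1}$ (valid since $(\frac52)^{d-1}\ge 1\ge\frac12$ for all $d\ge 1$). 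Substituting both and collecting the $d$-dependent algebra, one gets a prefactor of the shape $\frac{2d|f|_{2,2}}{\sqrt 3}\cdot\frac{1}{6^{d-1}}\cdot 2\bigl(\tfrac52\bigr)^{d-1}\cdot\frac{d\,\mathrm e^d}{2}\cdot m^{-1}$, and the exponential pieces combine as $\frac{1}{6^{d-1}}(\frac52)^{d-1}\mathrm e^d = \mathrm e\cdot\bigl(\frac{5\mathrm e}{12}\bigr)^{d-1}$, while the polynomial pieces give $2d\cdot d=2d^2$ after the cancellation of the two factors of $2$. This yields exactly $\|f-f_n^{(2)}\|_E\le \frac{2\mathrm e\, d^2|f|_{2,2}}{\sqrt 3}\bigl(\frac{5\mathrm e}{12}\bigr)^{d-1}m^{-1}$, matching the claim.

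There is essentially no analytic obstacle here: the whole statement is a bookkeeping corollary, and the only thing to be careful about is the arithmetic of merging the three exponential factors $6^{-(d-1)}$, $(5/2)^{d-1}$, and $\mathrm e^d$ into a single $(5\mathrm e/12)^{d-1}$ times a leftover $\mathrm e$. I would double-check that the leftover factor is $\mathrm e^d/\mathrm e^{d-1}=\mathrm e$ and that it is grouped with the constant rather than the dimension-dependent base, which is what produces the stated form. The inequality $\frac12+(5/2)^{d-1}\le 2(5/2)^{d-1}$ is the only genuine estimate, and it is clearly harmless; one could alternatively keep both terms and note the $\frac12$ contributes a lower-order $(1/(2\cdot 6^{d-1}))$ piece that is dominated, but the crude bound is cleaner.

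The point of stating the corollary is not the proof but the conclusion it exposes: since $\bigl(\frac{5\mathrm e}{12}\bigr)^{d-1}$ has base $\frac{5\mathrm e}{12}\approx1.13>1$, the leading constant in the approximation rate \emph{grows} exponentially in $d$ even with the energy-based index set $\sX_n$. I would close by emphasizing this: refining $\sV_n$ to $\sX_n$ removes the $(\log_2 M)^{d-1}$ logarithmic factor of Lemma~\ref{err-first}, but it does not remove the exponential prefactor, which is precisely why the symmetrization developed in the subsequent sections is needed to reach the dimension-free bound of Theorem~\ref{squarebetter}.
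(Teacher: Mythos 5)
Your proposal is correct and is exactly the argument the paper intends: Corollary~\ref{nonbetter} is obtained by substituting the cardinality bound of Lemma~\ref{XN} (i.e.\ $2^{-n}=\frac{d\,\mathrm e^{d}}{2m}$) into the error bound of Lemma~\ref{err-second}, using $\tfrac12+(\tfrac52)^{d-1}\le 2(\tfrac52)^{d-1}$, and merging $6^{-(d-1)}(\tfrac52)^{d-1}\mathrm e^{d}=\mathrm e\,(\tfrac{5\mathrm e}{12})^{d-1}$. Your bookkeeping reproduces the stated constant exactly, and your closing remark about the surviving exponential prefactor matches the paper's own discussion.
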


Observe that while the logarithmic factor in the approximation error is mitigated, the prefactor in front of $m^{-1}$ still still grows expoenentially in the dimension $d$  as $\frac{5\mathrm{e}}{12} > 1$. This is mainly due to that the size of $\sX_n$ grows exponentially in  $d$. This phenomenon is also confirmed by numerical simulations, such as those reported in \cite[Table 3.1]{bungartz2004sparse}, where exponential growth in $d$ is clearly observed. Similar strategies aimed at eliminating the logarithmic term using alternative formulations of Korobov spaces—especially those based on Fourier features—have been explored in \cite{griebel2007sparse,shen2010sparse}. However, such approaches also result in constants that grow exponentially in $d$. To overcome this barrier, we shift our focus to symmetric Korobov spaces.  
As we will demonstrate in Section~\ref{sec:symm-korobov}, the permutation-symmetry enables us to avoid the  the curse of dimensionality in both the prefactor and the convergence rate.

\section{Symmetric sparse grid approximation for symmetric Korobov functions}\label{sec:symm-korobov}


Based on Lemma~\ref{XN}, we have the following result:

\begin{lemma}\label{boundconstantsym}
Define
\[
f_n^{(2)}(\boldsymbol{x}) = \sum_{\boldsymbol{l} \in \sX_n \cap \mathbb{N}^d_{\operatorname{ord}}} \sum_{\boldsymbol{i} \in \boldsymbol{i}_{\boldsymbol{l}}} \bar{v}_{\boldsymbol{l}, \boldsymbol{i}} \sum_{\tau \in S_d} \phi_{\boldsymbol{l}, \boldsymbol{i}}(\tau(\boldsymbol{x})),
\]
where $\tau(\boldsymbol{x}) = (x_{\tau(1)}, \ldots, x_{\tau(d)})$, and $\bar{v}_{\boldsymbol{l}, \boldsymbol{i}}$ is a suitably rescaled version of $v_{\boldsymbol{l}, \boldsymbol{i}}$, accounting for repeated indices. Here,
\[
\mathbb{N}^d_{\operatorname{ord}} := \left\{ \boldsymbol{l} \in \mathbb{N}^d \mid 1\le l_1 \leq l_2 \leq \cdots \leq l_d \right\}.
\]
Then, for any $f \in X^{2,2}_{\text{sym}}(\Omega)$, the approximation error satisfies
\[
\|f - f_n^{(2)}\|_{E} \le \frac{2 \cdot d \cdot|f|_{2, 2}}{\sqrt{3} \cdot 6^{d-1}} \cdot\left(\frac{1}{2}+\left(\frac{5}{2}\right)^{d-1}\right)2^{-n}.
\]
\end{lemma}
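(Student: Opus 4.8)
The plan is to reduce Lemma~\ref{boundconstantsym} to Lemma~\ref{err-second} by re-indexing the energy-based sparse-grid expansion of a symmetric function over the fundamental domain $\mathbb N^d_{\operatorname{ord}}$ and checking that symmetrization does not change the function. First I would start from the fact (Lemma~\ref{err-second}) that for any $f\in X^{2,2}(\Omega)$ the truncated expansion $f_n^{(2)}(\vx)=\sum_{\vl\in\sX_n}\sum_{\vi\in\vi_{\vl}}v_{\vl,\vi}\phi_{\vl,\vi}(\vx)$ satisfies the stated error bound. The key observation is that $\sX_n$ is a \emph{symmetric} index set: the defining inequality in \eqref{eq:X} depends on $\vl$ only through $|\vl|_1$ and through $\sum_j 4^{l_j}$, both of which are invariant under permutations of the coordinates of $\vl$. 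Hence if $\vl\in\sX_n$ then $\tau(\vl)\in\sX_n$ for every $\tau\in S_d$, and so $\sX_n$ decomposes into $S_d$-orbits, each of which has a unique representative in $\mathbb N^d_{\operatorname{ord}}$.

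Next I would exploit the symmetry of $f$. Since $D^{\boldsymbol 2}f$ is symmetric when $f$ is (the mixed derivative operator $\partial_{x_1}^2\cdots\partial_{x_d}^2$ commutes with coordinate permutations), the coefficients transform covariantly: $v_{\tau(\vl),\tau(\vi)}=\int_\Omega \phi_{\tau(\vl),\tau(\vi)}(\vx)\,D^{\boldsymbol 2}f(\vx)\,d\vx = v_{\vl,\vi}$ after the change of variables $\vx\mapsto\tau^{-1}(\vx)$, using $\phi_{\tau(\vl),\tau(\vi)}(\vx)=\phi_{\vl,\vi}(\tau^{-1}(\vx))$ and the symmetry of $D^{\boldsymbol 2}f$ and of $\Omega$. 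Consequently, grouping the double sum defining $f_n^{(2)}$ by $S_d$-orbits of the pair $(\vl,\vi)$, one can rewrite $f_n^{(2)}(\vx)=\sum_{\vl\in\sX_n\cap\mathbb N^d_{\operatorname{ord}}}\sum_{\vi\in\vi_{\vl}}\bar v_{\vl,\vi}\sum_{\tau\in S_d}\phi_{\vl,\vi}(\tau(\vx))$, where $\bar v_{\vl,\vi}$ is $v_{\vl,\vi}$ divided by the size of the stabilizer of $(\vl,\vi)$ — precisely the ``rescaling accounting for repeated indices'' mentioned in the statement. The point is that this is literally the same function $f_n^{(2)}$ as in Lemma~\ref{err-second}, merely re-summed, so its approximation error is unchanged and the bound carries over verbatim.

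The main obstacle I anticipate is purely bookkeeping: making the orbit-counting and the definition of $\bar v_{\vl,\vi}$ precise, including the interaction between the stabilizer of $\vl$ in $S_d$ and the condition that $\vi$ ranges over $\vi_{\vl}$ (which itself must be treated compatibly under the permutation action, since permuting $\vl$ forces the matching permutation of $\vi$). One must be careful that when $\vl$ has repeated entries, only those permutations fixing $\vl$ give genuinely distinct terms $\phi_{\vl,\vi}(\tau(\vx))$ for a fixed $\vi$, and that the symmetrization sum $\sum_{\tau\in S_d}$ overcounts by exactly $|\mathrm{Stab}(\vl)|$; the factor $\bar v_{\vl,\vi}$ absorbs this. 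A minor additional check is that $f_n^{(2)}$ so defined is genuinely symmetric (immediate, since $\sum_{\tau\in S_d}\phi_{\vl,\vi}(\tau(\vx))$ is a full symmetrization) and still vanishes on $\partial\Omega$ (each $\phi_{\vl,\vi}$ does, hence so does any permuted copy). Once the re-indexing is verified to leave $f_n^{(2)}$ unchanged as a function, the error estimate is not re-proved at all — it is inherited directly from Lemma~\ref{err-second}.
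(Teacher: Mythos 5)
Your proposal is correct and is essentially the argument the paper intends (the lemma is stated without a separate proof precisely because the symmetrized expansion is just a regrouping of the expansion in Lemma~\ref{err-second}, using the permutation-invariance of $\sX_n$ and the covariance $v_{\tau(\vl),\tau(\vi)}=v_{\vl,\vi}$ for symmetric $f$, so the error bound is inherited verbatim). One small bookkeeping point: since the statement sums over \emph{all} $\vi\in\vi_{\vl}$ for each ordered representative $\vl$, the right normalization is $\bar v_{\vl,\vi}=v_{\vl,\vi}/|\mathrm{Stab}(\vl)|$ (your final choice), not the stabilizer of the pair $(\vl,\vi)$, which would instead be the correct factor if one summed only over pair-orbit representatives.
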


For every tensor–product basis
\(
\phi_{\boldsymbol{l},\boldsymbol{i}},
\)
now we form its symmetrization
\begin{equation}
    \psi_{\boldsymbol{l},\boldsymbol{i}}(\boldsymbol{x})
    :=\sum_{\tau\in S_{d}}
       \phi_{\boldsymbol{l},\boldsymbol{i}}\bigl(\tau(\boldsymbol{x})\bigr),\label{basissys}
\end{equation}
and treat the resulting \(\psi_{\boldsymbol{l},\boldsymbol{i}}\) as a single basis function.
Our first goal is to determine the number of these symmetric bases appear in the truncated expansion
\(f_{n}^{(2)}\).
Concretely, let $\boldsymbol{i}_{\boldsymbol{l}}$ denote the set of grid points at level $\boldsymbol{l}$ as defined in~\eqref{eq:i}.  Our next task is to bound the cardinality of
\[
   \bigcup_{\boldsymbol{l}\in\sX_{n}\cap\mathbb{N}^{d}_{\operatorname{ord}}}
   \boldsymbol{i}_{\boldsymbol{l}}, 
   \qquad
   \mathbb{N}^{d}_{\operatorname{ord}}
   :=\bigl\{\boldsymbol{l}\in\mathbb{N}^{d}\;:\;
           1\le l_{1}\le l_{2}\le\cdots\le l_{d}\bigr\},
\]
which represents the total number of grid points used in the symmetric sparse grid.

Our counting argument follows the framework of~\cite{bungartz2004sparse},  
but it explicitly incorporates the ordering constraint \(l_{1}\le\cdots\le l_{d}\) that reflects permutation invariance.
Once this bound is established, each \(\psi_{\boldsymbol{l},\boldsymbol{i}}\) is approximated by a dedicated neural network,  
and these components are assembled into a neural network to approximate a function belonging to the symmetric Korobov space.

\begin{proposition}
\label{XNsym}
\[
\left|
\bigcup_{\boldsymbol{l}\in\sX_n\cap\mathbb N^d_{\operatorname{ord}}}
\boldsymbol{i}_{\boldsymbol{l}}
\right|
\le
C_s\,2^n,
\]
where \(C_s>0\) is a constant independent of both \(n\) and \(d\).
\end{proposition}

\begin{proof}
The symmetry of \(\sX_n\) follows immediately from its definition, since the defining inequality is invariant under any permutation of the coordinate indices.

By the definition of \(\sX_n\), and using
\[
|\boldsymbol{i}_{\boldsymbol{l}}|
=
2^{|\boldsymbol{l}|_1-d},
\]
we have
\begin{align}
\left|
\bigcup_{\boldsymbol{l}\in\sX_n\cap\mathbb N^d_{\operatorname{ord}}}
\boldsymbol{i}_{\boldsymbol{l}}
\right|
&\le
\sum_{i=0}^{n-1}
\sum_{\substack{
|\boldsymbol{l}|_1=n+d-1-i,\\
\sum_{j=1}^d4^{l_j}\ge (4^n+4d-4)/32^i,\\
1\le l_1\le\cdots\le l_d
}}
|\boldsymbol{i}_{\boldsymbol{l}}|
=
2^{n-1}
\sum_{i=0}^{n-1}
2^{-i}
\sum_{\substack{
|\boldsymbol{l}|_1=n+d-1-i,\\
\sum_{j=1}^d4^{l_j}\ge (4^n+4d-4)/32^i,\\
1\le l_1\le\cdots\le l_d
}}
1.
\label{eq:sym-grid-count-start}
\end{align}

The condition
\[
\sum_{j=1}^d4^{l_j}\ge \frac{4^n+4d-4}{32^i}
\]
implies that, for the ordered representative \(1\le l_1\le\cdots\le l_d\),
\[
l_d=|\boldsymbol{l}|_\infty
\ge
n-\lfloor 2.5 i\rfloor.
\]
Therefore,
\[
l_1+\cdots+l_{d-1}
=
|\boldsymbol{l}|_1-l_d
\le
n+d-1-i-\bigl(n-\lfloor2.5i\rfloor\bigr)
\le
d-1+\lfloor1.5i\rfloor.
\]
Now set
\[
k_j:=l_j-1,\qquad j=1,\dots,d-1.
\]
Then \(k_j\ge0\), \(k_1\le\cdots\le k_{d-1}\), and
\[
k_1+\cdots+k_{d-1}
\le
\lfloor1.5i\rfloor.
\]
Thus, for each fixed \(i\), the number of possible ordered tails
\((l_1,\dots,l_{d-1})\) is bounded by
\[
\sum_{s=0}^{\lfloor1.5i\rfloor} p(s),
\]
where \(p(s)\) denotes the unrestricted partition number of \(s\). This bound is independent of \(d\).

Substituting this estimate into \eqref{eq:sym-grid-count-start}, we obtain
\begin{align}
\left|
\bigcup_{\boldsymbol{l}\in\sX_n\cap\mathbb N^d_{\operatorname{ord}}}
\boldsymbol{i}_{\boldsymbol{l}}
\right|
&\le
2^{n-1}
\sum_{i=0}^{n-1}
2^{-i}
\sum_{s=0}^{\lfloor1.5i\rfloor} p(s)
\le
2^{n-1}
\sum_{i=0}^{\infty}
2^{-i}
\sum_{s=0}^{\lfloor1.5i\rfloor} p(s).
\label{eq:partition-sum-bound}
\end{align}

It remains to note that the last series converges. Indeed, by the Hardy--Ramanujan estimate \cite{hardy1918asymptotic},
\[
p(s)\le C\exp\!\left(\pi\sqrt{\frac{2s}{3}}\right),
\qquad s\ge1.
\]
Hence
\[
\sum_{s=0}^{\lfloor1.5i\rfloor}p(s)
\le
C(1+i)\exp\!\left(\pi\sqrt{i}\right),
\]
after enlarging \(C\) if necessary. Therefore
\[
\sum_{i=0}^{\infty}
2^{-i}
\sum_{s=0}^{\lfloor1.5i\rfloor}p(s)
\le
C
\sum_{i=0}^{\infty}
2^{-i}(1+i)e^{\pi\sqrt{i}}
<\infty.
\]
Consequently, there exists an absolute constant \(C_s>0\), independent of both \(n\) and \(d\), such that
\[
\left|
\bigcup_{\boldsymbol{l}\in\sX_n\cap\mathbb N^d_{\operatorname{ord}}}
\boldsymbol{i}_{\boldsymbol{l}}
\right|
\le
C_s2^n.
\]
This completes the proof.
\end{proof}

Proposition~\ref{XNsym} shows that the number of symmetric basis functions is significantly smaller than that of the standard sparse grid basis without symmetry. By counting the symmetric basis as a single function defined as
\(
    \psi_{\boldsymbol{l}, \boldsymbol{i}} = \sum_{\tau \in S_d} \phi_{\boldsymbol{l}, \boldsymbol{i}}(\tau(\boldsymbol{x})),\)
and leveraging Lemma~\ref{boundconstantsym} and Proposition~\ref{XNsym}, we obtain the following corollary:

\begin{corollary}\label{better}
For any $f \in X^{2, 2}_{\text{sym}}(\Omega)$, the approximation $f_n^{(2)}(\boldsymbol{x})$ defined in Lemma~\ref{boundconstantsym} uses fewer than $m := C_s 2^n $ basis functions and achieves the approximation rate
\begin{align}
\|f - f_n^{(2)}\|_{E} \leq \frac{48 C_s d \cdot |f|_{2, 2}}{5\sqrt{3}} \cdot \left(\frac{5}{12}\right)^d  m^{-1}.
\end{align}
\end{corollary}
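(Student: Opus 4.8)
The plan is to combine the error estimate of Lemma~\ref{boundconstantsym} with the cardinality bound of Proposition~\ref{XNsym}, exactly mirroring the non-symmetric derivation of Corollary~\ref{nonbetter} but with the improved count. First I would recall from Lemma~\ref{boundconstantsym} that, for $f\in X^{2,2}_{\mathrm{sym}}(\Omega)$, the symmetric truncation $f_n^{(2)}$ satisfies
\[
\|f - f_n^{(2)}\|_{E} \le \frac{2 d\, |f|_{2,2}}{\sqrt{3}\,6^{d-1}} \left(\frac12 + \Bigl(\frac52\Bigr)^{d-1}\right) 2^{-n}.
\]
Next I would bound the bracketed factor crudely by $\tfrac12+(\tfrac52)^{d-1}\le 2\cdot(\tfrac52)^{d-1}$ (valid for all $d\ge 1$ since $(\tfrac52)^{d-1}\ge\tfrac12$), so that the right-hand side is at most
\[
\frac{4 d\, |f|_{2,2}}{\sqrt{3}} \cdot \frac{1}{6^{d-1}}\Bigl(\frac52\Bigr)^{d-1} 2^{-n}
= \frac{4 d\, |f|_{2,2}}{\sqrt{3}} \Bigl(\frac{5}{12}\Bigr)^{d-1} 2^{-n}.
\]

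Then I would eliminate $2^{-n}$ in favor of $m^{-1}$. By Proposition~\ref{XNsym}, the number of symmetric basis functions $\psi_{\boldsymbol\ell,\boldsymbol i}$ appearing in $f_n^{(2)}$ is at most $m:=C_s 2^n \mathrm{e}^{\pi\sqrt{5d/3}}$, whence $2^{-n} = C_s \mathrm{e}^{\pi\sqrt{5d/3}} m^{-1}$. Substituting this identity into the previous display gives
\[
\|f-f_n^{(2)}\|_E \le \frac{4 C_s d\, |f|_{2,2}}{\sqrt{3}} \Bigl(\frac{5}{12}\Bigr)^{d-1} \mathrm{e}^{\pi\sqrt{5d/3}} m^{-1},
\]
and rewriting $(\tfrac{5}{12})^{d-1} = \tfrac{12}{5}\,(\tfrac{5}{12})^{d}$ produces the prefactor $\tfrac{4\cdot 12}{5} = \tfrac{48}{5}$, i.e. the stated constant $\tfrac{48 C_s d}{5\sqrt3}\,(\tfrac{5}{12})^{d}\mathrm{e}^{\pi\sqrt{5d/3}}$.

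This argument is essentially bookkeeping; there is no real obstacle, since both inputs (Lemma~\ref{boundconstantsym} for the error and Proposition~\ref{XNsym} for the count) are already in hand. The only point requiring a modicum of care is the interpretation of "$m$ basis functions": one must count each symmetrized function $\psi_{\boldsymbol\ell,\boldsymbol i}$ as a single basis element and invoke the fact—implicit in Lemma~\ref{boundconstantsym}, via the restriction to $\mathbb N^d_{\mathrm{ord}}$—that $f_n^{(2)}$ is indeed a linear combination of exactly those $\psi_{\boldsymbol\ell,\boldsymbol i}$ indexed by $\bigcup_{\boldsymbol\ell\in\sX_n\cap\mathbb N^d_{\mathrm{ord}}}\boldsymbol i_{\boldsymbol\ell}$, so that Proposition~\ref{XNsym} controls precisely the right quantity. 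The factor $C_s \mathrm{e}^{\pi\sqrt{5d/3}}$ that enters the prefactor is exactly the price paid for trading $2^{-n}$ against $m^{-1}$, and it is this subexponential (rather than exponential) dependence on $d$—inherited from the partition-counting bound—that is the whole point of passing to the symmetric setting.
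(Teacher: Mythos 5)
Your proposal is correct and follows essentially the same route as the paper: substitute the count $m=C_s 2^n \mathrm{e}^{\pi\sqrt{5d/3}}$ from Proposition~\ref{XNsym} into the error bound of Lemma~\ref{boundconstantsym}, bound $\tfrac12+(\tfrac52)^{d-1}\le 2(\tfrac52)^{d-1}$, and collect constants to obtain $\tfrac{48 C_s d}{5\sqrt3}(\tfrac{5}{12})^{d}\mathrm{e}^{\pi\sqrt{5d/3}}m^{-1}$. The only difference is that you make explicit the intermediate inequality and the bookkeeping of symmetrized basis elements, which the paper leaves implicit.
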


\begin{proof}
Set $m = C_s 2^n $, then we have $2^{-n} = m^{-1} C_s $. Using the estimate from Lemma~\ref{boundconstantsym}, we obtain
\begin{align*}
\|f - f_n^{(2)}\|_{E} &\le \frac{2 d \cdot |f|_{2, 2}}{\sqrt{3} \cdot 6^{d-1}} \cdot\left(\frac{1}{2} + \left(\frac{5}{2}\right)^{d-1}\right) 2^{-n} \le \frac{48 d \cdot |f|_{2, 2}}{5\sqrt{3}} \cdot\left(\frac{5}{12}\right)^d C_s  m^{-1}.
\end{align*}
\end{proof}

Comparing Corollary~\ref{better} with Corollary~\ref{nonbetter}, one notices that incorporating the symmetry constraint   significantly improves the error bound. It not only eliminates the curse of dimensionality in the rate but also provides a mechanism for mitigating large values of $|f|_{2,2}$ in high-dimensional settings through the factor $\left(\frac{5}{12}\right)^{d}$.

\section{Proof of Theorem ~\ref{thm:main-sym-korobov neural network}}\label{sec: thm1}
\subsection{Approximating Symmetric Basis Functions}

To prove Theorem~\ref{thm:main-sym-korobov neural network}, our next step is to construct a neural
network approximation of the symmetrized basis function
\[
\psi_{\boldsymbol l,\boldsymbol i}(\vx)
:=
\sum_{\tau\in S_d}
\phi_{\boldsymbol l,\boldsymbol i}\bigl(\tau(\vx)\bigr).
\]
We show below that each \(\psi_{\boldsymbol l,\boldsymbol i}\) can be realized,
up to a prescribed accuracy, by a neural network with
\[
\mathcal O\!\left(\operatorname{poly}(d)\,2^{d-1}\right)
\]
neurons, rather than the factorial complexity \(\mathcal O(d!)\) arising from
the direct symmetrization.

The main ingredient is an alternative representation of the symmetrized
sparse-grid basis. Instead of summing over all \(d!\) permutations, this
representation involves only \(\mathcal O(2^d)\) terms, using Glynn's formula
for the permanent \cite{glynn2010permanent}. This step is essential for making
the later neural-network construction effective.

This construction is different from existing uses of symmetry. The regularity
result of \cite{yserentant2004regularity} shows that eigenfunctions of the
electronic Schr\"odinger equation belong to mixed-derivative Sobolev, or
Korobov, spaces, while \cite{Weimar2012,griebel2007sparse} exploit
permutation symmetry by grouping functions into equivalence classes. These
works, however, do not provide an explicit representation of the symmetrized
local basis suitable for neural-network approximation. Related constructions
for symmetric polynomial or spectral bases
\cite{bachmayr2024polynomial,ho2024atomic,takeshita2025approximation} rely on
closure under multiplication, which is not available for sparse-grid hat
functions. Indeed, the product of two local hat functions is generally not a
linear combination of hat functions of the same type.

Our approach avoids this difficulty by collecting the directional features
\(\{\varphi_j(x_i):1\le i,j\le d\}\) and combining them through the permanent
representation. Glynn's formula expresses each symmetrized product as a signed
linear combination of only \(2^{d-1}\) products of one-dimensional feature
sums, which can then be implemented by neural-network multiplication modules.
This yields an explicit symmetric local sparse-grid construction and leads to
quantitative neural-network approximation rates for symmetric Korobov
functions.

We first recall Glynn's formula.
\begin{lemma}[Theorem 2.1 in \cite{glynn2010permanent}]
\label{lem:glynn-permanent}
Let \(\vA=(a_{i,j})\) be an \(m\times m\) matrix over a field \(F\) of
characteristic not equal to two. Then
\begin{equation}
\operatorname{per}(\vA)
:=
\sum_{\tau\in S_m}\prod_{j=1}^m a_{j,\tau(j)}
=
2^{1-m}
\sum_{\substack{\boldsymbol{\delta}\in\{\pm1\}^m\\ \delta_1=1}}
\left(\prod_{k=1}^m \delta_k\right)
\prod_{j=1}^m
\left(\sum_{i=1}^m \delta_i a_{ij}\right).
\label{eq:glynn-permanent}
\end{equation}
\end{lemma}

Applying Lemma~\ref{lem:glynn-permanent}, we obtain the following representation
of the symmetrized product.

\begin{lemma}[Glynn representation of symmetrized products]
\label{lem:sym-product-glynn}
Let \(d\in\mathbb N\), and let
\(\varphi_1,\dots,\varphi_d:\mathbb R\to\mathbb R\) be arbitrary functions.
Then, for every \(\vx=(x_1,\dots,x_d)\in\mathbb R^d\),
\begin{equation}
\sum_{\tau\in S_d}\prod_{j=1}^d \varphi_j(x_{\tau(j)})
=
2^{1-d}
\sum_{\substack{\boldsymbol{\delta}\in\{\pm1\}^d\\ \delta_1=1}}
\left(\prod_{k=1}^d \delta_k\right)
\prod_{j=1}^d
\left(\sum_{i=1}^d \delta_i\,\varphi_j(x_i)\right).
\label{eq:glynn-sym-product}
\end{equation}
\end{lemma}

\begin{proof}
Define the matrix \(\vA(\vx)=(a_{ij})_{1\le i,j\le d}\) by
\[
a_{ij}:=\varphi_j(x_i).
\]
Then
\[
\operatorname{per}(\vA(\vx))
=
\sum_{\tau\in S_d}
\prod_{j=1}^d a_{\tau(j),j}
=
\sum_{\tau\in S_d}
\prod_{j=1}^d \varphi_j(x_{\tau(j)}).
\]
Applying Lemma~\ref{lem:glynn-permanent} to \(\vA(\vx)\) gives
\[
\operatorname{per}(\vA(\vx))
=
2^{1-d}
\sum_{\substack{\boldsymbol{\delta}\in\{\pm1\}^d\\ \delta_1=1}}
\left(\prod_{k=1}^d \delta_k\right)
\prod_{j=1}^d
\left(\sum_{i=1}^d \delta_i a_{ij}\right).
\]
Since \(a_{ij}=\varphi_j(x_i)\), this is exactly
\eqref{eq:glynn-sym-product}.
\end{proof}

In this section, we study the approximation, in the \(H^1\)-seminorm, of the
Glynn-type representation by neural networks. The main technical component is
the realization or approximation of products. Different activation functions
lead to different product constructions. We first consider squared ReLU neural
networks, for which multiplication can be represented exactly.

\begin{lemma}[Exact product realization by squared ReLU networks]
\label{lem:product-square-relu}
Let \(\sigma_2(x):=\max\{x,0\}^2\). For any integer \(d\ge2\), the product map
\((x_1,\dots,x_d)\mapsto \prod_{j=1}^d x_j\) can be realized exactly by a
squared ReLU network \(\Phi_d\). Moreover, one can choose \(\Phi_d\) with depth
\[
L\le \lfloor\log_2 d\rfloor+1,
\]
width
\[
N\le 2d,
\]
and total numbers of neurons and nonzero parameters bounded by \(\mathcal O(d)\).
All weights and biases can be chosen with magnitude bounded by an absolute
constant.
\end{lemma}

\begin{figure}
    \centering
    \includegraphics[width=0.97\linewidth]{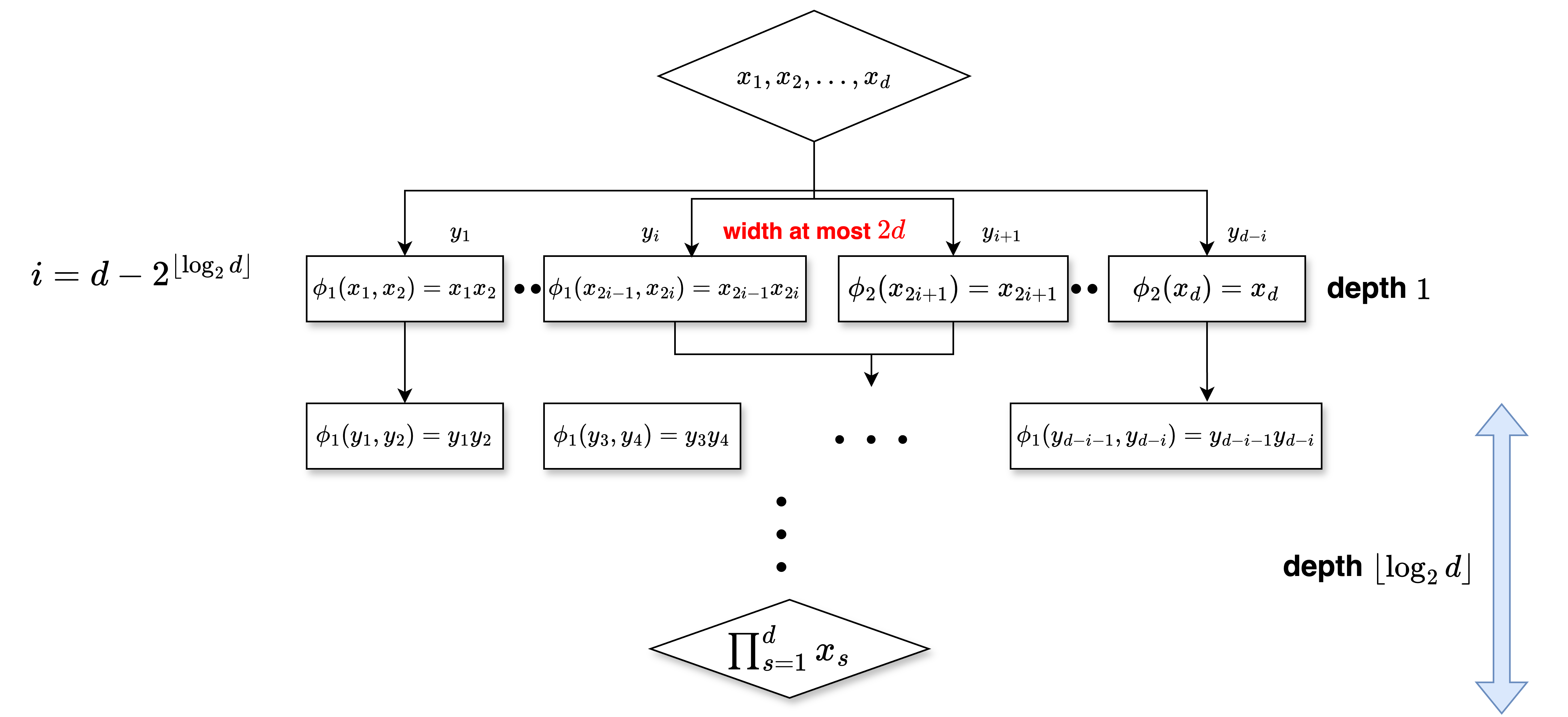}
    \caption{Binary-tree realization of the product \(\prod_{j=1}^d x_j\).}
    \label{fig:prod-square-relu}
\end{figure}

\begin{proof}
The squared ReLU activation satisfies \(x^2=\sigma_2(x)+\sigma_2(-x)\).
Therefore, using the identity \(xy=((x+y)^2-x^2-y^2)/2\), the bilinear map
\((x,y)\mapsto xy\) can be realized exactly by a depth-one, width-three squared
ReLU block. The identity channels can be passed forward by affine connections;
if one uses a strictly layered realization without skip connections, this can be
implemented by a fixed-size squared ReLU identity block. This only changes the
constants in the width and parameter bounds.

Let \(k:=2^{\lfloor\log_2 d\rfloor}\) and \(i:=d-k\). Then \(0\le i<k\), and
\(k\) is the largest power of two not exceeding \(d\). The first layer
compresses the list \((x_1,\dots,x_d)\) of length \(d\) to a list of length
\(k\) as follows. We perform \(i\) adjacent multiplications
\((x_1,x_2),(x_3,x_4),\dots,(x_{2i-1},x_{2i})\), each realized by the bilinear
block above, and pass the remaining inputs unchanged through identity blocks.
Each multiplication reduces the list length by one, so after these \(i\)
multiplications the number of outputs is \(d-i=k\). This compression layer uses
at most \(3i+2(d-2i)\le 2d\) neurons and \(\mathcal O(d)\) parameters.

After this step, the number of active factors is exactly
\(k=2^{\lfloor\log_2 d\rfloor}\). The remaining layers form a perfect binary
tree. In the \(j\)-th binary-tree layer, \(1\le j\le \lfloor\log_2 d\rfloor\),
there are \(k/2^j\) pairwise multiplications. Each multiplication uses three
neurons, and hence the width of this layer is at most \(3k/2^j\le 3d/2<2d\).
The total number of neurons is bounded by
\[
3i+2(d-2i)+3\sum_{j=1}^{\lfloor\log_2 d\rfloor}\frac{k}{2^j}
\le 2d+3(k-1)<5d.
\]
Thus the number of neurons is \(\mathcal O(d)\), and the same is true for the
number of nonzero parameters. The depth is the initial compression layer plus
the \(\lfloor\log_2 d\rfloor\) binary-tree layers, namely
\(L\le \lfloor\log_2 d\rfloor+1\). By construction, the output is exactly
\(\prod_{j=1}^d x_j\). The construction is illustrated in
Figure~\ref{fig:prod-square-relu}.
\end{proof}

We next consider ReLU networks. Unlike squared ReLU networks, ReLU networks
cannot represent multiplication exactly, but they can approximate products
efficiently while reminding of the symmetric structure. The next lemma will be useful in estimating the approximation error of neural networks to symmetric sparse grid bases.

\begin{lemma}[Symmetric product approximation by ReLU neural networks]
\label{lem:symmetric-product-relu}
For any \(M,K,d\in\mathbb N_+\) with
\(d\ge2\), there exists a permutation-invariant ReLU neural network
\(\Psi_{M,K,d}:[-1,1]^d\to\mathbb R\) such that
\[
\Psi_{M,K,d}(x_{\pi(1)},\dots,x_{\pi(d)})
=
\Psi_{M,K,d}(x_1,\dots,x_d),
\qquad \forall \pi\in S_d,
\]
and
\[
\left\|
\Psi_{M,K,d}(\vx)-x_1x_2\cdots x_d
\right\|_{W^{1,\infty}([-1,1]^d)}
\le
10(d-1)(M+1)^{-7dK}.
\]
Moreover, if \(x_i=0\) for some \(i\), then
\[
\Psi_{M,K,d}(x_1,\dots,x_d)=0.
\]
The network can be chosen with width
\[
N\le C(M+d)
\]
and depth
\[
L\le 14d(d-1)K+Cd^2,
\]
where \(C>0\) is an absolute constant. In addition, all weights and biases can
be chosen to have magnitude bounded by an absolute constant independent of
\(M\), \(K\), and \(d\).
\end{lemma}

\begin{proof}
Let \(S:[-1,1]^d\to[-1,1]^d\) denote the sorting map, namely
\[
S(\vx)=(S(\vx)_1,\dots,S(\vx)_d),
\qquad
S(\vx)_1\le S(\vx)_2\le \cdots\le S(\vx)_d,\qquad\{S(\vx)_i:~1\leq i\leq d\}=\{x_i:~1\leq i\leq d\}.
\]
We first show that \(S\) can be realized exactly by a ReLU neural network with
controlled size.

For \(i=1,\dots,d-1\), define the adjacent comparator
\(C_i:\mathbb R^d\to\mathbb R^d\) by
\[
C_i(x_1,\dots,x_d)
=
(x_1,\dots,x_{i-1},
\min\{x_i,x_{i+1}\},
\max\{x_i,x_{i+1}\},
x_{i+2},\dots,x_d).
\]
The map \(C_i\) can be realized exactly by a ReLU network. Indeed, for any
\(a,b\in\mathbb R\),
\[
\min\{a,b\}
=
\frac{a+b-|a-b|}{2},
\qquad
\max\{a,b\}
=
\frac{a+b+|a-b|}{2},
\]
and
\[
|a-b|=\sigma(a-b)+\sigma(b-a).
\]
All the other coordinates are passed through unchanged, using the identity
representation
\[
t=\sigma(t)-\sigma(-t).
\]
Therefore each comparator \(C_i\) is an exact ReLU network with width \(Cd\),
depth \(C\), and weights and biases bounded by an absolute constant.

We now compose these adjacent comparators according to the bubble-sort
algorithm. Starting from \(\vx^{(0)}=\vx\), for each pass
\(r=1,\dots,d-1\), we successively apply
\[
C_1,\ C_2,\ \dots,\ C_{d-r}.
\]
Equivalently, the \(r\)-th pass compares the adjacent pairs
\[
(x_1,x_2),\ (x_2,x_3),\ \dots,\ (x_{d-r},x_{d-r+1}),
\]
and replaces each pair by its ordered version
\[
(x_i,x_{i+1})
\mapsto
(\min\{x_i,x_{i+1}\},\max\{x_i,x_{i+1}\}).
\]
After the first pass, the largest entry is moved to the last coordinate; after
the second pass, the second largest entry is moved to the \((d-1)\)-st
coordinate. Continuing this procedure, after \(d-1\) passes the vector is sorted
in nondecreasing order. Hence this composition realizes the sorting map \(S\)
exactly. The total number of comparators is
\[
\sum_{r=1}^{d-1}(d-r)=\frac{d(d-1)}{2}.
\]
Thus \(S\) can be realized by a ReLU network with width \(Cd\), depth \(Cd^2\),
and uniformly bounded weights and biases.

Let \(\Phi_{M,K,d}\) be the product network from
Lemma~\ref{lem:product-relu}. Define
\[
\Psi_{M,K,d}(\vx):=\Phi_{M,K,d}(S(\vx)).
\]
Since sorting is invariant under permutations, for every \(\pi\in S_d\),
\[
S(x_{\pi(1)},\dots,x_{\pi(d)})=S(x_1,\dots,x_d).
\]
Therefore
\[
\Psi_{M,K,d}(x_{\pi(1)},\dots,x_{\pi(d)})
=
\Psi_{M,K,d}(x_1,\dots,x_d),
\]
so \(\Psi_{M,K,d}\) is permutation invariant.

Next, sorting does not change the product:
\[
\prod_{i=1}^d S(\vx)_i
=
\prod_{i=1}^d x_i.
\]
Hence, by Lemma~\ref{lem:product-relu},
\[
\left\|
\Psi_{M,K,d}(\vx)-x_1x_2\cdots x_d
\right\|_{L^\infty([-1,1]^d)}
\le
10(d-1)(M+1)^{-7dK}.
\]

It remains to verify the derivative estimate. The sorting map \(S\) is
piecewise linear and differentiable almost everywhere. At every differentiability
point, \(DS(\vx)\) is a permutation matrix. Therefore, for
\[
E(\vz):=\Phi_{M,K,d}(\vz)-z_1z_2\cdots z_d,
\]
we have, almost everywhere,
\[
\nabla \bigl(E(S(\vx))\bigr)
=
DS(\vx)^\top \nabla E(S(\vx)).
\]
Since \(DS(\vx)\) is a permutation matrix almost everywhere, it does not increase
the \(L^\infty\)-norm of the gradient. Thus
\[
\left\|
\Psi_{M,K,d}(\vx)-x_1x_2\cdots x_d
\right\|_{W^{1,\infty}([-1,1]^d)}
\le
10(d-1)(M+1)^{-7dK}.
\]

Finally, if \(x_i=0\) for some \(i\), then one coordinate of \(S(\vx)\) is also
zero. By the zero-preserving property of \(\Phi_{M,K,d}\), we obtain
\[
\Psi_{M,K,d}(\vx)=\Phi_{M,K,d}(S(\vx))=0.
\]
The width, depth, and parameter bounds follow by composing the sorting network
with the product network \(\Phi_{M,K,d}\). This completes the proof.
\end{proof}

\begin{lemma}\label{lem:prod}
Let $f_{s},g_{s}\in H^{1}([0,1])$ for $s=1,2,\dots,d$ and assume
\[
  \|f_{s}-g_{s}\|_{H^{1}([0,1])}\le\delta,
  \qquad
  \max_{1\le s\le d}\bigl\{\|f_{s}\|_{H^{1}([0,1])},
                           \|g_{s}\|_{H^{1}([0,1])}\bigr\}\le M .
\]
Then
\[
  \Bigl\|
    \prod_{s=1}^{d}f_{s}(x_s)-\prod_{s=1}^{d}g_{s}(x_s)
  \Bigr\|_{H^{1}([0,1]^d)}
  \;\le\;
  d(d+1)\,M^{\,d-1}\,\delta.
\]
\end{lemma}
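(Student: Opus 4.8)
The plan is the standard ``swap one factor at a time'' telescoping argument, combined with one structural observation that keeps the constant only polynomial in $d$: since $f_s(x_s)$ and $g_s(x_s)$ depend on \emph{distinct} coordinates, the product $\prod_{s=1}^d f_s(x_s)$ is a genuine tensor product, so its $H^1([0,1]^d)$ norm essentially factorizes across coordinates and no Sobolev product inequality (which would contribute a dimension-dependent embedding constant) is ever needed.

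First I would record a tensor sub-multiplicativity estimate: for any univariate $h_1,\dots,h_d\in H^1([0,1])$ --- each of which is, up to modification on a null set, continuous and bounded on $[0,1]$ by the one-dimensional Sobolev embedding --- the function $F(\vx):=\prod_{s=1}^d h_s(x_s)$ belongs to $H^1([0,1]^d)$ with weak derivatives $\partial_j F(\vx)=h_j'(x_j)\prod_{s\neq j}h_s(x_s)$, and
\[
  \|F\|_{H^1([0,1]^d)}\ \le\ \sqrt{d+1}\,\prod_{s=1}^d\|h_s\|_{H^1([0,1])}.
\]
This is immediate from Fubini: $\|F\|_{L^2([0,1]^d)}^2=\prod_s\|h_s\|_{L^2([0,1])}^2$ and $\|\partial_j F\|_{L^2([0,1]^d)}^2=\|h_j'\|_{L^2([0,1])}^2\prod_{s\neq j}\|h_s\|_{L^2([0,1])}^2$; bounding each one-dimensional factor $\|h_s\|_{L^2}$ and $\|h_j'\|_{L^2}$ by $\|h_s\|_{H^1}$ and summing the $d$ gradient terms gives $\|F\|_{H^1}^2\le(1+d)\prod_s\|h_s\|_{H^1}^2$.

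Next I would telescope:
\[
  \prod_{s=1}^d f_s(x_s)-\prod_{s=1}^d g_s(x_s)
  =\sum_{k=1}^d\Bigl(\prod_{s=1}^{k-1}g_s(x_s)\Bigr)\bigl(f_k(x_k)-g_k(x_k)\bigr)\Bigl(\prod_{s=k+1}^d f_s(x_s)\Bigr),
\]
noting that the $k$-th summand is again a tensor product of the $d$ univariate functions $g_1,\dots,g_{k-1},\,f_k-g_k,\,f_{k+1},\dots,f_d$. Applying the first estimate to each summand, using $\|f_s\|_{H^1},\|g_s\|_{H^1}\le M$ for the $d-1$ unchanged factors and $\|f_k-g_k\|_{H^1}\le\delta$ for the remaining one, each summand has $H^1([0,1]^d)$ norm at most $\sqrt{d+1}\,M^{d-1}\delta$. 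Summing over $k=1,\dots,d$ and using the crude bound $\sqrt{d+1}\le d+1$ then gives $\bigl\|\prod_s f_s(x_s)-\prod_s g_s(x_s)\bigr\|_{H^1([0,1]^d)}\le d\sqrt{d+1}\,M^{d-1}\delta\le d(d+1)\,M^{d-1}\delta$, as claimed.

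The only point requiring any care is the first step --- verifying that the separated product lies in $H^1$ with the expected weak gradient and that the $L^2$ norms factorize, which is a routine Fubini/density argument and not a real obstacle; everything else is bookkeeping. In fact the argument delivers the sharper constant $d\sqrt{d+1}$, so the stated $d(d+1)$ leaves comfortable room.
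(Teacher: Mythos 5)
Your proof is correct and takes essentially the same route as the paper: a telescoping decomposition of the product difference combined with the Fubini/tensor factorization of norms across the separated coordinates, with $\|f_s\|_{H^1},\|g_s\|_{H^1}\le M$ absorbing the unchanged factors and $\|f_k-g_k\|_{H^1}\le\delta$ the swapped one. The only difference is organizational: the paper bounds the $L^2$ part and each partial derivative separately before summing, while you bound the full $H^1([0,1]^d)$ norm of each telescoped summand via one tensor sub-multiplicativity estimate, which even gives the slightly sharper constant $d\sqrt{d+1}\le d(d+1)$.
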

\begin{proof}
    First, we consider $L^2$-error, we have that \begin{align}
       &\Bigl\|
    \prod_{s=1}^{d}f_{s}(x_s)-\prod_{s=1}^{d}g_{s}(x_s)
  \Bigr\|_{L^{2}([0,1]^d)}\le   \sum_{s=1}^{d}\Bigl\|
    \prod_{t=1}^{s-1}f_{t}(x_t)(f_s(x_s)-g_s(x_s))\prod_{t=s+1}^{d}g_{s}(x_s)
  \Bigr\|_{L^{2}([0,1]^d)}\notag\\\le&  \sum_{s=1}^{d} \prod_{t=1}^{s-1}\|f_{t}(x_t)\|_{L^{2}([0,1])}\prod_{t=s+1}^{d}\|g_{s}(x_s)\|_{L^2([0,1])}\Bigl\|
   f_s(x_s)-g_s(x_s)
  \Bigr\|_{L^{2}([0,1])}\le dM^{d-1}\delta.
    \end{align}For the derivative term, we consider the partial derivative of $s_*$, we have that \begin{align}
        &\Bigl\|
    f'_{s_*}\prod_{s=1,s\neq s_*}^{d}f_{s}(x_s)-g'_{s_*}\prod_{s=1,s\neq s_*}^{d}g_{s}(x_s)
  \Bigr\|_{L^{2}([0,1]^d)}\notag\\\le& \Bigl\|
    (f'_{s_*}-g'_{s_*})\prod_{s=1,s\neq s_*}^{d}f_{s}(x_s)
  \Bigr\|_{L^{2}([0,1]^d)} +\left\|
    g'_{s_*}\left(\prod_{s=1,s\neq s_*}^{d}f_{s}(x_s)-\prod_{s=1,s\neq s_*}^{d}g_{s}(x_s)\right)
  \right\|_{L^{2}([0,1]^d)}\notag\\\le & M^{d-1}\delta+(d-1)M^{d-1}\delta=dM^{d-1}\delta.
    \end{align}Therefore, we have \[
  \Bigl\|
    \prod_{s=1}^{d}f_{s}(x_s)-\prod_{s=1}^{d}g_{s}(x_s)
  \Bigr\|_{H^{1}([0,1]^d)}
  \;\le\;
  d(d+1)\,M^{\,d-1}\,\delta.
\]
\end{proof}

\begin{proposition}[Squared ReLU realization of symmetrized hat bases]
\label{prop:square-relu-sym-basis}
Let \(d\in\mathbb N\), and let \(\vl,\vi\in\mathbb N_+^d\). For any
\(0<\varepsilon\le1\), there exists a symmetric squared ReLU neural network
\(\overline\psi_{\vl,\vi}:\Omega\to\mathbb R\) such that
\begin{equation}
\left\|
\psi_{\vl,\vi}
-
\overline\psi_{\vl,\vi}
\right\|_{H^1(\Omega)}
\le \varepsilon,
\label{eq:square-relu-sym-basis-error}
\end{equation}
where
\[
\psi_{\vl,\vi}(\vx)
:=
\sum_{\tau\in S_d}\phi_{\vl,\vi}(\tau(\vx)).
\]
Moreover,
\[
\operatorname{supp}\overline\psi_{\vl,\vi}
\subset
\bigcup_{\tau\in S_d}\operatorname{supp}\phi_{\vl,\vi}(\tau(\cdot)).
\]
The network can be chosen with width \(N\le C d^3 2^{d-1}\), depth
\(L\le \lfloor \log_2 d\rfloor+2\), and total number of neurons and nonzero
parameters bounded by \(C d^3 2^{d-1}\log_2 d\).

In addition, the parameter magnitudes can be described as follows. The only
parameters depending on \(\varepsilon\) and \(|\vl|_\infty\) occur in the
one-dimensional feature approximation blocks. These parameters can be chosen
with magnitude bounded by
\begin{equation}
C\,2^{|\vl|_\infty}
\left(
\frac{d(d+1)(2M_{\vl,\vi})^{d-1}}{\varepsilon}
\right)^2,
\label{eq:square-relu-param-bound}
\end{equation}
where
\[
M_{\vl,\vi}
:=
\max_{\boldsymbol\delta\in\{\pm1\}^d,\ 1\le s\le d}
\left\|
\sum_{j=1}^d \delta_j\phi_{l_j,i_j}
\right\|_{H^1([0,1])}.
\]
All parameters in the product blocks and in the final Glynn linear combination
are bounded by an absolute constant.
\end{proposition}

\begin{proof}
Let \(\sigma_2(x):=\max\{x,0\}^2\). We first approximate the ReLU function by
squared ReLU networks. For \(\rho>0\), define
\[
S_\rho(x):=\frac{\sigma_2(x)-\sigma_2(x-\rho)}{2\rho}.
\]
Then \(S_\rho\) is a depth-one squared ReLU network of width \(2\). Moreover,
for every fixed \(M>0\),
\begin{equation}
\|S_\rho-\operatorname{ReLU}\|_{H^1([-M,M])}
\le C_M\rho^{1/2}.
\label{eq:Srho-relu-rate}
\end{equation}

Indeed, a direct computation gives
\[
S_\rho'(x)
=
\begin{cases}
1, & x\ge \rho,\\[2mm]
x/\rho, & 0\le x\le \rho,\\[2mm]
0, & x\le 0.
\end{cases}
\]
Hence
\[
S_\rho'(x)-D\operatorname{ReLU}(x)
=
\begin{cases}
x/\rho-1, & 0<x<\rho,\\[2mm]
0, & \text{otherwise},
\end{cases}
\]
and therefore
\[
\|S_\rho'-D\operatorname{ReLU}\|_{L^2([-M,M])}^2
=
\int_0^\rho \left(1-\frac{x}{\rho}\right)^2\,dx
=
\frac{\rho}{3}.
\]
Thus
\[
\|S_\rho'-D\operatorname{ReLU}\|_{L^2([-M,M])}
\le C\rho^{1/2}.
\]

For the \(L^2\)-part, \(S_\rho(x)=\operatorname{ReLU}(x)\) outside
\([0,\rho]\), and on \([0,\rho]\) we have
\[
S_\rho(x)=\frac{x^2}{2\rho},
\qquad
\operatorname{ReLU}(x)=x.
\]
Therefore
\[
\|S_\rho-\operatorname{ReLU}\|_{L^2([-M,M])}^2
=
\int_0^\rho \left(x-\frac{x^2}{2\rho}\right)^2\,dx
\le C\rho^3.
\]
Combining the two estimates gives \eqref{eq:Srho-relu-rate}. Moreover, the only
large parameter in \(S_\rho\) is the output coefficient \((2\rho)^{-1}\), and
therefore the parameters of \(S_\rho\) are bounded by \(C\rho^{-1}\).

Let \(h\) be the standard hat function supported on \([-1,1]\), written as
\(h(x)=\operatorname{ReLU}(x+1)-2\operatorname{ReLU}(x)+\operatorname{ReLU}(x-1)\).
Define
\[
h_\rho(x)
:=
S_\rho(x+1-\rho)-2S_\rho(x)+S_\rho(x-1+\rho).
\]
Then \(h_\rho\to h\) in \(H^1([-M,M])\), and for \(\rho>0\) sufficiently small,
\(\operatorname{supp}h_\rho\subset[-1,1]=\operatorname{supp}h\) as shown in Fig.~\ref{fig:support}. The only
parameters in \(h_\rho\) that may become large are the output coefficients of
the \(S_\rho\)-blocks, and these are bounded by \(C\rho^{-1}\).

\begin{figure}
    \centering
    \includegraphics[width=0.77\linewidth]{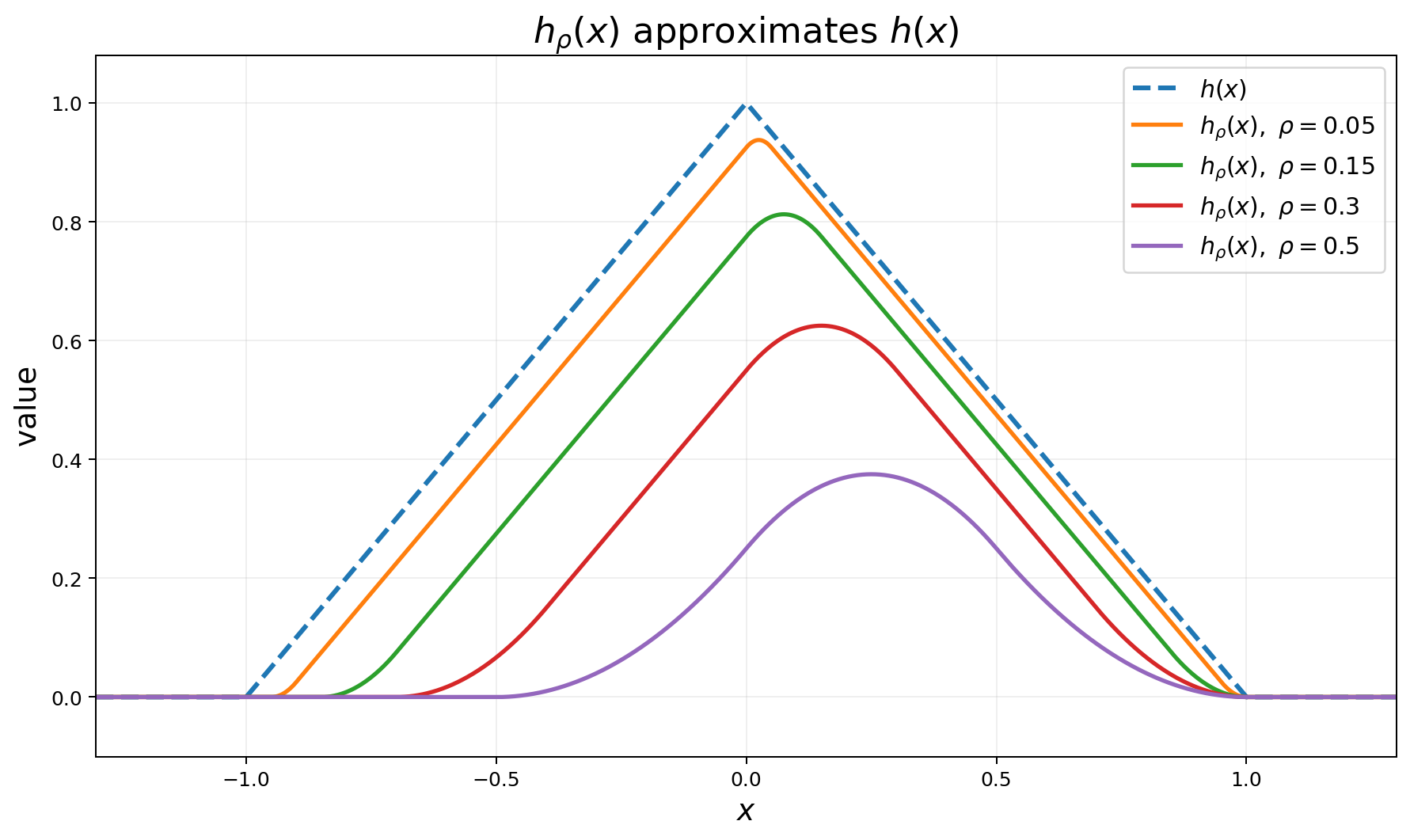}
    \caption{\(\operatorname{supp}h_\rho\subset[-1,1]=\operatorname{supp}h\)}
    \label{fig:support}
\end{figure}

Now consider the scaled hat function \(\phi_{l,i}(x)=h(2^l x-i)\). Its squared
ReLU approximation is \(\phi_{l,i,\rho}(x):=h_\rho(2^l x-i)\). The affine
scaling introduces weights and biases of size at most \(C2^l\), while the
smoothing coefficients coming from \(h_\rho\) are bounded by \(C\rho^{-1}\).
Thus the parameters in this one-dimensional block are bounded by
\(C\max\{2^l,\rho^{-1}\}\). Taking into account the scaling in the \(H^1\)-norm,
for every \(\eta>0\) one can choose \(\rho\) such that
\[
\|\phi_{l,i,\rho}-\phi_{l,i}\|_{H^1([0,1])}\le \eta,
\]
with all parameters in this one-dimensional block bounded by
\(C2^l\eta^{-2}\).

Consequently, for every sign vector
\(\boldsymbol\delta=(\delta_1,\dots,\delta_d)\in\{\pm1\}^d\) and every
\(s=1,\dots,d\), the one-dimensional feature
\(H_{\boldsymbol\delta,s}(x):=\sum_{j=1}^d\delta_j\phi_{l_j,i_j}(x)\) admits a
squared ReLU approximation \(\overline H_{\boldsymbol\delta,s}\) such that
\[
\|H_{\boldsymbol\delta,s}-\overline H_{\boldsymbol\delta,s}\|_{H^1([0,1])}
\le \eta.
\]
This network has depth \(1\), width \(O(d)\), and \(O(d)\) parameters. The only
parameters in this feature block depending on \(\eta\) and \(|\vl|_\infty\) are
those inherited from the one-dimensional hat approximations, and they are
bounded by
\begin{equation}
C\,2^{|\vl|_\infty}\eta^{-2}.
\label{eq:feature-param-bound}
\end{equation}
For \(\eta\) sufficiently small, we also have
\(\|\overline H_{\boldsymbol\delta,s}\|_{H^1([0,1])}\le 2M_{\vl,\vi}\), where
\(M_{\vl,\vi}\) is defined in the statement. Furthermore,
\[
\operatorname{supp}\overline H_{\boldsymbol\delta,s}
\subset
\bigcup_{j=1}^d\operatorname{supp}\phi_{l_j,i_j}.
\]

By Glynn's formula, Lemma~\ref{lem:sym-product-glynn},
\[
\psi_{\vl,\vi}(\vx)
=
2^{1-d}
\sum_{\substack{\boldsymbol\delta\in\{\pm1\}^d\\ \delta_1=1}}
\left(\prod_{k=1}^d\delta_k\right)
\prod_{s=1}^d H_{\boldsymbol\delta,s}(x_s).
\]
For each \(\boldsymbol\delta\), define
\(\overline G_{\boldsymbol\delta}(\vx):=\prod_{s=1}^d
\overline H_{\boldsymbol\delta,s}(x_s)\). By
Lemma~\ref{lem:product-square-relu}, this product can be realized exactly by a
squared ReLU network of depth \(\lfloor\log_2 d\rfloor+1\), width at most
\(2d\), and \(O(d)\) parameters. Importantly, all parameters in this product
block are bounded by an absolute constant and are independent of
\(\varepsilon\), \(\eta\), and \(|\vl|_\infty\).

We now estimate the error. By Lemma~\ref{lem:prod}, applied with
\(f_s=H_{\boldsymbol\delta,s}\), \(g_s=\overline H_{\boldsymbol\delta,s}\), and
\(M=2M_{\vl,\vi}\), we obtain
\[
\left\|
\prod_{s=1}^d H_{\boldsymbol\delta,s}(x_s)
-
\prod_{s=1}^d \overline H_{\boldsymbol\delta,s}(x_s)
\right\|_{H^1(\Omega)}
\le
d(d+1)(2M_{\vl,\vi})^{d-1}\eta.
\]
Therefore, after summing over the \(2^{d-1}\) sign vectors in Glynn's formula
and using the prefactor \(2^{1-d}\), the same bound holds for the symmetrized
basis:
\[
\left\|
\psi_{\vl,\vi}
-
2^{1-d}
\sum_{\substack{\boldsymbol\delta\in\{\pm1\}^d\\ \delta_1=1}}
\left(\prod_{k=1}^d\delta_k\right)
\overline G_{\boldsymbol\delta}
\right\|_{H^1(\Omega)}
\le
d(d+1)(2M_{\vl,\vi})^{d-1}\eta.
\]
Choose \(\eta:=\varepsilon/[d(d+1)(2M_{\vl,\vi})^{d-1}]\). Then the
approximation error is at most \(\varepsilon\). Define
\[
\overline\psi_{\vl,\vi}(\vx)
:=
2^{1-d}
\sum_{\substack{\boldsymbol\delta\in\{\pm1\}^d\\ \delta_1=1}}
\left(\prod_{k=1}^d\delta_k\right)
\overline G_{\boldsymbol\delta}(\vx).
\]

The support property follows from
\(\operatorname{supp}\overline H_{\boldsymbol\delta,s}
\subset\bigcup_{j=1}^d\operatorname{supp}\phi_{l_j,i_j}\) and from the product
construction. Hence
\[
\operatorname{supp}\overline\psi_{\vl,\vi}
\subset
\bigcup_{\tau\in S_d}\operatorname{supp}\phi_{\vl,\vi}(\tau(\cdot)).
\]

It remains to estimate the architecture. For each fixed
\(\boldsymbol\delta\), constructing the \(d\) feature functions
\(\overline H_{\boldsymbol\delta,s}\), \(s=1,\dots,d\), requires width
\(O(d^2)\) and \(O(d^2)\) parameters. The product realization contributes width
\(O(d)\), depth \(\lfloor\log_2 d\rfloor+1\), and \(O(d)\) parameters. Since
there are \(2^{d-1}\) sign vectors, the full network can be chosen with width
\(N\le Cd^3 2^{d-1}\), depth \(L\le\lfloor\log_2 d\rfloor+2\), and total number
of neurons and nonzero parameters bounded by \(Cd^3 2^{d-1}\log_2 d\).

Finally, we record the parameter magnitudes. By \eqref{eq:feature-param-bound}
and the choice of \(\eta\), the parameters in the one-dimensional feature
approximation blocks are bounded by
\[
C\,2^{|\vl|_\infty}\eta^{-2}
=
C\,2^{|\vl|_\infty}
\left(
\frac{d(d+1)(2M_{\vl,\vi})^{d-1}}{\varepsilon}
\right)^2.
\]
All parameters in the product blocks are bounded by an absolute constant, and
the final Glynn linear combination only uses coefficients of magnitude
\(2^{1-d}\). Hence the displayed bound is only needed for the innermost
one-dimensional feature approximation blocks; all outer blocks have uniformly
bounded parameters. This
completes the proof.
\end{proof}

\begin{proposition}[Symmetric ReLU approximation of symmetrized hat basis functions]
\label{prop:sym-basis-relu}
For any \(\vl,\vi\in \mathbb N^d\) and any \(0<\varepsilon\le 1\), there exists
a symmetric ReLU neural network
\(\Phi_{\vl,\vi,\varepsilon}:\Omega\to\mathbb R\) such that
\[
\left\|
\sum_{\tau\in S_d}\phi_{\vl,\vi}(\tau(\vx))
-
\Phi_{\vl,\vi,\varepsilon}(\vx)
\right\|_{W^{1,\infty}(\Omega)}
\le \varepsilon .
\]
Moreover,
\(\Phi_{\vl,\vi,\varepsilon}(\vx)=0\) for all \(\vx\in\partial\Omega\).

The network can be chosen with width
\[
N\le 3d^2+C2^{d-1}d
\]
and depth
\[
L\le 1+14d(d-1)L_\varepsilon+Cd^2,
\]
where
\[
L_\varepsilon
:=
\left\lceil
\frac{
\log(\varepsilon^{-1})+(d+1)\log d+|\vl|_\infty\log 2+\log 20
}{
7d\log(2d+1)
}
\right\rceil .
\]
The total number of nonzero parameters is bounded by
\(C(d^3+2^d d^2L_\varepsilon)\).

\end{proposition}

\begin{proof}
By Lemma~\ref{lem:sym-product-glynn},
\begin{align}
\sum_{\tau\in S_d}\phi_{\vl,\vi}(\tau(\vx))
=
\sum_{\tau\in S_d}\prod_{j=1}^d \phi_{l_j,i_j}(x_{\tau(j)})  =
2^{1-d}
\sum_{\substack{\boldsymbol{\delta}\in\{\pm1\}^d\\ \delta_1=1}}
\left(\prod_{j=1}^d \delta_j\right)
\prod_{i=1}^d
\left(\sum_{j=1}^d \delta_j\,\phi_{l_j,i_j}(x_i)\right).
\label{eq:glynn-basis-proof}
\end{align}
For each admissible sign vector \(\boldsymbol{\delta}\), define
\(H_{\boldsymbol{\delta},i}(\vx)
:=\sum_{j=1}^d\delta_j\phi_{l_j,i_j}(x_i)\), \(i=1,\dots,d\), and set
\(H_{\boldsymbol{\delta}}(\vx)
:=(H_{\boldsymbol{\delta},1}(\vx),\dots,H_{\boldsymbol{\delta},d}(\vx))\).
Since each one-dimensional hat function has the form
\(\phi_{l,i}(x)=\phi((x-i2^{-l})/2^{-l})\), all features
\(\phi_{l_j,i_j}(x_i)\), \(1\le i,j\le d\), can be realized exactly by a ReLU
network with width \(3d^2\), depth \(1\), and parameter magnitudes bounded by
\(C2^{|\vl|_\infty}\). The linear combinations defining
\(H_{\boldsymbol{\delta},i}\) only use coefficients \(\pm1\). Hence
\[
\|H_{\boldsymbol{\delta}}\|_{W^{1,\infty}(\Omega;\mathbb R^d)}
\le 1+d2^{|\vl|_\infty},
\qquad
\|H_{\boldsymbol{\delta}}\|_{L^\infty(\Omega;\mathbb R^d)}
\le d .
\]

Next, we apply the symmetric product approximation
Lemma~\ref{lem:symmetric-product-relu}, after rescaling from \([-1,1]^d\) to
\((-d,d)^d\). For every \(L\in\mathbb N_+\), there exists a
permutation-invariant ReLU network
\(P_{\boldsymbol{\delta},L}:(-d,d)^d\to\mathbb R\) such that
\[
P_{\boldsymbol{\delta},L}(z_{\pi(1)},\dots,z_{\pi(d)})
=
P_{\boldsymbol{\delta},L}(z_1,\dots,z_d),
\qquad \forall \pi\in S_d,
\]
and
\[
\|P_{\boldsymbol{\delta},L}(\vz)-z_1\cdots z_d\|_{W^{1,\infty}((-d,d)^d)}
\le
10(d-1)d^d(2d+1)^{-7dL}.
\]
Moreover, \(P_{\boldsymbol{\delta},L}(\vz)=0\) whenever \(z_i=0\) for some
\(i\). The product block has width \(Cd\), depth
\(14d(d-1)L+Cd^2\), and \(O(d^2L)\) nonzero parameters.

Define
\(\widetilde G_{\boldsymbol{\delta},L}(\vx)
:=P_{\boldsymbol{\delta},L}(H_{\boldsymbol{\delta}}(\vx))\). By the chain rule
and the bound on \(H_{\boldsymbol{\delta}}\),
\[
\left\|
\widetilde G_{\boldsymbol{\delta},L}
-
\prod_{i=1}^d H_{\boldsymbol{\delta},i}
\right\|_{W^{1,\infty}(\Omega)}
\le
10(d-1)d^d(1+d2^{|\vl|_\infty})(2d+1)^{-7dL}.
\]
Taking \(L=L_\varepsilon\), the right-hand side is at most \(\varepsilon\).

We now show that \(\widetilde G_{\boldsymbol{\delta},L}\) is permutation
invariant in \(\vx\). For any \(\pi\in S_d\),
\(H_{\boldsymbol{\delta}}(x_{\pi(1)},\dots,x_{\pi(d)})\) is just a permutation
of the coordinates of \(H_{\boldsymbol{\delta}}(\vx)\). Since
\(P_{\boldsymbol{\delta},L}\) is permutation invariant, it follows that
\[
\widetilde G_{\boldsymbol{\delta},L}(x_{\pi(1)},\dots,x_{\pi(d)})
=
\widetilde G_{\boldsymbol{\delta},L}(\vx).
\]

Finally define
\[
\Phi_{\vl,\vi,\varepsilon}(\vx)
:=
2^{1-d}
\sum_{\substack{\boldsymbol{\delta}\in\{\pm1\}^d\\ \delta_1=1}}
\left(\prod_{j=1}^d\delta_j\right)
\widetilde G_{\boldsymbol{\delta},L_\varepsilon}(\vx).
\]
Since each \(\widetilde G_{\boldsymbol{\delta},L_\varepsilon}\) is permutation
invariant, their linear combination
\(\Phi_{\vl,\vi,\varepsilon}\) is also permutation invariant. Using
\eqref{eq:glynn-basis-proof} and the preceding product approximation estimate,
the prefactor \(2^{1-d}\) cancels the number \(2^{d-1}\) of admissible sign
vectors, and hence
\[
\left\|
\sum_{\tau\in S_d}\phi_{\vl,\vi}(\tau(\vx))
-
\Phi_{\vl,\vi,\varepsilon}(\vx)
\right\|_{W^{1,\infty}(\Omega)}
\le \varepsilon .
\]

If \(\vx\in\partial\Omega\), then some coordinate \(x_i\in\{0,1\}\). Every
one-dimensional hat function vanishes at \(0\) and \(1\), so
\(H_{\boldsymbol{\delta},i}(\vx)=0\). By the zero-preserving property of the
product block,
\(\widetilde G_{\boldsymbol{\delta},L_\varepsilon}(\vx)=0\) for every
\(\boldsymbol{\delta}\). Therefore
\(\Phi_{\vl,\vi,\varepsilon}(\vx)=0\) on \(\partial\Omega\).

It remains to record the architecture. The feature layer realizing all
\(\phi_{l_j,i_j}(x_i)\) has width \(3d^2\), depth \(1\), and \(O(d^3)\)
nonzero parameters. For each of the \(2^{d-1}\) admissible sign vectors, the
symmetric product block has width \(Cd\), depth
\(14d(d-1)L_\varepsilon+Cd^2\), and \(O(d^2L_\varepsilon)\) nonzero parameters.
Placing these blocks in parallel gives the stated width, depth, and parameter
bounds.
\end{proof}

\begin{remark}
    The parameter magnitude statement of neural networks in Proposition \ref{prop:sym-basis-relu} is shown as follows. The
hat-function feature blocks contain affine scalings of size \(2^{l_j}\), hence
their parameters are bounded by \(C2^{|\vl|_\infty}\). The internal parameters
of the product blocks and of the sorting networks are uniformly bounded. The
only coefficients that may grow like \(d^d\) are the final output-layer
coefficients inside the scaled product approximation blocks. The final Glynn
linear combination uses coefficients \(2^{1-d}\prod_j\delta_j\), whose
magnitudes are at most \(1\). This completes the proof.
\end{remark}

\subsection{Proof of Theorem \ref{thm:main-sym-korobov neural network}}

\begin{proof}[Proof of Theorem~\ref{thm:main-sym-korobov neural network}]
By Corollary~\ref{better}, the symmetric sparse-grid approximation uses at most
\(C_s2^n\) active symmetric basis functions and satisfies
\[
\|f-f_n^{(2)}\|_E
\le
\frac{48C_s d}{5\sqrt3}
\left(\frac{5}{12}\right)^d
(C_s2^n)^{-1}|f|_{2,2}.
\]
Hence
\begin{equation}
\|f-f_n^{(2)}\|_E
\le
C d
\left(\frac{5}{12}\right)^d
2^{-n}|f|_{2,2}.
\label{eq:sg-error-main}
\end{equation}
Moreover, the coefficients in the expansion satisfy
\begin{align}
\sum_{\vl,\vi}|v_{\vl,\vi}|
&=
\sum_{\vl,\vi}
\left|
\frac{|\{\phi_{\vl,\vi}(\tau(\vx)):\tau\in S_d\}|}{d!}
\int_{\Omega}\phi_{\vl,\vi}(\vx)\,
\frac{\partial^{2d}f}{\partial x_1^2\cdots \partial x_d^2}\,d\vx
\right|
\notag\\
&\le
\sum_{\vl,\vi}
\left|
\int_{\Omega}\phi_{\vl,\vi}(\vx)\,
\frac{\partial^{2d}f}{\partial x_1^2\cdots \partial x_d^2}\,d\vx
\right|
\notag\\
&\le
\sum_{\vl,\vi}
2^{-d}\left(\frac{2}{3}\right)^{d/2}2^{-\frac32|\vl|_1}
\,|f|_{2,2;\operatorname{supp}(\phi_{\vl,\vi})}
\notag\\
&\le
\sum_{\vl}
2^{-d}\left(\frac{2}{3}\right)^{d/2}2^{-\frac32|\vl|_1}
\,|f|_{2,2}
=:C_0\,|f|_{2,2}.
\label{eq:coeff-l1-main}
\end{align}

We first prove the squared ReLU result. For each active symmetric basis
function, apply Proposition~\ref{prop:square-relu-sym-basis} with accuracy
\(\eta:=(5/12)^d(C_s2^n)^{-1}\). Thus each active basis function admits a
permutation-invariant squared ReLU approximation with \(H^1\)-error at most
\(\eta\). Taking the linear combination of these basis-level networks with
coefficients \(\bar v_{\vl,\vi}\), we obtain a permutation-invariant squared
ReLU network \(\Phi_{\mathrm{ReLU}_2}\). By \eqref{eq:coeff-l1-main},
\[
\|f_n^{(2)}-\Phi_{\mathrm{ReLU}_2}\|_E
\le
\eta\sum_{\vl,\vi}|\bar v_{\vl,\vi}|
\le
C_0
\left(\frac{5}{12}\right)^d
(C_s2^n)^{-1}|f|_{2,2}.
\]
Combining this with \eqref{eq:sg-error-main} yields
\begin{equation}
\|f-\Phi_{\mathrm{ReLU}_2}\|_E
\le
C d
\left(\frac{5}{12}\right)^d
2^{-n}|f|_{2,2}.
\label{eq:main-square-relu-error}
\end{equation}
The boundary condition follows from the boundary-vanishing property of the
basis-level squared ReLU approximations.

By Proposition~\ref{prop:square-relu-sym-basis}, each active basis approximation
has width \(Cd^3 2^{d-1}\), depth \(\lfloor\log_2d\rfloor+2\), and
\(Cd^3 2^{d-1}\log_2d\) nonzero parameters. Since there are at most
\(C_s2^n\) active basis functions, placing the subnetworks in parallel and
taking the final linear combination gives
\[
N_2\le C_s d^3 2^{n+d-1},\qquad
L_2\le \lfloor\log_2 d\rfloor+2,\qquad
\mathcal M_2\le C_s d^3 2^{n+d-1}\log_2 d.
\]
Therefore \(2^{-n}\le C d^3 2^{d-1}(\log_2 d)\mathcal M_2^{-1}\). Substituting
this into \eqref{eq:main-square-relu-error} gives
\[
\|f-\Phi_{\mathrm{ReLU}_2}\|_E
\le
C d^4
\left(\frac{5}{12}\right)^d
2^{d-1}
(\log_2 d)\mathcal M_2^{-1}|f|_{2,2}.
\]
Since \(2^{d-1}(5/12)^d\le (5/6)^d\), we obtain
\[
\|f-\Phi_{\mathrm{ReLU}_2}\|_E
\le
C d^4
\left(\frac{5}{6}\right)^d
(\log_2 d)
\mathcal M_2^{-1}|f|_{2,2}.
\]

We next prove the ReLU result. For each active symmetric basis function, apply
Proposition~\ref{prop:sym-basis-relu} with the same accuracy
\(\eta=(5/12)^d(C_s2^n)^{-1}\). Each active basis function therefore admits a
permutation-invariant ReLU approximation with \(H^1\)-error at most \(\eta\).
Taking the linear combination of these ReLU basis networks with coefficients
\(\bar v_{\vl,\vi}\), we obtain a permutation-invariant ReLU network
\(\Phi_{\mathrm{ReLU}}\). By \eqref{eq:coeff-l1-main},
\[
\|f_n^{(2)}-\Phi_{\mathrm{ReLU}}\|_E
\le
C_0
\left(\frac{5}{12}\right)^d
(C_s2^n)^{-1}|f|_{2,2}.
\]
Combining this with \eqref{eq:sg-error-main} gives
\[
\|f-\Phi_{\mathrm{ReLU}}\|_E
\le
C d
\left(\frac{5}{12}\right)^d
2^{-n}|f|_{2,2}.
\]
The boundary condition follows from the boundary-vanishing property in
Proposition~\ref{prop:sym-basis-relu}.

By Proposition~\ref{prop:sym-basis-relu}, each active ReLU basis network has
width at most \(3d^2+C2^{d-1}d\), depth at most
\(1+14d(d-1)L_\eta+Cd^2\), and at most
\(C(d^3+2^d d^2L_\eta)\) nonzero parameters. Since there are at most
\(C_s2^n\) active basis functions, the full ReLU network satisfies
\[
N_1
\le
C_s2^n(3d^2+C2^{d-1}d)
\le
C_s d^2 2^{n+d}.
\]
Moreover, since \(\eta^{-1}=(12/5)^d C_s2^n\), we have
\(\log(\eta^{-1})=n\log2+d\log(12/5)+\log C_s\). Hence
\[
L_\eta
\le
C\left(
1+\frac{n}{d\log(2d+1)}
\right),
\]
and therefore
\[
L_1
\le
1+14d(d-1)L_\eta+Cd^2
\le
C_s d^2(1+n).
\]
Similarly,
\[
\mathcal M_1
\le
C_s2^n\bigl(d^3+2^d d^2L_\eta\bigr)
\le
C_s d^2 2^{n+d}
\left(
1+\frac{n}{d\log(2d+1)}
\right).
\]

It remains to rewrite the approximation error in terms of \(\mathcal M_1\).
From the preceding bound,
\[
2^{-n}
\le
C d^2 2^d
\left(
1+\frac{n}{d\log(2d+1)}
\right)\mathcal M_1^{-1}.
\]
Combining this with
\[
\|f-\Phi_{\mathrm{ReLU}}\|_E
\le
C d\left(\frac{5}{12}\right)^d2^{-n}|f|_{2,2}
\]
gives
\[
\|f-\Phi_{\mathrm{ReLU}}\|_E
\le
C d^3
\left(\frac56\right)^d
\left(
1+\frac{n}{d\log(2d+1)}
\right)
\mathcal M_1^{-1}|f|_{2,2}.
\]
Since \(2^n\le C\mathcal M_1\), we have \(n\le C\log(2\mathcal M_1)\). Thus
\[
\|f-\Phi_{\mathrm{ReLU}}\|_E
\le
C d^3
\left(\frac56\right)^d
\frac{\log(2\mathcal M_1)}{\mathcal M_1}
|f|_{2,2}.
\]

The parameter-magnitude statements follow directly from
Proposition~\ref{prop:square-relu-sym-basis} and
Proposition~\ref{prop:sym-basis-relu}, together with the coefficient estimate
\eqref{eq:coeff-l1-main}. This completes the proof.
\end{proof}

We remark that the approximation rate in Theorem~\ref{thm:main-sym-korobov neural network} is optimal among all continuous approximation schemes.  
Indeed, the space $H^{2}_{0}([0,1])$ is a one–dimensional special case of the Korobov space $X^{2,2}(\Omega)$.  
For this space it is known \cite{devore1989optimal} that no continuous approximator using $m$ free parameters can outperform the rate $\mathcal{O}(m^{-1})$ measured by the $H^1$-norm.  Formally, consider a subset \(X\) of a Banach space, a set of neural networks with \(N\) parameters, and an approximation scheme \(G : X \to \mathbb{R}^N\) that, given an input \(f \in X\), gives as output the parameters \(\boldsymbol{\theta}_f = G(f)\) of the neural network approximating \(f\). If \(G\) is continuous, then we call it a continuous function approximator.  
For such schemes the lower bound of \cite{devore1989optimal} gives the rate $m^{-1}$ in $f\in H^{2}_0([0,1])$.  
Since Theorem~\ref{thm:main-sym-korobov neural network} achieves the matching upper bound $\mathcal{O}(m^{-1})$ measured by the $H^1$-norm, its rate cannot be improved within the framework of continuous approximators.

For the generalization error analysis in the next section, we restate
Theorem~\ref{thm:main-sym-korobov neural network} in the following equivalent form.

\begin{corollary}[Decomposition form for generalization analysis]
\label{cor:gen-error-form}
Let \(f\in X_{\mathrm{sym}}^{2,2}(\Omega)\) with \(\Omega=[0,1]^d\), and let
\(n\in\mathbb N_+\). Then there exists \(m\) such that the following two
statements hold.

\emph{(i) Squared ReLU case.}
There exists a permutation-invariant squared ReLU neural network
\(\Phi_{\mathrm{ReLU}_2}(\vx)=\sum_{i=1}^{m}s_i^{(2)}(\vx)\), where each
summand \(s_i^{(2)}\) is a permutation-invariant squared ReLU subnetwork with
width \(Cd^2\) and depth \(\lfloor\log_2 d\rfloor+2\). Moreover,
\(\Phi_{\mathrm{ReLU}_2}|_{\partial\Omega}=0\), and
\begin{equation}
\|f-\Phi_{\mathrm{ReLU}_2}\|_E
\le
C d\left(\frac{5}{6}\right)^d
m^{-1}|f|_{2,2}.
\label{eq:gen-form-square-relu}
\end{equation}

\emph{(ii) ReLU case.}
There exists a permutation-invariant ReLU neural network
\(\Phi_{\mathrm{ReLU}}(\vx)=\sum_{i=1}^{m}s_i^{(1)}(\vx)\), where
\(m\le C2^{n+d-1}\), and each summand \(s_i^{(1)}\) is a
permutation-invariant ReLU subnetwork with width \(N_1\le Cd^2\) and depth
\(L_1\le Cd^2(\log m+d\log d)\). Moreover,
\(\Phi_{\mathrm{ReLU}}|_{\partial\Omega}=0\), and
\begin{equation}
\|f-\Phi_{\mathrm{ReLU}}\|_E
\le
C d\left(\frac{5}{6}\right)^d
m^{-1}|f|_{2,2}.
\label{eq:gen-form-relu}
\end{equation}
\end{corollary}


%

%

\section{Proof of Theorem \ref{general thm}}\label{sec:thm2}

We now proceed to the proof of the generalization error bound stated in
Theorem~\ref{general thm}. As shown in
Theorem~\ref{thm:main-sym-korobov neural network}, our approximation results
involve two types of neural networks: ReLU networks and squared ReLU networks.
To make the analysis more convenient and better adapted to both constructions,
we control the estimation error through the pseudo-dimension of the derivatives
of the network class, following the approach of \cite{yang2023nearly}. This
provides a unified way to analyze the generalization behavior of both network
classes constructed in Theorem~\ref{thm:main-sym-korobov neural network}.

This point is important because our generalization estimate incorporates
derivative information. Existing neural-network results for Korobov-type
function classes, such as
\cite{suzuki2018adaptivity,montanelli2019new,yang2024near,mao2022approximation,liu2025approximation,li2025higher,li2025some,yang2026approximation}, either do not include derivative
information or focus only on the approximation error. Incorporating gradient
information substantially changes the estimation-error analysis. In particular,
the derivative of ReLU are not globally Lipschitz continuous, and hence the
covering number of the corresponding derivative class may be infinite on the
whole domain under the standard sup-norm metric. To overcome this difficulty,
we use a uniform empirical covering number argument and bound the resulting
covering number by the pseudo-dimension of the derivative network class.
To the best of our knowledge, this pseudo-dimension-based treatment of
derivative classes has not previously been applied to generalization analysis
for Korobov-space neural-network approximation.

We start by recalling the relevant notion of covering numbers.
\begin{definition}[covering number \cite{anthony1999neural}]
				Let $(V,\|\cdot\|)$ be a normed space, and $\Theta\subset V$. $\{V_1,V_2,\ldots,V_n\}$ is an $\varepsilon$-covering of $\Theta$ if $\Theta\subset \cup_{i=1}^nB_{\varepsilon,\|\cdot\|}(V_i)$. The covering number $\fN(\varepsilon,\Theta,\|\cdot\|)$ is defined as \(\fN(\varepsilon,\Theta,\|\cdot\|):=\min \{n: \exists \varepsilon \text {-covering over } \Theta \text { of size } n\} \text {. }\)
			\end{definition}
			
			\begin{definition}[Uniform covering number \cite{anthony1999neural}]\label{uniform}
				{Suppose the $\fF$ is a class of functions from $\fX$ to $\sR$.} Given $n$ samples $\vZ_n=(z_1,\ldots,z_n)\in\fX^n$, define \[\fF|_{\vZ_n}=\{(u(z_1),\ldots,u(z_n)):u\in\fF\}.\]The uniform covering number $\fN(\varepsilon,\fF,n)$ is defined as \[\fN(\varepsilon,\fF,n)=\max_{\vZ_n\in\fX^n}\fN\left(\varepsilon, \fF|_{\vZ_n},\|\cdot\|_{\infty}\right),\]where $\fN\left(\varepsilon, \fF|_{\vZ_n},\|\cdot\|_{\infty}\right)$ denotes the $\varepsilon$-covering number of $\fF|_{\vZ_n}$ w.r.t the $l^\infty$-norm on $\vZ_n$ defined as $\|f\|_{\infty}=\sup_{\vz_i\in\vZ_n}|f(z_i)|$.
			\end{definition}


\begin{proposition}\label{prop:connect}
    Let $M \in \sN$, and assume that $\|\nabla f_\rho\|_{L^{\infty}(\Omega)}\le L$ and $\fY\in[-L,L]^d$ are almost surely for some $L \geq 1$. Then for $\kappa=1,2$, we have that \begin{align*}
        &\mathbb{E}\|f_{\fS,\fF_{m,L,C_*}^\kappa}-f_\rho\|_{E}^2\notag\\\le& \sum_{k=1}^d\frac{5136 L^4}{M}\left[\log\left(14 \fN\left(\frac{ 1}{80 LM}, \partial_k\fF_{m,L,C_*}^\kappa,M\right) \right)+1\right]+2\inf_{f\in\fF_{m,L,C_*}^\kappa}\|f-f_\rho\|^2_{E}.
    \end{align*}
\end{proposition}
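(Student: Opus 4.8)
## Proof Proposal for Proposition~\ref{prop:connect}

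The plan is to decompose the squared energy-norm error into a sum of $d$ one-dimensional problems, one for each partial derivative $\partial_k$, and then apply a standard regression generalization bound to each. Since $\|g\|_E^2 = \sum_{k=1}^d \|\partial_k g\|_{L^2(\Omega)}^2$, and since the uniform $\vx$-marginal together with the gradient-field assumption $\nabla f_\rho(\vx)=\int_{\fY}\vy\,\rho(\mathrm d\vy\mid\vx)$ means that $\partial_k f_\rho$ is exactly the regression function for the scalar response $y_k$ given $\vx$, I would recognize that the empirical risk minimizer $f_{\fS,\fF_{m,L}}$ of \eqref{equ:loss} is simultaneously doing least-squares regression in each coordinate. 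More precisely, $\mathcal E_\fS(f)=\frac1M\sum_j\sum_k(\partial_k f(\vx_j)-y_{j,k})^2$ splits over $k$, so minimizing the sum controls each coordinate's empirical risk; and because the $\vx$-marginal is uniform, $\mathbb E\|f_{\fS,\fF_{m,L}}-f_\rho\|_E^2 = \sum_k \mathbb E\|\partial_k f_{\fS,\fF_{m,L}}-\partial_k f_\rho\|_{L^2(\Omega)}^2$.

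The core step is to invoke \cite[Theorem 11.4]{gyorfi2002distribution} for each fixed $k$. That theorem bounds the expected $L^2$-error of an empirical risk minimizer over a function class $\fG$ (here $\partial_k\fF_{m,L}$) by a term of the form $C\frac{(\log\fN)\,(\text{sup bound})^2}{M}$ plus twice the approximation error $\inf_{g\in\fG}\|g-g_\rho\|^2$, provided the responses and the functions in $\fG$ are uniformly bounded. Here the responses $y_{j,k}$ lie in $[-L,L]$ by assumption, and every $g\in\partial_k\fF_{m,L}$ satisfies $\|g\|_{L^\infty(\Omega)}\le L$ by the very definition of $\fF_{m,L}$ (the constraint $\|\nabla\phi\|_{L^\infty(\Omega)}\le L$). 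So all boundedness hypotheses hold with the uniform bound $L$ (or a small constant multiple), and I would track the constants carefully — the threshold $\tfrac{1}{80LM}$ inside the covering number and the prefactor $5136L^4$ are exactly what comes out of specializing the inequality in \cite[Theorem 11.4]{gyorfi2002distribution} with $\beta\sim L$, $\sup|g|\le L$, and the covering radius scaled by $1/M$. Summing the $d$ resulting bounds over $k=1,\dots,d$, and bounding $\sum_k\inf_{g\in\partial_k\fF_{m,L}}\|g-\partial_k f_\rho\|_{L^2}^2 \le \inf_{f\in\fF_{m,L}}\sum_k\|\partial_k f-\partial_k f_\rho\|_{L^2}^2 = \inf_{f\in\fF_{m,L}}\|f-f_\rho\|_E^2$ (the single infimum over $f$ dominates the sum of per-coordinate infima, contributing the factor $2$), yields the stated inequality.

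The main obstacle I anticipate is not the decomposition itself but the bookkeeping required to apply \cite[Theorem 11.4]{gyorfi2002distribution} cleanly: that result is stated for a truncated estimator or under a sub-Gaussian/bounded noise assumption, and one must verify that the empirical risk minimizer over $\partial_k\fF_{m,L}$ already takes values in the correct range so that no truncation changes the estimator, and that the "$\beta$" and "$L$" constants in the reference align with our $L$. A second, more delicate point is that the per-coordinate classes $\partial_k\fF_{m,L}$ are not arbitrary: their elements are partial derivatives of sums of $m$ squared-ReLU subnetworks, and one must make sure the generalization theorem applies to the class $\partial_k\fF_{m,L}$ as a whole (its covering number $\fN(\tfrac1{80LM},\partial_k\fF_{m,L},M)$ appears on the right-hand side and will be bounded separately via pseudo-dimension estimates in the sections that follow). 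I would state the covering-number dependence abstractly here and defer its estimation, so that this proposition becomes a clean reduction of the generalization error to a covering-number bound.
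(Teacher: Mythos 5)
Your overall architecture (coordinate-wise splitting of the energy norm, citing \cite[Theorem 11.4]{gyorfi2002distribution}, deferring the covering-number bound) matches the paper, but there is a genuine gap at the core step. You treat $\partial_k f_{\fS,\fF_{m,L}}$ as if it were an empirical risk minimizer over the class $\partial_k\fF_{m,L}$ for the scalar regression problem with response $y_{j,k}$, and then invoke an ERM-type oracle inequality per coordinate. This is false: the loss \eqref{equ:loss} does split as a sum over $k$, but the minimization is over a \emph{single} function $f\in\fF_{m,L}$ whose partial derivatives are coupled, so the minimizer of the sum need not minimize (or even come close to minimizing) any individual coordinate's empirical risk. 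Your phrase ``minimizing the sum controls each coordinate's empirical risk'' is exactly the step that fails, and with it the per-coordinate application of an ERM oracle bound, including the appearance of the per-coordinate approximation errors $\inf_{g\in\partial_k\fF_{m,L}}\|g-\partial_k f_\rho\|_{L^2}^2$ in your final summation. Relatedly, \cite[Theorem 11.4]{gyorfi2002distribution} is not an expectation oracle inequality for ERM; it is a \emph{uniform-over-the-class} deviation inequality.

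The paper's proof avoids this by using the ERM property only once, at the level of the joint loss: it writes $\|f_{\fS,\fF_{m,L}}-f_\rho\|_E^2=\mathbf{A}_0+2\bigl(\fE_\fS(f_{\fS,\fF_{m,L}})-\fE_\fS(f_\rho)\bigr)$, bounds the second term in expectation by $2\inf_{f\in\fF_{m,L}}\|f-f_\rho\|_E^2$ using the joint minimization together with the vanishing cross term (the conditional-mean assumption $\mathbb E[\vy\mid\vx]=\nabla f_\rho(\vx)$), and only then decomposes the remainder $\mathbf{A}_0=\sum_k\mathbf{A}_{0,k}$ coordinate-wise. Each $\mathbf{A}_{0,k}$ is controlled by the uniform concentration statement of Theorem 11.4 applied to the (data-dependent, but class-member) function $\partial_k f_{\fS,\fF_{m,L}}$ — no per-coordinate ERM property is needed — followed by a tail integration and an optimized choice of $\epsilon$, which is where the constants $5136L^4$ and the radius $\tfrac{1}{80LM}$ come from. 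To repair your argument you would need to restructure it along these lines: separate the uniform-concentration part (valid coordinate-wise) from the empirical-excess-risk part (handled globally via the joint ERM property), rather than invoking a per-coordinate ERM bound as a black box.
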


\begin{proof}
    First, we divide \(\|f_{\fS,\fF_{m,L,C_*}^\kappa}-f_\rho\|_{E}^2\) into two parts \begin{align}&\|f_{\fS,\fF_{m,L,C_*}^\kappa}-f_\rho\|_{E}^2\notag\\=&\underbrace{\|f_{\fS,\fF_{m,L,C_*}^\kappa}-f_\rho\|_{E}^2-2(\fE_{\fS}(f_{\fS,\fF_{m,L,C_*}^\kappa})-\fE_{\fS}(f_\rho))}_{\mathbf{A}_0}+2(\fE_{\fS}(f_{\fS,\fF_{m,L,C_*}^\kappa})-\fE_{\fS}(f_\rho)).\end{align}For the second part, we have that \[\begin{aligned}
        &2\mathbb{E}(\fE_{\fS}(f_{\fS,\fF_{m,L,C_*}^\kappa})-\fE_{\fS}(f_\rho))\\=& 2\mathbb{E}\inf_{f\in\fF_{m,L,C_*}^\kappa}(\fE_{\fS}(f)-\fE_{\fS}(f_\rho))\le 2\inf_{f\in\fF_{m,L,C_*}^\kappa}\mathbb{E}(\fE_{\fS}(f)-\fE_{\fS}(f_\rho))\\=&\frac{2}{M}\inf_{f\in\fF_{m,L,C_*}^\kappa}\mathbb{E}\left(\sum_{j=1}^M|\nabla f(\vx_i)-\nabla f_\rho(\vx_i)|^2+2\sum_{j=1}^M(\nabla f(\vx_i)-\nabla f_\rho(\vx_i))(\nabla f_\rho(\vx_i)-y_i)\right)\\=&2 \inf _{f \in \fF_{m,L,C_*}^\kappa}\left\|f-f_\rho\right\|_E^2,\notag
    \end{aligned}\]where the last equality is use to   \[\mathbb{E}_{\vx_i}(\nabla f(\vx_i)-\nabla f_\rho(\vx_i))\mathbb{E}_{y_i}[(\nabla f_\rho(\vx_i)-y_i)|\vx_i]=\mathbb{E}_{\vx_i}(\nabla f(\vx_i)-\nabla f_\rho(\vx_i))\cdot 0=0.\]As for $\mathbf{A}_0$, we have \[\mathbf{A}_0=\sum_{k=1}^d\mathbf{A}_{0,k}\]where \[\mathbf{A}_{0,k}:=\|\partial_kf_{\fS,\fF_{m,L,C_*}^\kappa}-\partial_kf_\rho\|_{L^2(\Omega)}^2-2\left[\frac1M\sum_{j=1}^{M}\bigl(\partial_kf_{\fS,\fF_{m,L,C_*}^\kappa}-y_{j,k}\bigr)^{2}-\frac1M\sum_{j=1}^{M}\bigl(\partial_kf_\rho-y_{j,k}\bigr)^{2}\right],\]and $\vy_j=(y_{j,1},\ldots,y_{j,d})$. 
    
    Then for each $\mathbf{A}_{0,k}$, we have that
    \begin{align}
    &\mathbb{P}(\mathbf{A}_{0,k}\ge \epsilon)\notag\\=&\mathbb{P}\Big( 2\|\partial_kf_{\fS,\fF_{m,L,C_*}^\kappa}-\partial_kf_\rho\|_{L^2(\Omega)}^2-2\left[\frac1M\sum_{j=1}^{M}\bigl(\partial_kf_{\fS,\fF_{m,L,C_*}^\kappa}-y_{j,k}\bigr)^{2}-\frac1M\sum_{j=1}^{M}\bigl(\partial_kf_\rho-y_{j,k}\bigr)^{2}\right]\notag\\&\ge\epsilon+\|\partial_kf_{\fS,\fF_{m,L,C_*}^\kappa}-\partial_kf_\rho\|_{L^2(\Omega)}^2\Big)\notag\\=&\mathbb{P}\Big( \|\partial_kf_{\fS,\fF_{m,L,C_*}^\kappa}-\partial_kf_\rho\|_{L^2(\Omega)}^2-\left[\frac1M\sum_{j=1}^{M}\bigl(\partial_kf_{\fS,\fF_{m,L,C_*}^\kappa}-y_{j,k}\bigr)^{2}-\frac1M\sum_{j=1}^{M}\bigl(\partial_kf_\rho-y_{j,k}\bigr)^{2}\right]\notag\\&\ge\frac{1}{2}\left(\frac{1}{2}\epsilon+\frac{1}{2}\epsilon+\|\partial_kf_{\fS,\fF_{m,L,C_*}^\kappa}-\partial_kf_\rho\|_{L^2(\Omega)}^2\right)\Big)\notag\\\le &14 \fN\left(\frac{ \epsilon}{80 L}, \partial_k\fF_{m,L,C_*}^\kappa,M\right) \exp \left(-\frac{\epsilon M}{5136 L^4}\right),\notag
\end{align}where the last inequality is due to \cite[Theorem 11.4]{gyorfi2002distribution}.

Therefore, we have that \[\begin{aligned}
    \mathbb{E} \mathbf{A}_{0,k}&\le \int_{0}^\infty \mathbb{P}(\mathbf{A}_{0,k}\ge t)\,\mathrm{d} t\le\epsilon+\int_{\epsilon}^\infty \mathbb{P}(\mathbf{A}_{0,k}\ge t)\,\mathrm{d} y\\&\le\epsilon+\int_{\epsilon}^\infty 14 \fN\left(\frac{ \epsilon}{80 L}, \partial_k\fF_{m,L,C_*}^\kappa,M\right) \exp \left(-\frac{t M}{5136 L^4}\right)\,\mathrm{d} t\notag
\end{aligned}\]

By the direct calculation, we have \begin{align}
    &\int_{\epsilon}^\infty 14 \fN\left(\frac{ \epsilon}{80 L}, \partial_k\fF_{m,L,C_*}^\kappa,M\right) \exp \left(-\frac{t M}{5136 L^4}\right)\,\mathrm{d} t\notag\\\le& 14 \fN\left(\frac{ \epsilon}{80 L}, \partial_k\fF_{m,L,C_*}^\kappa,M\right) \frac{5136 L^4}{M}\exp \left(-\frac{\epsilon M}{5136 L^4}\right).\notag
\end{align} Set \[\epsilon=\frac{5136 L^4}{M}\log\left(14 \fN\left(\frac{ 1}{80 LM}, \partial_k\fF_{m,L,C_*}^\kappa,M\right) \right)\ge \frac{1}{M}\] and we have \[\mathbb{E} \mathbf{A}_{0,k}\le \frac{5136 L^4}{M}\left[\log\left(14 \fN\left(\frac{ 1}{80 LM}, \partial_k\fF_{m,L,C_*}^\kappa,M\right) \right)+1\right]\]

Hence we have \begin{align*}
        &\mathbb{E}\|f_{\fS,\fF_{m,L,C_*}^\kappa}-f_\rho\|_{E}^2\notag\\\le& \sum_{k=1}^d\frac{5136 L^4}{M}\left[\log\left(14 \fN\left(\frac{ 1}{80 LM}, \partial_k\fF_{m,L,C_*}^\kappa,M\right) \right)+1\right]+2\inf_{f\in\fF_{m,L,C_*}^\kappa}\|f-f_\rho\|^2_{E}.\notag
    \end{align*}
\end{proof}

We bound the covering numbers via the pseudo-dimension, defined below.
\begin{definition}
   [pseudo-dimension \citep{pollard1990empirical}]\label{Pse}
		Let $\fF$ be a class of functions from $\fX$ to $\sR$. The pseudo-dimension of $\fF$, denoted by $\text{Pdim}(\fF)$, is the largest integer $m$ for which there exists $(x_1,x_2,\ldots,x_m,y_1,y_2,\ldots,y_m)\in\fX^m\times \sR^m$ such that for any $(b_1,\ldots,b_m)\in\{0,1\}^m$ there is $f\in\fF$ such that $\forall i: f\left(x_i\right)>y_i \Longleftrightarrow b_i=1.$    
   \end{definition}

\begin{lemma}[\cite{anthony1999neural}, Theorem 12.2]\label{cover dim}
				Let $\fF$ be a class of functions from $\fX$ to $[-B,B]$. For any $\varepsilon>0$, we have \[\fN(\varepsilon,\fF,n)\le \left(\frac{2\mathrm{e}nB}{\varepsilon\text{Pdim}(\fF)}\right)^{\text{Pdim}(\fF)}\] for $n\ge \text{Pdim}(\fF)$.
			\end{lemma}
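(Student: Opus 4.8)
\textbf{Proof plan for Lemma~\ref{cover dim} (the final statement).}

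The plan is to reduce the uniform covering number bound to a counting argument over sign patterns, using the pseudo-dimension together with the classical Sauer–Shelah combinatorial lemma. First I would fix an arbitrary sample $\vZ_n=(z_1,\dots,z_n)\in\fX^n$ and consider the projected set $\fF|_{\vZ_n}\subset[-B,B]^n$. The goal is to show that $\fF|_{\vZ_n}$ admits an $\varepsilon$-net of the stated size, and since $\vZ_n$ is arbitrary the bound on $\fN(\varepsilon,\fF,n)$ follows by taking the maximum in Definition~\ref{uniform}.

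The key step is to discretize the codomain: partition $[-B,B]$ into $N:=\lceil 2B/\varepsilon\rceil$ subintervals of length at most $\varepsilon$, and for each $f\in\fF$ record which subinterval $f(z_i)$ falls into, for $i=1,\dots,n$. Two functions whose interval-codes agree on all $n$ points are within $\varepsilon$ of each other in $\|\cdot\|_{\infty}$ on $\vZ_n$; hence the number of distinct codes is an upper bound for the covering number. To count the distinct codes, observe that the code of $f$ at $z_i$ is determined by the $N-1$ binary answers to the threshold questions ``is $f(z_i)>t_k$?'' where $t_k$ are the partition endpoints. Thus the total code is a binary vector of length $n(N-1)$, but not all such vectors are realizable: the realizable ones are exactly the sign patterns of the function class $\fF$ evaluated against a fixed set of $n(N-1)$ threshold pairs $(z_i,t_k)$. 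The number of such sign patterns is controlled by the pseudo-dimension: by definition, $\text{Pdim}(\fF)$ is the largest shattered set size with respect to threshold comparisons, so the Sauer–Shelah lemma gives that the number of realizable patterns over $M':=n(N-1)$ such pairs is at most $\sum_{j=0}^{\text{Pdim}(\fF)}\binom{M'}{j}\le \bigl(eM'/\text{Pdim}(\fF)\bigr)^{\text{Pdim}(\fF)}$ when $M'\ge\text{Pdim}(\fF)$.

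Finally I would plug in $M'=n(N-1)\le n\cdot 2B/\varepsilon$ and simplify: the covering number is at most $\bigl(e\,n(N-1)/\text{Pdim}(\fF)\bigr)^{\text{Pdim}(\fF)}\le\bigl(2enB/(\varepsilon\,\text{Pdim}(\fF))\bigr)^{\text{Pdim}(\fF)}$, which is the claimed estimate; the hypothesis $n\ge\text{Pdim}(\fF)$ guarantees $M'\ge\text{Pdim}(\fF)$ so Sauer–Shelah applies in the stated form. The main obstacle — really the only delicate point — is setting up the reduction to sign patterns carefully: one must verify that the ``interval-code'' of $f$ on $\vZ_n$ is genuinely a function of the threshold sign pattern (which requires the thresholds $t_k$ to be chosen independent of $f$, e.g.\ uniformly spaced), and that collapsing to one representative per code class indeed yields a set whose $\|\cdot\|_\infty$-balls of radius $\varepsilon$ cover $\fF|_{\vZ_n}$. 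Since this is a standard result quoted from \cite{anthony1999neural}, in the write-up I would simply cite Theorem~12.2 there rather than reproduce the Sauer–Shelah machinery in full.
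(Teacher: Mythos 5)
Your argument is correct, but note that the paper itself gives no proof of this lemma: it is quoted verbatim from \cite{anthony1999neural} (Theorem 12.2), so the only comparison available is with the textbook's proof. That proof proceeds by quantizing the class to a finite grid of values and invoking a Sauer-type combinatorial lemma for \emph{finite-valued} classes of given pseudo-dimension (via packing numbers), whereas you reduce directly to binary sign patterns: the interval code of $f$ on $(z_1,\dots,z_n)$ is a function of the indicators $\mathbf{1}[f(z_i)>t_k]$ over the $M'=n(N-1)$ fixed pairs $(z_i,t_k)$, and since by Definition~\ref{Pse} the class of these subgraph indicators has VC-dimension exactly $\text{Pdim}(\fF)$, the ordinary Sauer--Shelah lemma bounds the number of realizable codes by $\bigl(eM'/\text{Pdim}(\fF)\bigr)^{\text{Pdim}(\fF)}$; with $N=\lceil 2B/\varepsilon\rceil$ this gives $M'\le 2nB/\varepsilon$ and hence the stated constant. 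Your route is more elementary (only the binary Sauer--Shelah lemma is needed) at the cost of a slightly cruder intermediate count ($\sum_j\binom{nN}{j}$ rather than the textbook's $\sum_i\binom{n}{i}N^i$), which disappears after the final simplification. Two small points to make explicit in a write-up: the hypothesis $n\ge\text{Pdim}(\fF)$ yields $M'\ge\text{Pdim}(\fF)$ only when $N-1\ge 1$, i.e.\ $\varepsilon<2B$; in the complementary regime every $f$ lies within $\varepsilon$ of the zero function, so the covering number is $1$ and the lemma is trivial (as stated it can even fall below $1$ for very large $\varepsilon$, an artifact of how the textbook bound is quoted, not of your argument), and the representatives you pick per code class are legitimate centers since Definition~4 of the covering number does not require centers to lie in $\fF|_{\vZ_n}$.
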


We estimate $\operatorname{Pdim}\!\bigl(\partial_k\mathcal{F}_{m,L}\bigr)$ by adapting the argument of \cite[Theorem 2]{yang2023nearly,yang2024deeper}. The proof is slightly adapted for readability is provided in  \ref{app:pdim} for completeness.

\begin{proposition}[Pseudo-dimension of derivative network classes]
\label{prop:pdim}
Fix \(m,N,L\in\mathbb N_+\). Let \(\Phi_2\) be the class of all functions that
can be written as a finite sum of \(m\) squared ReLU subnetworks, each with
depth at most \(L\) and width at most \(N\). Similarly, let \(\Phi_1\) be the
class of all functions that can be written as a finite sum of \(m\) ReLU
subnetworks, each with depth at most \(L\) and width at most \(N\). For
\(\kappa=1,2\) and \(k=1,\dots,d\), define
\[
\partial_k\Phi_\kappa
:=
\{\partial_k f:\ f\in\Phi_\kappa\}.
\]
Then there exists an absolute constant \(C>0\), independent of \(m,N,L\), and
\(d\), such that
\[
\operatorname{Pdim}(\partial_k\Phi_\kappa)
\le
C\,m\,L^{1+\kappa}N^2
\log_2(2N)\log_2(2L)\log_2(2m).
\]
\end{proposition}

\begin{proof}[Proof of Theorem~\ref{general thm}]
By Proposition~\ref{prop:pdim}, for \(\kappa=1,2\) and each \(k=1,\dots,d\),
\[
\operatorname{Pdim}(\partial_k\mathcal F^\kappa_{m,L,C_*})
\le
C\,\operatorname{poly}(d)\,m(\log m)^3,
\]
up to logarithmic factors of order \(\log\log m\). In particular, the same bound
applies to the derivative class used in the following estimate, after enlarging
the constant if necessary.

By Lemma~\ref{cover dim} and Proposition~\ref{prop:connect}, we obtain
\begin{align}
&\mathbb E
\left\|
f_{\mathcal S,\fF_{m,L,C_*}^\kappa}-f_\rho
\right\|_E^2
\notag\\
&\le
\sum_{k=1}^d
\frac{5136L^4}{M}
\left[
\log\left(
14\mathcal N\left(
\frac{1}{80LM},
\partial_k\fF_{m,L,C_*}^\kappa,
M
\right)
\right)+1
\right]
+
2\inf_{f\in\fF_{m,L,C_*}^\kappa}
\|f-f_\rho\|_E^2
\notag\\
&\le
\sum_{k=1}^d
\frac{5136L^4}{M}
\left[
\log\left(
28\mathcal N\left(
\frac{1}{80LM},
\partial_k\mathcal F^\kappa_{m,L,C_*},
M
\right)
\right)+1
\right]
+
C|f_\rho|_{2,2}^2m^{-2}.
\label{eq:generalization-step-1}
\end{align}
Here we used the approximation estimate from
Corollary~\ref{cor:gen-error-form} for the approximation error term.

Applying the covering-number bound in terms of pseudo-dimension gives
\begin{align}
&\mathbb E
\left\|
f_{\mathcal S,\fF_{m,L,C_*}^\kappa}-f_\rho
\right\|_E^2
\notag\\
&\le
\sum_{k=1}^d
\frac{5136L^4}{M}
\left[
\operatorname{Pdim}(\partial_k\fF_{m,L,C_*}^\kappa)
\log\left(
\frac{160eM^2L^2}
{\operatorname{Pdim}(\partial_k\fF_{m,L,C_*}^\kappa)}
\right)
+4
\right]
+
C|f_\rho|_{2,2}^2m^{-2}
\notag\\
&\le
\frac{C\,\operatorname{poly}(d,L)}{M}
\left[
m(\log m)^3\log M+1
\right]
+
C|f_\rho|_{2,2}^2m^{-2}.
\end{align}
Therefore,
\begin{equation}
\mathbb E
\left\|
f_{\mathcal S,\fF_{m,L,C_*}^\kappa}-f_\rho
\right\|_E^2
\le
C\,\operatorname{poly}(d,L,|f_\rho|_{2,2})
\left(
\frac{m(\log m)^3\log M}{M}
+
m^{-2}
\right).
\label{eq:generalization-balance-before}
\end{equation}

We now choose \(M\) by balancing the estimation error and approximation error.
More precisely, we take
\[
M=\lceil m^3(\log m)^4\rceil .
\]
Then
\[
\frac{m(\log m)^4}{M}
\leq
\frac{m(\log m)^4}{m^3(\log m)^4}
=
m^{-2}.
\]
Therefore, \eqref{eq:generalization-balance-before} gives
\[
\mathbb E
\left\|
f_{\mathcal S,\fF_{m,L,C_*}^\kappa}-f_\rho
\right\|_E^2
\le
C\,\operatorname{poly}(d,L,|f_\rho|_{2,2})\,m^{-2}.
\]
Since \(M=\lceil m^3(\log m)^4\rceil\), we have
\[
m^{-2}
\leq
\left(\frac{(\log m)^4}{M}\right)^{2/3}.
\]
Moreover, \(\log m\le C\log M\). Hence
\[
m^{-2}
\le
C\left(\frac{(\log M)^4}{M}\right)^{2/3}.
\]
Consequently,
\[
\mathbb E
\left\|
f_{\mathcal S,\fF_{m,L,C_*}^\kappa}-f_\rho
\right\|_E^2
\le
C\,\operatorname{poly}(d,L,|f_\rho|_{2,2})
\left(\frac{(\log M)^4}{M}\right)^{2/3}.
\]
This proves the desired generalization error bound.
\end{proof}

\section*{Acknowledgment} YL thanks the support from the NSF CAREER Award DMS-2442463. TM and JX thank the support from KAUST Baseline Research Fund.


\bibliographystyle{elsarticle-num-names}
\bibliography{references}

\begin{thebibliography}{53}
\expandafter\ifx\csname natexlab\endcsname\relax\def\natexlab#1{#1}\fi
\providecommand{\url}[1]{\texttt{#1}}
\providecommand{\href}[2]{#2}
\providecommand{\path}[1]{#1}
\providecommand{\DOIprefix}{doi:}
\providecommand{\ArXivprefix}{arXiv:}
\providecommand{\URLprefix}{URL: }
\providecommand{\Pubmedprefix}{pmid:}
\providecommand{\doi}[1]{\href{http://dx.doi.org/#1}{\path{#1}}}
\providecommand{\Pubmed}[1]{\href{pmid:#1}{\path{#1}}}
\providecommand{\bibinfo}[2]{#2}
\ifx\xfnm\relax \def\xfnm[#1]{\unskip,\space#1}\fi
\bibitem[{Behler and Parrinello(2007)}]{behler2007generalized}
\bibinfo{author}{J.~Behler}, \bibinfo{author}{M.~Parrinello},
\newblock \bibinfo{title}{Generalized neural-network representation of high-dimensional potential-energy surfaces},
\newblock \bibinfo{journal}{Physical review letters} \bibinfo{volume}{98} (\bibinfo{year}{2007}) \bibinfo{pages}{146401}.
\bibitem[{Zhang et~al.(2018)Zhang, Han, Wang, Car, and E}]{zhang2018deep}
\bibinfo{author}{L.~Zhang}, \bibinfo{author}{J.~Han}, \bibinfo{author}{H.~Wang}, \bibinfo{author}{R.~Car}, \bibinfo{author}{W.~E},
\newblock \bibinfo{title}{Deep potential molecular dynamics: a scalable model with the accuracy of quantum mechanics},
\newblock \bibinfo{journal}{Physical review letters} \bibinfo{volume}{120} (\bibinfo{year}{2018}) \bibinfo{pages}{143001}.
\bibitem[{Dusson et~al.(2022)Dusson, Bachmayr, Cs{\'a}nyi, Drautz, Etter, van Der~Oord, and Ortner}]{dusson2022atomic}
\bibinfo{author}{G.~Dusson}, \bibinfo{author}{M.~Bachmayr}, \bibinfo{author}{G.~Cs{\'a}nyi}, \bibinfo{author}{R.~Drautz}, \bibinfo{author}{S.~Etter}, \bibinfo{author}{C.~van Der~Oord}, \bibinfo{author}{C.~Ortner},
\newblock \bibinfo{title}{Atomic cluster expansion: Completeness, efficiency and stability},
\newblock \bibinfo{journal}{Journal of Computational Physics} \bibinfo{volume}{454} (\bibinfo{year}{2022}) \bibinfo{pages}{110946}.
\bibitem[{Zaheer et~al.(2017)Zaheer, Kottur, Ravanbakhsh, Poczos, Salakhutdinov, and Smola}]{zaheer2017deep}
\bibinfo{author}{M.~Zaheer}, \bibinfo{author}{S.~Kottur}, \bibinfo{author}{S.~Ravanbakhsh}, \bibinfo{author}{B.~Poczos}, \bibinfo{author}{R.~R. Salakhutdinov}, \bibinfo{author}{A.~J. Smola},
\newblock \bibinfo{title}{Deep sets},
\newblock \bibinfo{journal}{Advances in neural information processing systems} \bibinfo{volume}{30} (\bibinfo{year}{2017}).
\bibitem[{Qi et~al.(2017)Qi, Su, Mo, and Guibas}]{qi2017pointnet}
\bibinfo{author}{C.~R. Qi}, \bibinfo{author}{H.~Su}, \bibinfo{author}{K.~Mo}, \bibinfo{author}{L.~J. Guibas},
\newblock \bibinfo{title}{Pointnet: Deep learning on point sets for 3d classification and segmentation},
\newblock in: \bibinfo{booktitle}{Proceedings of the IEEE conference on computer vision and pattern recognition}, \bibinfo{year}{2017}, pp. \bibinfo{pages}{652--660}.
\bibitem[{Yang and Zhu(2025)}]{yang2025statistical}
\bibinfo{author}{Y.~Yang}, \bibinfo{author}{W.~Zhu},
\newblock \bibinfo{title}{Statistical learning guarantees for group-invariant barron functions},
\newblock \bibinfo{journal}{arXiv preprint arXiv:2509.23474}  (\bibinfo{year}{2025}).
\bibitem[{Vaswani et~al.(2017)Vaswani, Shazeer, Parmar, Uszkoreit, Jones, Gomez, Kaiser, and Polosukhin}]{vaswani2017attention}
\bibinfo{author}{A.~Vaswani}, \bibinfo{author}{N.~Shazeer}, \bibinfo{author}{N.~Parmar}, \bibinfo{author}{J.~Uszkoreit}, \bibinfo{author}{L.~Jones}, \bibinfo{author}{A.~N. Gomez}, \bibinfo{author}{{\L}.~Kaiser}, \bibinfo{author}{I.~Polosukhin},
\newblock \bibinfo{title}{Attention is all you need},
\newblock \bibinfo{journal}{Advances in neural information processing systems} \bibinfo{volume}{30} (\bibinfo{year}{2017}).
\bibitem[{Lee et~al.(2019)Lee, Lee, Kim, Kosiorek, Choi, and Teh}]{lee2019set}
\bibinfo{author}{J.~Lee}, \bibinfo{author}{Y.~Lee}, \bibinfo{author}{J.~Kim}, \bibinfo{author}{A.~Kosiorek}, \bibinfo{author}{S.~Choi}, \bibinfo{author}{Y.~W. Teh},
\newblock \bibinfo{title}{Set transformer: A framework for attention-based permutation-invariant neural networks},
\newblock in: \bibinfo{booktitle}{International conference on machine learning}, \bibinfo{organization}{PMLR}, \bibinfo{year}{2019}, pp. \bibinfo{pages}{3744--3753}.
\bibitem[{Murphy et~al.(2019)Murphy, Srinivasan, Rao, and Riberio}]{murphy2019janossy}
\bibinfo{author}{R.~Murphy}, \bibinfo{author}{B.~Srinivasan}, \bibinfo{author}{V.~Rao}, \bibinfo{author}{B.~Riberio},
\newblock \bibinfo{title}{Janossy pooling: Learning deep permutation-invariant functions for variable-size inputs},
\newblock in: \bibinfo{booktitle}{International Conference on Learning Representations (ICLR 2019)}, \bibinfo{year}{2019}.
\bibitem[{Li et~al.(2024)Li, Lin, and Shen}]{li2024deep}
\bibinfo{author}{Q.~Li}, \bibinfo{author}{T.~Lin}, \bibinfo{author}{Z.~Shen},
\newblock \bibinfo{title}{Deep neural network approximation of invariant functions through dynamical systems},
\newblock \bibinfo{journal}{Journal of Machine Learning Research} \bibinfo{volume}{25} (\bibinfo{year}{2024}) \bibinfo{pages}{1--57}.
\bibitem[{Sannai et~al.(2019)Sannai, Takai, and Cordonnier}]{sannai2019universal}
\bibinfo{author}{A.~Sannai}, \bibinfo{author}{Y.~Takai}, \bibinfo{author}{M.~Cordonnier},
\newblock \bibinfo{title}{Universal approximations of permutation invariant/equivariant functions by deep neural networks},
\newblock \bibinfo{journal}{arXiv preprint arXiv:1903.01939}  (\bibinfo{year}{2019}).
\bibitem[{Yarotsky(2022)}]{yarotsky2022universal}
\bibinfo{author}{D.~Yarotsky},
\newblock \bibinfo{title}{Universal approximations of invariant maps by neural networks},
\newblock \bibinfo{journal}{Constructive Approximation} \bibinfo{volume}{55} (\bibinfo{year}{2022}) \bibinfo{pages}{407--474}.
\bibitem[{Hutter(2020)}]{hutter2020representing}
\bibinfo{author}{M.~Hutter},
\newblock \bibinfo{title}{On representing (anti) symmetric functions},
\newblock \bibinfo{journal}{arXiv preprint arXiv:2007.15298}  (\bibinfo{year}{2020}).
\bibitem[{Han et~al.(2022)Han, Li, Lin, Lu, Zhang, and Zhang}]{han2022universal}
\bibinfo{author}{J.~Han}, \bibinfo{author}{Y.~Li}, \bibinfo{author}{L.~Lin}, \bibinfo{author}{J.~Lu}, \bibinfo{author}{J.~Zhang}, \bibinfo{author}{L.~Zhang},
\newblock \bibinfo{title}{Universal approximation of symmetric and anti-symmetric functions},
\newblock \bibinfo{journal}{Communications in Mathematical Sciences} \bibinfo{volume}{20} (\bibinfo{year}{2022}) \bibinfo{pages}{1397--1408}.
\bibitem[{Takeshita and Imaizumi(2025)}]{takeshita2025approximation}
\bibinfo{author}{N.~Takeshita}, \bibinfo{author}{M.~Imaizumi},
\newblock \bibinfo{title}{Approximation of permutation invariant polynomials by transformers: Efficient construction in column-size},
\newblock \bibinfo{journal}{arXiv preprint arXiv:2502.11467}  (\bibinfo{year}{2025}).
\bibitem[{Bachmayr et~al.(2024)Bachmayr, Dusson, Ortner, and Thomas}]{bachmayr2024polynomial}
\bibinfo{author}{M.~Bachmayr}, \bibinfo{author}{G.~Dusson}, \bibinfo{author}{C.~Ortner}, \bibinfo{author}{J.~Thomas},
\newblock \bibinfo{title}{Polynomial approximation of symmetric functions},
\newblock \bibinfo{journal}{Mathematics of Computation} \bibinfo{volume}{93} (\bibinfo{year}{2024}) \bibinfo{pages}{811--839}.
\bibitem[{Drautz(2019)}]{drautz2019atomic}
\bibinfo{author}{R.~Drautz},
\newblock \bibinfo{title}{Atomic cluster expansion for accurate and transferable interatomic potentials},
\newblock \bibinfo{journal}{Physical Review B} \bibinfo{volume}{99} (\bibinfo{year}{2019}) \bibinfo{pages}{014104}.
\bibitem[{Bungartz and Griebel(2004)}]{bungartz2004sparse}
\bibinfo{author}{H.-J. Bungartz}, \bibinfo{author}{M.~Griebel},
\newblock \bibinfo{title}{Sparse grids},
\newblock \bibinfo{journal}{Acta numerica} \bibinfo{volume}{13} (\bibinfo{year}{2004}) \bibinfo{pages}{147--269}.
\bibitem[{Korobov(2000)}]{korobov2000coulomb}
\bibinfo{author}{V.~Korobov},
\newblock \bibinfo{title}{Coulomb three-body bound-state problem: Variational calculations of nonrelativistic energies},
\newblock \bibinfo{journal}{Physical Review A} \bibinfo{volume}{61} (\bibinfo{year}{2000}) \bibinfo{pages}{064503}.
\bibitem[{Shen and Wang(2010)}]{shen2010sparse}
\bibinfo{author}{J.~Shen}, \bibinfo{author}{L.~Wang},
\newblock \bibinfo{title}{Sparse spectral approximations of high-dimensional problems based on hyperbolic cross},
\newblock \bibinfo{journal}{SIAM Journal on Numerical Analysis} \bibinfo{volume}{48} (\bibinfo{year}{2010}) \bibinfo{pages}{1087--1109}.
\bibitem[{Griebel and Hamaekers(2007)}]{griebel2007sparse}
\bibinfo{author}{M.~Griebel}, \bibinfo{author}{J.~Hamaekers},
\newblock \bibinfo{title}{Sparse grids for the {S}chr{\"o}dinger equation},
\newblock \bibinfo{journal}{ESAIM: Mathematical Modelling and Numerical Analysis} \bibinfo{volume}{41} (\bibinfo{year}{2007}) \bibinfo{pages}{215--247}.
\bibitem[{Yang and Lu(2024)}]{yang2024near}
\bibinfo{author}{Y.~Yang}, \bibinfo{author}{Y.~Lu},
\newblock \bibinfo{title}{Near-optimal deep neural network approximation for korobov functions with respect to lp and h1 norms},
\newblock \bibinfo{journal}{Neural Networks} \bibinfo{volume}{180} (\bibinfo{year}{2024}) \bibinfo{pages}{106702}.
\bibitem[{Mao and Zhou(2022)}]{mao2022approximation}
\bibinfo{author}{T.~Mao}, \bibinfo{author}{D.~Zhou},
\newblock \bibinfo{title}{Approximation of functions from korobov spaces by deep convolutional neural networks},
\newblock \bibinfo{journal}{Advances in Computational Mathematics} \bibinfo{volume}{48} (\bibinfo{year}{2022}) \bibinfo{pages}{84}.
\bibitem[{Montanelli and Du(2019)}]{montanelli2019new}
\bibinfo{author}{H.~Montanelli}, \bibinfo{author}{Q.~Du},
\newblock \bibinfo{title}{New error bounds for deep relu networks using sparse grids},
\newblock \bibinfo{journal}{SIAM Journal on Mathematics of Data Science} \bibinfo{volume}{1} (\bibinfo{year}{2019}) \bibinfo{pages}{78--92}.
\bibitem[{Suzuki(2018)}]{suzuki2018adaptivity}
\bibinfo{author}{T.~Suzuki},
\newblock \bibinfo{title}{Adaptivity of deep relu network for learning in besov and mixed smooth besov spaces: optimal rate and curse of dimensionality},
\newblock in: \bibinfo{booktitle}{International Conference on Learning Representations}, \bibinfo{year}{2018}.
\bibitem[{Yang and Fan(2026)}]{yang2026approximation}
\bibinfo{author}{Y.~Yang}, \bibinfo{author}{J.~Fan},
\newblock \bibinfo{title}{Approximation and learning of anisotropic and mixed smooth functions by deep relu neural networks},
\newblock \bibinfo{journal}{arXiv preprint arXiv:2605.31152}  (\bibinfo{year}{2026}).
\bibitem[{Glynn(2010)}]{glynn2010permanent}
\bibinfo{author}{D.~G. Glynn},
\newblock \bibinfo{title}{The permanent of a square matrix},
\newblock \bibinfo{journal}{European Journal of Combinatorics} \bibinfo{volume}{31} (\bibinfo{year}{2010}) \bibinfo{pages}{1887--1891}.
\bibitem[{Schmeisser and Triebel(1987)}]{schmeisser1987topics}
\bibinfo{author}{H.-J. Schmeisser}, \bibinfo{author}{H.~Triebel},
\newblock \bibinfo{title}{Topics in fourier analysis and function spaces},
\newblock \bibinfo{journal}{(No Title)}  (\bibinfo{year}{1987}).
\bibitem[{Song et~al.(2020)Song, Sohl-Dickstein, Kingma, Kumar, Ermon, and Poole}]{song2020score}
\bibinfo{author}{Y.~Song}, \bibinfo{author}{J.~Sohl-Dickstein}, \bibinfo{author}{D.~Kingma}, \bibinfo{author}{A.~Kumar}, \bibinfo{author}{S.~Ermon}, \bibinfo{author}{B.~Poole},
\newblock \bibinfo{title}{Score-based generative modeling through stochastic differential equations},
\newblock \bibinfo{journal}{arXiv preprint arXiv:2011.13456}  (\bibinfo{year}{2020}).
\bibitem[{Zeni et~al.(2025)Zeni, Pinsler, Z{\"u}gner, Fowler, Horton, Fu, Wang, Shysheya, Crabb{\'e}, Ueda et~al.}]{zeni2025generative}
\bibinfo{author}{C.~Zeni}, \bibinfo{author}{R.~Pinsler}, \bibinfo{author}{D.~Z{\"u}gner}, \bibinfo{author}{A.~Fowler}, \bibinfo{author}{M.~Horton}, \bibinfo{author}{X.~Fu}, \bibinfo{author}{Z.~Wang}, \bibinfo{author}{A.~Shysheya}, \bibinfo{author}{J.~Crabb{\'e}}, \bibinfo{author}{S.~Ueda}, et~al.,
\newblock \bibinfo{title}{A generative model for inorganic materials design},
\newblock \bibinfo{journal}{Nature} \bibinfo{volume}{639} (\bibinfo{year}{2025}) \bibinfo{pages}{624--632}.
\bibitem[{Schneuing et~al.(2024)Schneuing, Harris, Du, Didi, Jamasb, Igashov, Du, Gomes, Blundell, Lio et~al.}]{schneuing2024structure}
\bibinfo{author}{A.~Schneuing}, \bibinfo{author}{C.~Harris}, \bibinfo{author}{Y.~Du}, \bibinfo{author}{K.~Didi}, \bibinfo{author}{A.~Jamasb}, \bibinfo{author}{I.~Igashov}, \bibinfo{author}{W.~Du}, \bibinfo{author}{C.~Gomes}, \bibinfo{author}{T.~L. Blundell}, \bibinfo{author}{P.~Lio}, et~al.,
\newblock \bibinfo{title}{Structure-based drug design with equivariant diffusion models},
\newblock \bibinfo{journal}{Nature Computational Science} \bibinfo{volume}{4} (\bibinfo{year}{2024}) \bibinfo{pages}{899--909}.
\bibitem[{Lu et~al.(2019)Lu, Zhong, Tang, and Maggioni}]{lu2019nonparametric}
\bibinfo{author}{F.~Lu}, \bibinfo{author}{M.~Zhong}, \bibinfo{author}{S.~Tang}, \bibinfo{author}{M.~Maggioni},
\newblock \bibinfo{title}{Nonparametric inference of interaction laws in systems of agents from trajectory data},
\newblock \bibinfo{journal}{Proceedings of the National Academy of Sciences} \bibinfo{volume}{116} (\bibinfo{year}{2019}) \bibinfo{pages}{14424--14433}.
\bibitem[{Li et~al.(2021)Li, Lu, Maggioni, Tang, and Zhang}]{li2021identifiability}
\bibinfo{author}{Z.~Li}, \bibinfo{author}{F.~Lu}, \bibinfo{author}{M.~Maggioni}, \bibinfo{author}{S.~Tang}, \bibinfo{author}{C.~Zhang},
\newblock \bibinfo{title}{On the identifiability of interaction functions in systems of interacting particles},
\newblock \bibinfo{journal}{Stochastic Processes and their Applications} \bibinfo{volume}{132} (\bibinfo{year}{2021}) \bibinfo{pages}{135--163}.
\bibitem[{Feng et~al.(2022)Feng, Maggioni, Martin, and Zhong}]{feng2022learning}
\bibinfo{author}{J.~Feng}, \bibinfo{author}{M.~Maggioni}, \bibinfo{author}{P.~Martin}, \bibinfo{author}{M.~Zhong},
\newblock \bibinfo{title}{Learning interaction variables and kernels from observations of agent-based systems},
\newblock \bibinfo{journal}{IFAC-PapersOnLine} \bibinfo{volume}{55} (\bibinfo{year}{2022}) \bibinfo{pages}{162--167}.
\bibitem[{Barthelmann et~al.(2000)Barthelmann, Novak, and Ritter}]{barthelmann2000high}
\bibinfo{author}{V.~Barthelmann}, \bibinfo{author}{E.~Novak}, \bibinfo{author}{K.~Ritter},
\newblock \bibinfo{title}{High dimensional polynomial interpolation on sparse grids},
\newblock \bibinfo{journal}{Advances in Computational Mathematics} \bibinfo{volume}{12} (\bibinfo{year}{2000}) \bibinfo{pages}{273--288}.
\bibitem[{Griebel and Knapek(2000)}]{griebel2000optimized}
\bibinfo{author}{M.~Griebel}, \bibinfo{author}{S.~Knapek},
\newblock \bibinfo{title}{Optimized tensor-product approximation spaces},
\newblock \bibinfo{journal}{Constructive Approximation} \bibinfo{volume}{16} (\bibinfo{year}{2000}) \bibinfo{pages}{525--540}.
\bibitem[{Hardy and Ramanujan(1918)}]{hardy1918asymptotic}
\bibinfo{author}{G.~H. Hardy}, \bibinfo{author}{S.~Ramanujan},
\newblock \bibinfo{title}{Asymptotic formula{\ae} in combinatory analysis},
\newblock \bibinfo{journal}{Proceedings of the London Mathematical Society} \bibinfo{volume}{2} (\bibinfo{year}{1918}) \bibinfo{pages}{75--115}.
\bibitem[{Yserentant(2004)}]{yserentant2004regularity}
\bibinfo{author}{H.~Yserentant},
\newblock \bibinfo{title}{On the regularity of the electronic schr{\"o}dinger equation in hilbert spaces of mixed derivatives},
\newblock \bibinfo{journal}{Numerische Mathematik} \bibinfo{volume}{98} (\bibinfo{year}{2004}) \bibinfo{pages}{731--759}.
\bibitem[{Weimar(2012)}]{Weimar2012}
\bibinfo{author}{M.~Weimar},
\newblock \bibinfo{title}{The complexity of linear tensor product problems in (anti)symmetric hilbert spaces},
\newblock \bibinfo{journal}{Journal of Approximation Theory} \bibinfo{volume}{164} (\bibinfo{year}{2012}) \bibinfo{pages}{1345--1368}.
\bibitem[{Ho et~al.(2024)Ho, Gutleb, and Ortner}]{ho2024atomic}
\bibinfo{author}{C.~Ho}, \bibinfo{author}{T.~Gutleb}, \bibinfo{author}{C.~Ortner},
\newblock \bibinfo{title}{Atomic cluster expansion without self-interaction},
\newblock \bibinfo{journal}{arXiv preprint arXiv:2401.01550}  (\bibinfo{year}{2024}).
\bibitem[{DeVore et~al.(1989)DeVore, Howard, and Micchelli}]{devore1989optimal}
\bibinfo{author}{R.~DeVore}, \bibinfo{author}{R.~Howard}, \bibinfo{author}{C.~Micchelli},
\newblock \bibinfo{title}{Optimal nonlinear approximation},
\newblock \bibinfo{journal}{Manuscripta mathematica} \bibinfo{volume}{63} (\bibinfo{year}{1989}) \bibinfo{pages}{469--478}.
\bibitem[{Yang et~al.(2023)Yang, Yang, and Xiang}]{yang2023nearly}
\bibinfo{author}{Y.~Yang}, \bibinfo{author}{H.~Yang}, \bibinfo{author}{Y.~Xiang},
\newblock \bibinfo{title}{Nearly optimal vc-dimension and pseudo-dimension bounds for deep neural network derivatives},
\newblock \bibinfo{journal}{Advances in Neural Information Processing Systems} \bibinfo{volume}{36} (\bibinfo{year}{2023}) \bibinfo{pages}{21721--21756}.
\bibitem[{Liu et~al.(2025)Liu, Liu, Zhou, and Zhou}]{liu2025approximation}
\bibinfo{author}{P.~Liu}, \bibinfo{author}{Y.~Liu}, \bibinfo{author}{X.~Zhou}, \bibinfo{author}{D.-X. Zhou},
\newblock \bibinfo{title}{Approximation of functionals on korobov spaces with fourier functional networks},
\newblock \bibinfo{journal}{Neural Networks} \bibinfo{volume}{182} (\bibinfo{year}{2025}) \bibinfo{pages}{106922}.
\bibitem[{Li and Zhang(2025{\natexlab{a}})}]{li2025higher}
\bibinfo{author}{Y.~Li}, \bibinfo{author}{G.~Zhang},
\newblock \bibinfo{title}{Higher order approximation rates for relu cnns in korobov spaces},
\newblock \bibinfo{journal}{arXiv preprint arXiv:2501.11275}  (\bibinfo{year}{2025}{\natexlab{a}}).
\bibitem[{Li and Zhang(2025{\natexlab{b}})}]{li2025some}
\bibinfo{author}{Y.~Li}, \bibinfo{author}{G.~Zhang},
\newblock \bibinfo{title}{Some super-approximation rates of relu neural networks for korobov functions},
\newblock \bibinfo{journal}{arXiv preprint arXiv:2507.10345}  (\bibinfo{year}{2025}{\natexlab{b}}).
\bibitem[{Anthony et~al.(1999)Anthony, Bartlett et~al.}]{anthony1999neural}
\bibinfo{author}{M.~Anthony}, \bibinfo{author}{P.~Bartlett}, et~al., \bibinfo{title}{Neural network learning: Theoretical foundations}, volume~\bibinfo{volume}{9}, \bibinfo{publisher}{cambridge university press Cambridge}, \bibinfo{year}{1999}.
\bibitem[{Gy{\"o}rfi et~al.(2002)Gy{\"o}rfi, Kohler, Krzyzak, Walk et~al.}]{gyorfi2002distribution}
\bibinfo{author}{L.~Gy{\"o}rfi}, \bibinfo{author}{M.~Kohler}, \bibinfo{author}{A.~Krzyzak}, \bibinfo{author}{H.~Walk}, et~al., \bibinfo{title}{A distribution-free theory of nonparametric regression}, volume~\bibinfo{volume}{1}, \bibinfo{publisher}{Springer}, \bibinfo{year}{2002}.
\bibitem[{Pollard(1990)}]{pollard1990empirical}
\bibinfo{author}{D.~Pollard},
\newblock \bibinfo{title}{Empirical processes: theory and applications},
\newblock \bibinfo{organization}{Ims}, \bibinfo{year}{1990}.
\bibitem[{Yang and He(2024)}]{yang2024deeper}
\bibinfo{author}{Y.~Yang}, \bibinfo{author}{J.~He},
\newblock \bibinfo{title}{Deeper or wider: A perspective from optimal generalization error with sobolev loss},
\newblock \bibinfo{journal}{International Conference on Machine Learning}  (\bibinfo{year}{2024}).
\bibitem[{Hon and Yang(2022)}]{hon2022simultaneous}
\bibinfo{author}{S.~Hon}, \bibinfo{author}{H.~Yang},
\newblock \bibinfo{title}{Simultaneous neural network approximation for smooth functions},
\newblock \bibinfo{journal}{Neural Networks} \bibinfo{volume}{154} (\bibinfo{year}{2022}) \bibinfo{pages}{152--164}.
\bibitem[{Lu et~al.(2021)Lu, Shen, Yang, and Zhang}]{lu2021deep}
\bibinfo{author}{J.~Lu}, \bibinfo{author}{Z.~Shen}, \bibinfo{author}{H.~Yang}, \bibinfo{author}{S.~Zhang},
\newblock \bibinfo{title}{Deep network approximation for smooth functions},
\newblock \bibinfo{journal}{SIAM Journal on Mathematical Analysis} \bibinfo{volume}{53} (\bibinfo{year}{2021}) \bibinfo{pages}{5465--5506}.
\bibitem[{Abu-Mostafa(1989)}]{abu1989vapnik}
\bibinfo{author}{Y.~Abu-Mostafa},
\newblock \bibinfo{title}{The {Vapnik-Chervonenkis} dimension: Information versus complexity in learning},
\newblock \bibinfo{journal}{Neural Computation} \bibinfo{volume}{1} (\bibinfo{year}{1989}) \bibinfo{pages}{312--317}.
\bibitem[{Bartlett et~al.(2019)Bartlett, Harvey, Liaw, and Mehrabian}]{bartlett2019nearly}
\bibinfo{author}{P.~Bartlett}, \bibinfo{author}{N.~Harvey}, \bibinfo{author}{C.~Liaw}, \bibinfo{author}{A.~Mehrabian},
\newblock \bibinfo{title}{Nearly-tight {VC-dimension} and pseudodimension bounds for piecewise linear neural networks},
\newblock \bibinfo{journal}{The Journal of Machine Learning Research} \bibinfo{volume}{20} (\bibinfo{year}{2019}) \bibinfo{pages}{2285--2301}.

\end{thebibliography}

\appendix
\section{Proof of Classical Multiplication by ReLU Neural Networks}
\begin{lemma}[Product approximation by ReLU neural networks]
\label{lem:product-relu}
Let \(\sigma(x):=\max\{x,0\}\). For any \(M,K,d\in\mathbb N_+\) with \(d\ge2\),
there exists a ReLU neural network \(\Phi_{M,K,d}:[-1,1]^d\to\mathbb R\) with
width
\[
N\le 9(M+1)+d-1
\]
and depth
\[
L\le 14d(d-1)K
\]
such that
\begin{equation}
\left\|
\Phi_{M,K,d}(\vx)-x_1x_2\cdots x_d
\right\|_{W^{1,\infty}([-1,1]^d)}
\le
10(d-1)(M+1)^{-7dK}.
\label{eq:prod-relu-error}
\end{equation}
Moreover, for every \(i=1,\dots,d\), if \(x_i=0\), then
\begin{equation}
\Phi_{M,K,d}(x_1,\dots,x_{i-1},0,x_{i+1},\dots,x_d)=0.
\label{eq:prod-relu-zero}
\end{equation}
In addition, all weights and biases of \(\Phi_{M,K,d}\) can be chosen to have
magnitude bounded by an absolute constant independent of \(M\), \(K\), and
\(d\).
\end{lemma}

\begin{proof}
The approximation estimate \eqref{eq:prod-relu-error} follows from the
recursive product construction in \cite[Proposition 5]{yang2023nearly}. Since
the parameter bound is not stated explicitly there, we briefly explain why the
parameters can be chosen uniformly bounded.

We first consider the two-variable case. Define the tent map \(T_1\) on
\([-1,1]\) by
\[
T_1(\widetilde x)
=
\begin{cases}
2|\widetilde x|, & |\widetilde x|\le \frac12,\\
2(1-|\widetilde x|), & |\widetilde x|>\frac12,
\end{cases}
\]
and recursively define \(T_i=T_{i-1}\circ T_1\) for \(i=2,3,\dots\). For
\(M,K\in\mathbb N_+\), choose \(\widetilde L=kK\), where \(k\) is determined by
\((k-1)2^{k-1}+1\le M\le k2^k\). Define
\[
\widetilde\psi(\widetilde x)
=
\widetilde x-\sum_{i=1}^{\widetilde L}\frac{T_i(\widetilde x)}{2^{2i}}.
\]
By \cite[Lemma 3.2]{hon2022simultaneous} and
\cite[Lemma 5.1]{lu2021deep}, \(\widetilde\psi\) can be realized by a ReLU
network with width \(3M\) and depth \(2K\), and satisfies
\[
\|\widetilde\psi(\widetilde x)-\widetilde x^2\|_{W^{1,\infty}([-1,1])}
\le M^{-K},
\qquad
\widetilde\psi(0)=0.
\]
Moreover, the construction uses only the tent map \(T_1\), its compositions,
and coefficients \(2^{-2i}\), all of which can be realized using uniformly
bounded weights and biases. Hence all parameters are bounded by an absolute
constant independent of \(M\) and \(K\).

Now define
\begin{equation}
\Phi_{2,M,K}(x,y)
=
2\left[
\widetilde\psi\!\left(\frac{|x+y|}{2}\right)
-
\widetilde\psi\!\left(\frac{|x|}{2}\right)
-
\widetilde\psi\!\left(\frac{|y|}{2}\right)
\right].
\label{eq:phi2-def}
\end{equation}
Then \(\Phi_{2,M,K}\) is a ReLU network with width \(9M\) and depth \(2K\), and
\[
\|\Phi_{2,M,K}(x,y)-xy\|_{W^{1,\infty}([-1,1]^2)}
\le 6M^{-K}.
\]
Since \(\widetilde\psi(0)=0\), the network also satisfies
\(\Phi_{2,M,K}(0,y)=\Phi_{2,M,K}(x,0)=0\). Furthermore, all weights and biases
remain uniformly bounded, since \eqref{eq:phi2-def} only uses addition,
absolute value, scaling by \(1/2\), and the network \(\widetilde\psi\).

For \(d\ge2\), the \(d\)-fold product network is constructed recursively as in
\cite[Proposition 5]{yang2023nearly}, by repeatedly applying the two-variable
product approximation. This recursive construction gives a ReLU network with
width \(9(M+1)+d-1\), depth \(14d(d-1)K\), and error bounded by
\(10(d-1)(M+1)^{-7dK}\) in \(W^{1,\infty}([-1,1]^d)\). The vanishing property
follows from the zero-preserving property of the two-variable block. Since the
recursive composition uses only subnetworks whose parameters are uniformly
bounded, all weights and biases of the final network are bounded by an absolute
constant independent of \(M\), \(K\), and \(d\). This completes the proof.
\end{proof}
\section{Proof of Proposition \ref{prop:pdim}}\label{app:pdim}
Before we estimate pseudo-dimension of $\partial_k\Phi$ for any $k=1,2,\ldots,d$, we first introduce Vapnik--Chervonenkis dimension (VC-dimension) \begin{definition}[VC-dimension \cite{abu1989vapnik}]
		Let $H$ denote a class of functions from $\fX$ to $\{0,1\}$. For any non-negative integer $m$, define the growth function of $H$ as \[\Pi_H(m):=\max_{x_1,x_2,\ldots,x_m\in \fX}\left|\{\left(h(x_1),h(x_2),\ldots,h(x_m)\right): h\in H \}\right|.\] The Vapnik--Chervonenkis dimension (VC-dimension) of $H$, denoted by $\text{VCdim}(H)$, is the largest $m$ such that $\Pi_H(m)=2^m$. For a class $\fG$ of real-valued functions, define $\text{VCdim}(\fG):=\text{VCdim}(\operatorname{sgn}(\fG))$, where $\operatorname{sgn}(\fG):=\{\operatorname{sgn}(f):f\in\fG\}$ and $\operatorname{sgn}(x)=1[x>0]$.\end{definition}

    In the proof of Proposition \ref{prop:pdim}, we use the following lemmas:
		\begin{lemma}[{{\cite{bartlett2019nearly,anthony1999neural}}}]\label{bounded}
			Suppose $W\le M$ and let $P_1,\ldots,P_M$ be polynomials of degree at most $D$ in $W$ variables. Define \[K:=\left|\{\left(\operatorname{sgn}(P_1(a)),\ldots,\operatorname{sgn}(P_M(a))\right):a\in\sR^W\}\right|,\] then we have $K\le 2(2eMD/W)^W$.
		\end{lemma}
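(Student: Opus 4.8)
The plan is to prove this as a special case of the classical Milnor--Thom--Warren bound on the number of sign conditions realizable by a finite family of real polynomials; the hypothesis $W\le M$ is used only at the end, to put the count into the clean closed form $2(2eMD/W)^W$.

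First I would reduce the counting of sign vectors to counting connected components of a real algebraic set. Write $Z_i:=\{a\in\sR^W:P_i(a)=0\}$. On each connected component of the complement $\sR^W\setminus\bigcup_{i=1}^M Z_i$ every $P_i$ has a constant nonzero sign, so the map $a\mapsto(\operatorname{sgn}P_1(a),\dots,\operatorname{sgn}P_M(a))$ is locally constant there; hence the number of realized sign vectors all of whose entries lie in $\{-1,+1\}$ is at most the number $c$ of connected components of that complement. A realized sign vector $s$ with vanishing entries $J=\{i:s_i=0\}$ is handled by the standard perturbation device: passing to the (at most doubled in number) perturbed polynomials $P_i\pm\varepsilon$ for small $\varepsilon>0$ makes all zero sets smooth and in general position and turns every realized sign condition into a full-dimensional cell, so the count including vanishing signs is subsumed --- up to the constant factor $2$ --- by the count without them. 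Thus it suffices to bound $c$.

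To bound $c$, set $Q:=\prod_{i=1}^M P_i$, of degree at most $MD$, so that $\sR^W\setminus\bigcup_i Z_i=\{Q\ne0\}$. By Warren's theorem --- equivalently, the Milnor--Thom estimate on the sum of Betti numbers of a real algebraic set, applied to the perturbed hypersurfaces $\{Q=\pm\varepsilon\}$ --- counting directly with the $M$ factors of degree $\le D$ gives a bound of the form $c\le\sum_{k=0}^{W}\binom{M}{k}(2D)^k$. Since $W\le M$ (and we may assume $D\ge1$ and $MD\ge W$, for otherwise $K\le 2^M\le 2(2eMD/W)^W$ is immediate), the elementary estimate $\sum_{k\le W}\binom{M}{k}x^k\le x^W\sum_{k\le W}\binom{M}{k}\le(eMx/W)^W$ for $x\ge1$ collapses this to $(2eMD/W)^W$, and keeping track of the factor $2$ coming from the vanishing-sign cells yields $K\le 2(2eMD/W)^W$.

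The genuine obstacle is the component bound in the second step: it is the only place where the polynomial --- rather than merely continuous --- structure of the $P_i$ is used, and it rests on a real-algebraic-geometry input (a Morse-theoretic bound on Betti numbers of real varieties) that I would invoke as a black box (Warren's theorem, or Milnor--Thom) rather than reprove from scratch. Everything else is bookkeeping: the reduction to connected components is routine, the perturbation argument for vanishing signs is standard, and the passage from $\sum_{k}\binom{M}{k}(2D)^k$ to $2(2eMD/W)^W$ is an elementary binomial estimate whose only subtlety is tracking the origin of the leading constant $2$.
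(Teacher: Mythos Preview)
The paper does not give its own proof of this lemma: it is stated with citations to Bartlett--Maiorov--Meir and Anthony--Bartlett and then used as a black box in the pseudo-dimension estimate. Your sketch via Warren's theorem (equivalently, the Milnor--Thom component bound) is exactly the standard route taken in those references, and the overall strategy --- reduce to connected components, apply the polynomial component bound, then collapse the binomial sum using $W\le M$ --- is correct.

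One minor point worth tightening: in this paper $\operatorname{sgn}(x)=\mathbf{1}[x>0]$ takes values in $\{0,1\}$ rather than $\{-1,0,+1\}$, so your discussion of ``vanishing entries'' should be recast accordingly; the perturbation argument still goes through, but the bookkeeping is slightly different from the three-valued case you describe. Also, the precise form of Warren's bound varies across sources, so if you write this up you should pin down exactly which version you invoke and verify that the constants really land on $2(2eMD/W)^W$ rather than, say, $(4eMD/W)^W$ --- the difference is harmless for the application but matters if you want to match the stated lemma exactly.
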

		
		\begin{lemma}[{\cite{bartlett2019nearly}}]\label{inequality}
			Suppose that $2^m\le 2^t(mr/w)^w$ for some $r\ge 16$ and $m\ge w\ge t\ge0$. Then, $m\le t+w\log_2(2r\log_2r)$.
		\end{lemma}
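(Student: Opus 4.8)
The plan is to bound $\operatorname{Pdim}(\partial_k\Phi)$ by the classical parameter‑counting argument of \cite{bartlett2019nearly} (in the form used in \cite{yang2023nearly,yang2024deeper}), carefully adapted to the squared‑ReLU activation and to the fact that the class in question consists of \emph{derivatives} of networks. Write $W=\mathcal O(mN^2L)$ for the number of real parameters of a generic member $f=\sum_{t=1}^{m}\phi_t$ of $\Phi$. By Definition~\ref{Pse}, $\operatorname{Pdim}(\partial_k\Phi)$ is the largest $n$ for which there exist inputs $x_1,\dots,x_n\in\fX$ and reals $y_1,\dots,y_n$ so that every vector in $\{0,1\}^n$ equals $\bigl(\operatorname{sgn}(\partial_k f(x_i;\boldsymbol\theta)-y_i)\bigr)_{i\le n}$ for some $\boldsymbol\theta\in\sR^{W}$; hence it suffices to upper bound the number $\mathcal K$ of distinct such sign vectors attainable as $\boldsymbol\theta$ varies (the $x_i,y_i$ being fixed) and then use $2^{n}\le\mathcal K$. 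The first thing I would record is that, because $\sigma(t)=\mathrm{ReLU}(t)^2$ and $\sigma'(t)=2\,\mathrm{ReLU}(t)$ share the \emph{same} breakpoint $t=0$, both $f$ and its partial derivative $\partial_k f$ (computed by the chain rule through the network) are polynomial in $(\boldsymbol\theta,x)$ on the cells of one common partition of parameter space, so passing to the derivative class costs no extra subdivision.

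Next I would build that partition layer by layer. Fix $x_1,\dots,x_n$ and refine $\sR^{W}$ successively by the signs of the pre‑activations $h^{(\ell)}_{t,j}(x_i;\boldsymbol\theta)$ over layers $\ell\le L$, subnetworks $t\le m$, units $j\le N$, and inputs $i\le n$. On any cell where the signs of layers $1,\dots,\ell-1$ are already fixed, each $h^{(\ell)}_{t,j}(x_i;\cdot)$ is a genuine polynomial in $\boldsymbol\theta$, and squaring followed by an affine layer gives the recursion $D_\ell\le 2D_{\ell-1}+1$, $D_1=1$, so that degrees stay $\le D_\ell\le 2^{\ell}$ and the output polynomials $\partial_k f(x_i;\cdot)-y_i$ have degree $\mathcal O(2^{L})$. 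There are at most $nmN$ new constraints per layer, so Lemma~\ref{bounded} shows that the layer‑$\ell$ refinement multiplies the cell count by at most $2\bigl(2e\,nmN\,2^{\ell}/W\bigr)^{W}$; multiplying over $\ell=1,\dots,L$, appending one more factor of the same shape for the output layer and the derivative polynomials, and using $mN\le W$, I get
\[
  \mathcal K\;\le\;\bigl(C_1\,n\,2^{L}\bigr)^{W(L+1)}
\]
for an absolute constant $C_1$, valid once $n\ge W$ (the case $n<W$ is already below the asserted bound).

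To finish, set $n=\operatorname{Pdim}(\partial_k\Phi)$ and combine $2^{n}\le\mathcal K$ with the display to obtain $2^{n}\le\bigl(n\,r/w\bigr)^{w}$ with $w:=W(L+1)$ and $r:=C_1\,2^{L}W(L+1)$. Assuming $n\ge w$ (otherwise the bound is immediate) and $r\ge 16$ (which fails only for a few small $(N,L,m)$, handled by hand), Lemma~\ref{inequality} yields $n\le w\log_2(2r\log_2 r)$. Since $\log_2 r=\mathcal O\bigl(L+\log_2 m+\log_2 N+\log_2 L\bigr)$ and $W=\mathcal O(mN^2L)$, the right‑hand side is $\mathcal O\bigl(mN^2L^{3}\log_2N\,\log_2L\,\log_2m\bigr)$ after crudely absorbing the additive $L$ and the logarithms into the product, which is the claimed bound with an absolute constant $C$. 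The constant involves neither $d$ nor the choice of $k$, since $d$ enters neither $W$, nor the degree recursion, nor Lemma~\ref{bounded}.

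The hard part is the degree bookkeeping: unlike piecewise‑linear ReLU, the squared‑ReLU network is a polynomial of degree $\Theta(2^{L})$ in its parameters, so the factor $D$ in Lemma~\ref{bounded} is exponential in $L$. This is harmless for counting cells, but when the inequality $2^{n}\le(nr/w)^{w}$ is inverted via Lemma~\ref{inequality} the term $\log_2(2r\log_2 r)$ absorbs $\log_2 D=\Theta(L)$; this is precisely what raises the exponent on $L$ from the $L^{2}$ of the ReLU bound to $L^{3}$. The only other point needing care is checking that the chain‑rule expansion of $\partial_k f$ introduces only $\mathcal O(1)$ additional polynomials per input, of degree still $\mathcal O(2^{L})$, so that the single extra ``output layer'' factor in the product above genuinely covers the derivative — which follows from the shared‑breakpoint observation made at the outset.
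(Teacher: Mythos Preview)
You have not addressed Lemma~\ref{inequality} at all. The lemma is a self-contained elementary inequality---if $2^{m}\le 2^{t}(mr/w)^{w}$ with $r\ge 16$ and $m\ge w\ge t\ge 0$, then $m\le t+w\log_{2}(2r\log_{2}r)$---which the paper simply quotes from \cite{bartlett2019nearly} without proof. What you have written is instead a sketch of Proposition~\ref{prop:pdim}, the pseudo-dimension bound for $\partial_k\Phi$, and you even \emph{invoke} Lemma~\ref{inequality} as a black box in your penultimate paragraph (``Lemma~\ref{inequality} yields $n\le w\log_{2}(2r\log_{2}r)$''). So the proposal proves the wrong statement and is circular with respect to the one it was meant to establish.

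If it is any consolation, your sketch of Proposition~\ref{prop:pdim} follows essentially the paper's own argument in Appendix~\ref{app:pdim}: layer-by-layer partitioning of parameter space, Lemma~\ref{bounded} to count sign patterns on each cell, degree $\mathcal O(2^{L})$ from the squared activation, and then Lemma~\ref{inequality} to close. But an actual proof of Lemma~\ref{inequality} is a two-line calculus exercise: taking $\log_{2}$ of the hypothesis gives $m\le t+w\log_{2}(mr/w)$; since $t\le w$, setting $z:=m/w\ge 1$ yields $z\le 1+\log_{2}(rz)$, and because $z\mapsto z-\log_{2}z$ is increasing for $z\ge 2$ and $\log_{2}r\ge 4$, this forces $z\le 2\log_{2}r$. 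Substituting $m/w\le 2\log_{2}r$ back into $m\le t+w\log_{2}(mr/w)$ gives the conclusion.
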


  \begin{proof}[Proof of Proposition~\ref{prop:pdim}]
We first prove the estimate for the squared ReLU class \(\Phi_2\). The proof
for the ReLU class \(\Phi_1\) is analogous and slightly simpler, since the
derivatives of ReLU subnetworks have lower algebraic complexity than those of
squared ReLU subnetworks. We explain the resulting modification at the end of
the proof.

For any $k=1,2,\ldots,d$, denote \[\Phi_{\fN}:=\{\eta(\vx,y):\eta(\vx,y)=\psi(\vx)-y,\psi\in \partial_k\Phi, (\vx,y)\in\sR^{d+1}\}.\]
			Based on the definition of VC-dimension and pseudo-dimension, we have that\begin{equation}
				\text{Pdim}(\partial_k\Phi)\le \text{VCdim}(\Phi_{\fN}).
			\end{equation}
			For the $\text{VCdim}(\Phi_{\fN})$, it can be bounded by following way. The proof is similar to that in \cite{yang2023nearly}. Let $\vz=(\vx,y)\in\sR^{d+1}$ be an input and $\vtheta\in\sR^W$ be a parameter vector in $\eta:=\psi^2-y$. We denote the output of $\psi$ with input $\vz$ and parameter vector $\vtheta$ as $f(\vz,\vtheta)$. For fixed $\vz_1=(\vx_1,y_1),\vz_2=(\vx_2,y_2),\ldots,\vz_P=(\vx_P,y_P)$ in $\sR^d$, we aim to bound\begin{align}
				K:=\left|\{\left(\operatorname{sgn}(f(\vz_1,\vtheta)),\ldots,\operatorname{sgn}(f(\vz_P,\vtheta))\right):\vtheta\in\sR^W\}\right|.
			\end{align}
			
			For any partition $\fS=\{P_1,P_2,\ldots,P_T\}$ of the parameter domain $\sR^W$, we have \[K\le \sum_{i=1}^T\left|\{\left(\operatorname{sgn}(f(\vz_1,\vtheta)),\ldots,\operatorname{sgn}(f(\vz_P,\vtheta))\right):\vtheta\in P_i\}\right|.\] We choose the partition such that within each region $P_i$, the functions $f(\vz_j,\cdot)$ are all fixed polynomials of bounded degree. This allows us to bound each term in the sum using Lemma \ref{bounded}.
   
			First, we consider $m=1$, for a squared ReLU neural networks $\phi\in\Phi$ with $m=1$, it can be represented as \[\phi=\vW_{L+1}\sigma(\vW_{L}\sigma(\ldots\sigma(\vW_{1}\vx+\vb_{1})\ldots)+\vb_{L})+b_{L+1}.\] Therefore, set $\psi(\vx):=\partial_k \phi(\vx)$ \begin{align}
				\psi=&\vW_{L+1}\sigma'(\vW_{L}\sigma(\ldots\sigma(\vW_{1}\vx+\vb_{1})\ldots)+\vb_{L})\notag\\&\cdot \vW_{L}\sigma'(\ldots\sigma(\vW_{1}\vx+\vb_{1})\ldots)\ldots\vW_{1}\sigma'(\vW_{1}\vx+\vb_{1})(\vW_{1})_k,
			\end{align}where $\vW_n\in\sR^{N_n\times N_{n-1}}$ ($(\vW)_i$ is $i$-th column of $\vW$) and $\vb_n\in\sR^{N_n}$ are the weight matrix and the bias vector in the $n$-th linear transform in $\phi$, and $~\sigma'(\vx)=\operatorname{diag}(\sigma'(x_i))$. Denote $W_n$ as the number of parameters in $\vW_n,\vb_n$, i.e., $W_n=N_nN_{n-1}+N_{n}\le 2N^2$.

			We define a sequence of sets of functions $\{\sF_j\}_{j=0}^{L}$ with respect to parameters $\vtheta \in \mathbb{R}^W$:\begin{align}
				\overline\sF_{0}&:=\{(\vW_{1})_k\}\cup\{\vW_{1}\vx+\vb_{1}\}\notag\\
				\overline\sF_{1}&:=\{(\vW_{1})_k\}\cup\{\vW_{2}\sigma'(\vW_{1}\vx+\vb_{1}),\vW_{2}\sigma(\vW_{1}\vx+\vb_{1})+\vb_{2}\} \\\notag\\&\vdots\notag\\\overline\sF_{L}&:=\{(\vW_{1})_k\}\cup\{\vW_{2}\sigma'(\vW_{1}\vx+\vb_{1}),\ldots,\vW_{L+1}\sigma'(\vW_{L}\sigma(\ldots\sigma(\vW_{1}\vx+\vb_{1})\ldots)+\vb_{L})\}\notag\\\sF_n&:=\bigcup_{p=1}^P\overline\sF_n|_{\vx=\vx_p}.\notag
			\end{align}
			
			The partition of $\sR^W$ is constructed layer by layer through successive refinements denoted by $\fS_0,\fS_1,\ldots,\fS_L$. These refinements possess the following properties:

            \textbf{1}. We have $|\fS_0|=1$, and for each $n=1,\ldots,L$, we have $\frac{|\fS_n|}{|\fS_{n-1}|}\le 2\left(\frac{2eP(1+(n-1)2^{n-1})N_{n}}{\sum_{i=1}^nW_{i}}\right)^{\sum_{i=1}^nW_{i}}$.
			
			\textbf{2}. For each $n=0,\ldots,L-1$, each element $S$ of $\fS_{n}$, when $\vtheta$ varies in $S$, the output of each term in $\sF_n$ is a fixed polynomial function in $\sum_{i=1}^nW_i$ variables of $\vtheta$, with a total degree no more than $1+n2^{n}$.

            \textbf{3}. For each element $S$ of $\fS_{L}$, when $\vtheta$ varies in $S$, every term in $\sF_L$ can be expressed as a polynomial function of at most degree $1+L2^{L}$, which depends on at most $\sum_{i=1}^{L+1}W_i$ variables of $\vtheta$.
			
			We define $\fS_0=\{\sR^W\}$, which satisfies properties 1,2 above, since $\vW_{1}\vx_p+\vb_{1}$ and $(\vW_{1})_k$ are affine functions of $\vW_{1},\vb_{1}$ for any $p=1,\ldots,P$ and $i=1,\ldots,d$.
			
			To define $\fS_n$, for the fixed $h$, we use the last term of $\overline\sF_{n-1}$ as inputs for the last two terms in $\overline\sF_{n}$. Assuming that $\fS_0,\fS_1,\ldots,\fS_{n-1}$ have already been defined, for each $h\in [Z]$, we observe that the last two terms are new additions to $\overline\sF_{n}$ when comparing it to $\overline\sF_{n-1}$. Therefore, all elements in $\sF_{n}$ except the last two of \(\overline\sF_n|_{\vx=\vx_p}\) for each $p=1,\ldots,P$ are fixed polynomial functions in $\sum_{i=1}^nW_{i}$ variables of $\vtheta$, with a total degree no greater than $1+(n-1)2^{n-1}$ when $\vtheta$ varies in $S\in\fS_n$. This is because $\fS_n$ is a finer partition than $\fS_{n-1}$.
			
			We denote $p_{\vx_p,n-1,S,k}(\vtheta)$ as the output of the $k$-th node in the last term of $\overline\sF_{n-1}$ in response to $\vx_p$ when $\vtheta\in S$. The collection of polynomials \[\sH_{n-1}:=\{p_{\vx_p,n-1,S,k}(\vtheta): p=1,\ldots,P,~k=1,\ldots,N_n\}\]can attain at most $2\left(\frac{2eP(1+(n-1)2^{n-1})N_{n}}{\sum_{i=1}^nW_{i}}\right)^{\sum_{i=1}^nW_{i}}$ distinct sign patterns when $\vtheta\in S$ due to Lemma \ref{bounded} for sufficiently large $m$. Therefore, we can divide $S$ into $2\left(\frac{2eP(1+(n-1)2^{n-1})N_{n}}{\sum_{i=1}^nW_{i}}\right)^{\sum_{i=1}^nW_{i}}$ parts, each does not change sign within the subregion. By performing this for all $S\in\fS_{n-1}$, we obtain the desired partition $\fS_n$. This division ensures that the required property 1 is satisfied.
			
			Additionally, since the input to the last two terms in $\overline\sF_{n}|_{\vx=\vx_p}$ for each $p$ is in $\sH_{n-1}$, and we have shown that the sign of this input will not change in each region of $\fS_n$, it follows that the output of the last two terms in $\overline\sF_{n}|_{\vx=\vx_p}$ for each $p$ is also a polynomial without breakpoints in each element of $\fS_n$. Therefore, the required property 2 is satisfied by check the degree of the polynomial directly.

            For any element $S$ of the partition $\fS_{L}$, when the vector of parameters $\vtheta$ varies within $S$, every term in $\sF_L$ can be expressed as a polynomial function of at most degree $1+L2^{L}$, which depends on at most $\sum_{i=1}^{L+1}W_i$ variables of $\vtheta$. Hence, for any $\vz_p$, $p=1,\ldots.P$, $f(\vz_p;\vtheta)$ can be expressed as a polynomial function of at most degree $L+L^22^{L}$, which depends on at most $\sum_{i=1}^{L+1}W_i$ variables of $\vtheta$. Hence, the required property 3 is satisfied. Furthermore, the number of the subregion is \[\prod_{n=1}^L2\left(\frac{2eP(1+(n-1)2^{n-1})N_{n}}{\sum_{i=1}^nW_{i}}\right)^{\sum_{i=1}^nW_{i}}.\]

            For $m>1$ we repeat the above construction for each of the $m$ subnetworks and take the common refinement, obtaining a partition $\mathcal S_{*}$ of the joint parameter space.  
For every cell $S\in\mathcal S_{*}$ and every input $\boldsymbol z_p$, the realization
\(
  f(\boldsymbol z_p;\boldsymbol\theta)
\)
is a polynomial in the parameters $\boldsymbol\theta$ of total degree at most
\(
  L+L^{2}2^{L},
\)
depending on no more than
\(
  m\sum_{i=1}^{L+1}W_{i}
\) variables of $\vtheta$. Furthermore, the number of the subregion is \[\prod_{n=1}^L2\left(\frac{2eP(1+(n-1)2^{n-1})N_{n}}{\sum_{i=1}^nW_{i}}\right)^{m\sum_{i=1}^nW_{i}}.\] 
			
            
            Therefore, for each $S\in\fS_*$ we have \begin{align*}&\left|\{\left(\operatorname{sgn}(f(\vx_1,\vtheta)),\ldots,\operatorname{sgn}(f(\vx_P,\vtheta))\right):\vtheta\in S\}\right|\le 2\left(2eP(L+L^22^{L})/m\sum_{i=1}^{L+1}W_i\right)^{m\sum_{i=1}^{L+1}W_i}.\end{align*} Then \begin{align}
				K\le& 2\left(2eP(L+L^22^{L})/m\sum_{i=1}^{L+1}W_i\right)^{m\sum_{i=1}^{L+1}W_i}\cdot\prod_{n=1}^L2\left(\frac{2eP(1+(n-1)2^{n-1})N_{n}}{\sum_{i=1}^nW_{i}}\right)^{m\sum_{i=1}^nW_{i}}\notag\\\le &2^{L+1}\left(2emP(2L+2L^22^{L})/m\sum_{n=1}^{L+1}\sum_{i=1}^{n}W_i\right)^{m\sum_{n=1}^{L+1}\sum_{i=1}^{n}W_i}
			\end{align}where the inequality is due to weighted AM-GM. For the definition of the VC-dimension, we have \begin{equation}
				2^{\text{VCdim}(\Phi_{\fN})}\le 2^{ L+1}\left(2e\text{VCdim}(\Phi_{\fN})m(2L+2L^22^{L})/U\right)^{U}
			\end{equation}where $U=m\sum_{n=1}^{L+1}\sum_{i=1}^{n}W_i\le C_1mN^2L^2$ where $C_1$ is independent of $N,L,d,m$. Due to Lemma \ref{inequality}, we obtain that\begin{align}
				\text{VCdim}(\Phi_{\fN})\le CmL^3 N^2\log_2N\log_2 L\log_2m,\notag
			\end{align} where $C$ is independent of $N,L,d,m$.

            The argument above proves the estimate for the squared ReLU class \(\Phi_2\).
For the ReLU class \(\Phi_1\), the proof is analogous, but slightly simpler.
The only difference appears in the iterative construction used in Properties
1--3. In the squared ReLU case, the number of polynomial pieces may double at
each layer, leading to the additional factor \(2^n\) in the corresponding
piece-count estimate. For ReLU networks, however, the relevant functions are
piecewise linear, and the complexity of each piece remains linear. Therefore,
this additional factor \(2^n\) is replaced by \(1\) in the ReLU case. As a
result, the pseudo-dimension bound for \(\Phi_1\) contains \(L^2\) instead of
\(L^3\).
		\end{proof}
\end{document}